\def\eqref#1{equation~\ref{#1}}
\def\1{\bm{1}}
\DeclareMathAlphabet{\mathsfit}{\encodingdefault}{\sfdefault}{m}{sl}
\SetMathAlphabet{\mathsfit}{bold}{\encodingdefault}{\sfdefault}{bx}{n}
\newcommand{\Var}{\mathrm{Var}}
\newcommand{\shat}{\hat{\Sigma}}
\DeclareMathOperator{\Tr}{Tr}
\title{High-dimensional Analysis of Synthetic Data Selection}
\date{}
\author{Parham Rezaei\thanks{Institute of Science and Technology (ISTA). Emails: \texttt{parhamix@gmail.com, \{filip.kovacevic, francesco.locatello, marco.mondelli\}@ist.ac.at}}\;,\;\;Filip Kovacevic\footnotemark[1]\;,\;\;Francesco Locatello\footnotemark[1]\;\;\thanks{Equal advising}\;,\;\;Marco Mondelli\footnotemark[1]\;\;\footnotemark[2]}
\newcounter{statement}
\renewcommand{\thestatement}{Q} 
\newenvironment{statement}
  {\refstepcounter{statement}\begin{center}}
  {\hfill(\thestatement)\end{center}}
\begin{document}

\maketitle

\begin{abstract}
Despite the progress in the development of generative models, their usefulness in creating synthetic data that improve prediction performance of classifiers has been put into question. Besides heuristic principles such as ``synthetic data should be close to the real data distribution'', it is actually not clear which specific properties affect the generalization error. 
Our paper addresses this question through the lens of high-dimensional regression.
Theoretically, we show that, for linear models, the \textit{covariance shift} between the target distribution and the distribution of the synthetic data affects the generalization error but, surprisingly, the mean shift does not. Furthermore we prove that, in some settings, matching the covariance of the target distribution is optimal. Remarkably, the theoretical insights from linear models carry over to deep neural networks and generative models. We empirically demonstrate that the \emph{covariance matching} procedure (matching the covariance of the synthetic data with that of the data coming from the target distribution) performs well against several recent approaches for synthetic data selection, across training paradigms, architectures, datasets and generative models used for augmentation.\footnote{The codebase is available at \href{https://github.com/Rezaei-Parham/Synthetic-Selection}{https://github.com/Rezaei-Parham/Synthetic-Selection}.}
\end{abstract}

\vspace{-.5em}
\section{Introduction}
\looseness=-1
\vspace{-.5em}
The controllable generation of arbitrary amounts of synthetic data for training machine learning models has long been considered as one of the key implications unlocked by more capable generative models~\citep{kingma2014auto,goodfellow2014generative,shrivastava2017learning,nikolenko2021synthetic}. After all, synthetic data can not only be abundant, which would already be tremendously impactful in data-scarce applications such as  medicine~\citep{esteban2017real,van2024synthetic}, but it can also address other difficulties of observational data, such as privacy~\citep{jordon2018pate}, imbalancedness~\citep{parihar2024balancing,ramaswamy2021fair} and overall difficulty to collect, as the domain can be specific~\citep{dunlap2023diversify} or the task complex~\citep{wang2023self}. At the same time, while generative models have progressed significantly, experimental results are still mixed. Several works are promising~\citep{trabuccoeffective,he2022synthetic,azizisynthetic,dunlap2023diversify}, steering and sometimes filtering the sampling by appropriately conditioning a generative model towards the target training distribution; others outright question whether synthetic data has any advantage over simply selecting some more data which is anyway used to train the generative model~\citep{fan2024scaling,burgimage,geng2024unmet}; some even warn that training on synthetic data may not only do worse, but also lead to unwanted effects such as model collapse~\citep{shumailov2024ai} or additional bias~\citep{wyllie2024fairness}. What emerges here is a broad challenge which consists of understanding \textit{how extra synthetic data, for example from a generative model, helps training predictors}. Our paper tackles this challenge theoretically and empirically. 
To do so, we assume access to a training dataset $(X_t, y_t)$ that contains i.i.d.\ samples, as well as to an additional \emph{synthetic dataset} $(X_s, y_s)$. The samples from the synthetic dataset are also i.i.d., but they come from a different distribution, since they are obtained from a generative model and not from the training dataset. We perform empirical risk minimization (ERM) using the augmentation $((X_t, X_s), (y_t, y_s))$, and evaluate the performance on an independent test sample with the same distribution as $(X_t, y_t)$. In this context, the challenge above leads to the following concrete question: 
    \vspace{-.3em}
\begin{statement}    \label{q:q}
\begin{center}
    \emph{How to select the dataset $(X_s, y_s)$ in order to minimize the test error?}
    \vspace{-1.8em}
    \end{center}
\end{statement}
By studying this question, we can identify which properties of the distribution of $(X_s, y_s)$ improve generalization, thus guiding the selection of data obtained in practice from generative models. 
\paragraph{Formalization of the problem.} Let us first describe how we model the setting in the theoretical analysis. We assume that the distributions of both the original training dataset and the additional synthetic one are mixture models. The number of mixtures corresponds to the number of classes in the datasets, with each mixture component corresponding to a single class. As common in practice \citep{burgimage}, the data augmentation via the synthetic dataset occurs class-by-class: for a problem with $K$ classes, the number of mixtures is $K$ and we add synthetic data of each class using a generative model. 

We then address the question (\ref{q:q}) when $(X_t, y_t)$ and $(X_s, y_s)$ correspond to a single class, focusing on linear models and high-dimensional ridgeless regression. 
More precisely, we model $y_t=X_t \beta+\varepsilon_t$ and $y_s=X_s \beta+\varepsilon_s$, where rows of $X_t$ are i.i.d.\ with mean $\mu_t$ and covariance $\Sigma_t$, rows of $X_s$ are i.i.d.\ with mean $\mu_s$ and covariance $\Sigma_s$, and entries of $\varepsilon_t, \varepsilon_s$ are i.i.d.\ with zero mean and variance $\sigma^2$. Here, the difference between the distributions of $(X_t, y_t)$ and $(X_s, y_s)$ is captured by the mean shift $\mu_t\neq \mu_s$ and the covariance shift $\Sigma_t\neq \Sigma_s$. This formalization deals with a single class in isolation, fitting a regression model to the class label and neglecting interactions between classes. While this is a strong assumption chosen for mathematical tractability, we highlight that the resulting data selection procedure is extensively tested in practical settings where it performs well against existing baselines.

\vspace{-.3em}
\paragraph{Main contributions.} The \emph{surprising} finding from our theoretical analysis is that, while the covariance shift affects the test error, \emph{the mean shift does not}. This is the case as long as the training dataset $(X_t, y_t)$ is not too small compared to the synthetic dataset $(X_s, y_s)$, and it is especially surprising since the mean shift does affect the test error when using only synthetic data. From this insight, we show that the problem of selecting $(X_s, y_s)$ can be reduced to an optimization problem over the covariance $\Sigma_s$ and, in some settings, \emph{matching the covariances} ($\Sigma_s\propto \Sigma_t$) leads to optimal performance. Most importantly, these theoretical insights are valid in practice: matching the covariance, without worrying about the mean shift, performs on par---or even outperforms---several recent approaches for synthetic data selection. We summarize our contributions below:

\vspace{-.5em}

\begin{itemize}[leftmargin=.65em]
    \item We give a precise characterization of the test error of the min-norm least squares regression estimator, when the dimensions of $\beta, y_t, y_s$ are all large and scale proportionally. Our results hold in under-parameterized (Theorem \ref{thm:underparammixed}) and over-parameterized regimes (Theorem \ref{thm:overparammixed}), showing that the test error approaches a deterministic quantity that depends \emph{only on the covariances} $\Sigma_t, \Sigma_s$ and \emph{not on the means} $\mu_t, \mu_s$. As a comparison, we also analyze training only over synthetic data, showing that in this case the test error depends on both covariances $\Sigma_t, \Sigma_s$ and means $\mu_t, \mu_s$, see Proposition \ref{prop:underparamsingle}.     \item Our characterization implies that we can select synthetic data minimizing the test error based on their covariance. We then show that, under some conditions, taking $\Sigma^s\propto \Sigma^t$, i.e., \emph{covariance matching}, is optimal (Theorems \ref{thm:underparamcovmatching} and \ref{thm:overparamcovmatching} for under-parameterized and over-parameterized regimes).

    \item We validate the effectiveness of covariance matching as a way to select synthetic data obtained from generative models in several practical scenarios. We show that this simple approach performs on par---and, actually, it often outperforms---a variety of baselines proposed in the recent literature \citep{he2022synthetic, lin2023explore,hulkund2025datas}. This conclusion consistently holds across training paradigms (training from scratch, distilling a bigger model, fine-tuning a model trained from a larger dataset), across architectures (ResNets, transformers), across datasets (CIFAR-10, ImageNet-100, RxRx1), and across generative models used to obtain synthetic data (StyleGAN2-Ada, SANA1.5, PixArt-$\alpha$, StableDiffusion1.4, MorphGen).
\end{itemize}

\vspace{-.3em}
\vspace{-.3em}
\section{Related work}
\vspace{-.3em}

\textbf{On the theoretical side}, we
focus
on the high-dimensional regime in which both the number of features (i.e., dimension of $\beta$) and the number of samples (i.e., dimensions of $y_s, y_t$) are large and scale proportionally. This setup was considered by a line of research using random matrix theory to characterize test error and various associated phenomena (e.g., benign overfitting \citep{Bartlett_2020} and double descent \citep{belkin2019reconciling}). More precisely, the test error of ridge(less) regression was studied by \cite{hastie2022surprises, wu2020optimal,richards2021asymptotics,cheng2024dimension}, the distribution of the ERM solution by  \cite{montanari2019generalization,chang2021provable,han2023distribution}, and the impact of spurious correlations by \cite{bombarispurious}.
This motivates us to look for practical insights into synthetic data selection by performing a high-dimensional regression analysis.
Closer to our work
are specific analyses involving more than one distribution, which in our case are the training/test distribution and the synthetic one used for augmentation.
More precisely, the test error under distribution shift was analyzed by  \cite{patil2024optimal,mallinar2024minimumnorm}, but this assumes training on one distribution and testing on the other, as opposed to training on both and testing on one. Training on surrogate data was considered by \cite{ildiz2024highdimensional,kolossovtowards,jain2024scaling}:  \cite{ildiz2024highdimensional} assume that the surrogate data comes from a teacher model and study the phenomenon of weak-to-strong generalization; \cite{kolossovtowards} consider data selection given unlabeled samples plus access to a surrogate model that predicts the labels better than random guessing; \cite{jain2024scaling} integrate surrogate and real data, but the analysis is limited to isotropic covariance. Most closely related to our theoretical setting is when 
training occurs on multiple data distributions and testing occurs on a single one of them, which was analyzed both in under-parameterized \citep{yang2020precise} and over-parameterized \citep{song2024generalization} regimes. However, \cite{yang2020precise, song2024generalization} assume that the data distributions have zero mean, which is unrealistic in our context. In fact, centering the data would require access to
the mean of the test sample, which is equivalent to having access to its unknown label. 

\textbf{On the practical side}, several papers studied how to incorporate synthetic data into training predictors. Besides simply training better generative models, empirical work focused on upgrading the sampling process itself, under the assumption that better conditional generation would lead to more accurate predictors. More precisely, the CLIP model \citep{radford2021learning} underpins many filtering and selection algorithms for generative data. \citet{he2022synthetic} propose using CLIP similarity to labels to prune low-quality samples from augmentations. \citet{lin2023explore} introduce sampling and filtering strategies based on CLIP similarity to either labels or the mean representation of real data, incorporating diversity via clustering. Almost concurrently, other works argued that synthetic images underperform in scaling laws~\citep{fan2024scaling} and, if the generative model is pre-trained on external data, simple retrieval baselines can be better~\citep{geng2024unmet,burgimage}. 
Our work can be interpreted as a more fine-grained investigation of the same problem, characterizing which properties of the generated data improve generalization. 
At the same time, our results do not preclude that the extra data is real data from another dataset, as tested in Figure \ref{fig:dino-clip-leak-portion} in Appendix \ref{app:add}. Closer to our solution,~\cite{hulkund2025datas} explore the problem of data selection given a fixed test set and, taking a purely empirical stance, compare several filtering methods, including an approach inspired by \citet{gadre2023datacomp} that selects  
clusters of image embeddings.
As a heuristic, we find that this works rather well but has shortcomings, as empirically demonstrated in  Table~\ref{tab:zerodiversity-clip}  in Appendix \ref{app:add}.
\vspace{-.5em}
\section{Preliminaries}\label{sec:preliminaries}
\vspace{-.5em}

\paragraph{Data model.}
We consider data augmentation in the context of linear models.
Formally, we observe two datasets $\rbr{X_t,y_t}$ and $\rbr{X_s,y_s}$, denoting training data and augmenting synthetic data, such that
\vspace{-.3em}
\begin{equation}\label{eq:datamodel}
    y_{(i)} = X_{(i)}\beta + \varepsilon_{(i)},\qquad (i)\in \{t,s\},
\end{equation}
where $X_{(i)}\in\bbR^{{n_{(i)}}\times p}$, $\beta \in \bbR^p$, and $\varepsilon_{(i)}\in \bbR^{n_{(i)}}$. Thus, we are given $n_t$ training samples and $n_s$ synthetic samples, all of which are $p$ dimensional. We denote the total number of samples as $n\coloneqq n_t+n_s$.
Each entry of the noise vector $\varepsilon_{(i)}$ is sampled i.i.d.\ from a random variable with mean zero and variance $\sigma^2$. The row vectors of \( X_{(i)} \), for $(i) \in \{t,s\}$, are independent random vectors with \( p \times p \) population covariance matrix \( \Sigma_{(i)} \) and mean $\mu_{(i)}$. This can be written as:
\vspace{-.3em}
\begin{equation}\label{def:X_(i)}
X_{(i)} = Z^{(i)} (\Sigma_{(i)})^{1/2} + 1_{n_{(i)}} \mu_{(i)}^{\top}\in \mathbb{R}^{n_i \times p},
\end{equation}
where $Z^{(i)}\in \bbR^{n_{(i)}\times p}, \mu_{(i)}\in\bbR^p$, $1_{n_i}\in\bbR^{n_i}$ is the all-ones vector, and all entries \( [Z^{(i)}_{jk}] \) are independent with zero mean and unit variance.
By omitting subscripts, we denote by \(\rbr{X,y}\) the two datasets $\rbr{X_t,y_t}$ and $\rbr{X_s,y_s}$ stacked, i.e.,
  $  X \coloneqq \matrix{X_t\\X_s}\in \bbR^{n\times p}$, $
  y \coloneqq \matrix{y_t\\y_s}\in \bbR^{n}$.

The vector 
$\beta$ is assumed to be the same for $(X_t, y_t)$ and $(X_s, y_s)$, which corresponds to assuming 
that the conditional distribution of the labels $y$ given the features $X$ is the same for training and synthetic data. 

\vspace{-.3em}

\paragraph{Assumptions.}
We make some assumptions on the data distribution which are common in related work \citep{yang2020precise, song2024generalization}.
Let \( \tau > 0 \) be a small constant. We assume that, for \( \psi > 4 \), the \( \psi \)-th moment of \( Z^{(i)}_{jk} \) is upper bounded by \( 1/\tau \), i.e., 
$\mathbb{E} [ | Z^{(i)}_{jk} |^{\psi} ] \leq \tau^{-1}$,
which means that the tails do not decay too slowly. The eigenvalues of \( \Sigma_{(i)} \), denoted as \( \lambda^{(i)}_1, \cdots, \lambda^{(i)}_p \), are all bounded between \( \tau \) and \( \tau^{-1} \), i.e., 
$\tau \leq \lambda^{(i)}_p \leq \cdots \leq \lambda^{(i)}_2 \leq \lambda^{(i)}_1 \leq \tau^{-1}$, which means that the covariance matrix is well-conditioned (i.e., the distribution is well-spread). 
Furthermore, the
entries of \( \varepsilon_{(i)} \in \mathbb{R}^{n_i} \) have bounded moments up to any order, i.e., for any \( k \in \mathbb{N} \), there exists a constant \( C_k > 0 \) s.t.\
$\mathbb{E} [| {\varepsilon_{(i)}}_j |^k] \leq C_k$ (noise is not heavy tailed). The sample sizes are comparable with the dimension \( p \), i.e., $\gamma := n/p$, \( \gt := \nt/p \), and \( \gs := \ns/ p \), with 
    $0 \leq \gt \leq 1/\tau$
    and $ \tau \leq \gamma,\ \gs \leq 1/\tau$.
Lastly, let $\norm{\mu_{(i)}}_{2}=r_{(i)}\sqrt{p}$, where $r_{(i)}$ is a constant, with a constant angle between them $\varphi \coloneqq \abs{\inner{\mu_s}{\mu_t}}/(\norm{\mu_s}_2\norm{\mu_t}_2)$.\footnote{This is a technical assumption to simplify the proof notation. If $\varphi$ is allowed to depend on $n, p$, all results (and corresponding proofs) still hold verbatim, as long as either $\varphi<1-\delta$ for some constant $\delta>0$ or $\varphi=1$.}
\vspace{-.5em}
\paragraph{Risk and estimator.}
We test estimators on
data sampled from the same distribution as the training dataset $(X_t, y_t)$ and,
given an estimator $\hat{\beta}$,
its out-of-sample excess risk is defined as
\vspace{-.3em}
\[
R_X(\hat{\beta}; \beta) \coloneqq \mathbb{E}[(x_t^\top \hat{\beta} - x_t^\top \beta)^2 \mid X] = \mathbb{E}\left[\|\hat{\beta} - \beta\|^2_{\Sigmat+\mut\mut^\top} \mid X \right],
\]
where $x_t$ has the same distribution as \(Z^t \rbr{\Sigmat}^{1/2} + \mut\) and
$\|x\|^2_{M} \coloneqq x^\top M x$.
This definition differs from similar ones appeared in \citep{yang2020precise, song2024generalization,hastie2022surprises} as the test distribution is not zero-mean (test data cannot be centered as knowing the mean is equivalent to knowing the label). The test error is then equal to the excess risk plus the noise variance $\sigma^2$, which corresponds to the Bayes error. Since $\sigma^2$ is a constant, minimizing excess risk and test error is the same, and we minimize the former.
The excess risk is decomposed into bias and variance as
\begin{equation}\label{eq:biasvardef}
    R_X(\hat{\beta}; \beta)\hspace{-.2em} = \hspace{-.2em}\|\mathbb{E}[\hat{\beta} \mid X] - \beta\|^2_{\Sigmat+\mut\mut^\top} + \mathrm{Tr}[\mathrm{Cov}(\hat{\beta} \mid X)(\Sigmat+\mut\mut^\top)]\hspace{-.2em}:=\hspace{-.2em}B_X(\hat{\beta}; \beta)+V_X(\hat{\beta}; \beta).
\end{equation}
Let  \(\betahat\) be the min-norm least squares regression estimator of $y$ on the whole dataset available $X$, i.e.,
\begin{equation}\label{eq:minnorm}
\betahat \coloneqq \arg\min \left\{ \|b\|_2 : b \text{ minimizes } \|y - Xb\|_2^2 \right\}= (X^\top X)^+X^\top y, 
\end{equation}
where $(\cdot)^+$ denotes the pseudo-inverse.
We note that gradient descent converges to the interpolator which is the closest in $\ell_2$ norm to the initialization (see Equation (33) in \cite{bartlett2021deep}) and, as such, (\ref{eq:minnorm}) corresponds to the gradient descent solution starting from 0 initialization. 
Substituting (\ref{eq:minnorm}) into the excess risk decomposition (\ref{eq:biasvardef}) yields closed-form expressions for bias and variance:
\vspace{-.3em}
\begin{equation}\label{eq:riskbreakdown}
    B_X(\hat{\beta}; \beta) = \beta^\top  \Pi (\Sigmat+\mut\mut^\top) \Pi \beta \quad \text{and} \quad 
V_X(\hat{\beta}; \beta) = \frac{\sigma^2}{n} \operatorname{Tr}[\hat{\Sigma}^+ (\Sigmat+\mut\mut^\top)],
\end{equation}
where $\hat{\Sigma} = X^\top X / n$ and
$\Pi = I - \hat{\Sigma}^+ \hat{\Sigma}$ (projection on the null space of $X$).
\vspace{-.3em}
\section{Theoretical results}\label{sec:theo}
\vspace{-.5em}

We characterize the excess risk of the min-norm interpolator using both training and augmenting synthetic data. We then use the explicitly derived formulas to optimize the data selection process, in which, surprisingly, distribution means play no role. We contrast this setting with having only synthetic data available, where means instead impact the excess risk. Our findings hold in both the under-parameterized and over-parameterized regimes. For clarity, we present the two regimes separately, as the precise statements and proofs rely on different technical arguments.

\subsection{Under-parameterized regime}\label{subsec:underparamregime}
\vspace{-.3em}

Let us assume that 
$
1+\tau \leq \gamma \leq 1/\tau$,
implying that $n > p$, which makes the setting under-parameterized. Thus, $\hat{\Sigma} = X^\top X / n$ is full rank almost surely, which implies that
$    \Pi = I - \hat{\Sigma}^+ \hat{\Sigma} = I - \hat{\Sigma}^{-1} \hat{\Sigma} = 0$.
From (\ref{eq:riskbreakdown}), it follows that \(\Biasx{\betahat}{\beta}\) = 0, so the risk is only characterized by the variance \(\Varx{\betahat}{\beta}\). 
We additionally constrain the number of samples as 
 $   1+\tau \leq \gt,\gs \leq 1/\tau$
 and $ 0<\gs/\gt\leq 1/\tau$.

The following result provides a precise asymptotic characterization of the excess risk and, in doing so, it extends results by \cite{yang2020precise} to non-zero centered data. Its proof is deferred to Appendix \ref{subsec:proofunderparammixed} and we give a brief sketch of the argument below.

\begin{theorem}\label{thm:underparammixed} Let $M=\Sigmas^{1/2}\Sigmat^{-1/2}$ and denote the eigenvalues of $M^\top M$ as $\lambda_1 \geq \dots \geq \lambda_p$. Then, under the assumptions from Section \ref{sec:preliminaries} and the start of this section, it holds that, with high probability, \begin{equation}\label{eq:riskundermixed}
    \lim_{n\rightarrow \infty}\abs{R_X(\betahat;\beta) - \frac{\sigma^2}{n} 
\operatorname{Tr} \left[ \left( \alpha_1 M^\top M + \alpha_2 \, I_p \right)^{-1} \right]}=0,
\end{equation}
where $\alpha_1$ and $\alpha_2$ are the unique positive solutions to the following two equations
\begin{equation}\label{eq:fixedpointunderparam}
    \alpha_1 + \alpha_2 = 1 - \frac{p}{n}, 
    \quad 
    \alpha_1 + \frac{1}{n} \sum_{i=1}^p \frac{\lambda_i \alpha_1}{\lambda_i \alpha_1 + \alpha_2} = \frac{\ns}{n}.
\end{equation}
\end{theorem}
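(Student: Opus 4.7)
The bias $B_X(\hat\beta;\beta)$ vanishes identically in this regime: when $n>p$, $\hat{\Sigma}$ is almost surely full rank, so $\Pi = I - \hat{\Sigma}^+\hat{\Sigma} = 0$ and, independent of any mean shift, $\hat\beta$ is unbiased given $X$. The plan is therefore to analyze only the variance
\[
V_X(\hat\beta;\beta) = \sigma^2\operatorname{Tr}[(X^\top X)^{-1}(\Sigmat+\mut\mut^\top)].
\]
The first step will be to isolate all mean dependence via a rank-$4$ decomposition: expanding each $X_{(i)}^\top X_{(i)}$ from (\ref{def:X_(i)}) and summing yields
\[
X^\top X = A_0 + U W U^\top,\quad A_0 = \Sigmat^{1/2}(Z^t)^\top Z^t \Sigmat^{1/2} + \Sigmas^{1/2}(Z^s)^\top Z^s \Sigmas^{1/2},
\]
where $U=[a_t,\mut,a_s,\mus]\in\R^{p\times 4}$ with $a_i:=\Sigma_i^{1/2}(Z^i)^\top 1_{n_i}$, and $W\in\R^{4\times 4}$ is block-diagonal with blocks $\bigl(\begin{smallmatrix}0 & 1\\1 & n_i\end{smallmatrix}\bigr)$ (invertible, determinant $-1$ each). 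Thus $A_0$ is the zero-mean analogue of $X^\top X$, and $UWU^\top$ carries all dependence on $\mut,\mus$.

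For the centered part, I plan to invoke random matrix theory for the sum of two independent sample covariances, adapting the deterministic-equivalent framework of \citet{yang2020precise}. Specifically, I will show that, as a bilinear form, $A_0^{-1}\asymp \tfrac{1}{n}(\alpha_1\Sigmas+\alpha_2\Sigmat)^{-1}$, where $(\alpha_1,\alpha_2)$ are the unique positive solutions of (\ref{eq:fixedpointunderparam}); uniqueness will follow from a monotonicity argument on the system. Using the identity $(\alpha_1\Sigmas+\alpha_2\Sigmat)^{-1}=\Sigmat^{-1/2}(\alpha_1 M^\top M + \alpha_2 I_p)^{-1}\Sigmat^{-1/2}$, this yields
\[
\operatorname{Tr}[A_0^{-1}\Sigmat] = \tfrac{1}{n}\operatorname{Tr}[(\alpha_1 M^\top M + \alpha_2 I_p)^{-1}] + o(1),
\]
matching the leading order in (\ref{eq:riskundermixed}).

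The remaining step is to show that the mean-dependent terms contribute only $o(1)$. By Sherman-Morrison-Woodbury with $G:=U^\top A_0^{-1}U$, $\tilde G := U^\top A_0^{-1}\Sigmat A_0^{-1}U$, $H:=(W^{-1}+G)^{-1}$,
\[
\operatorname{Tr}[(X^\top X)^{-1}(\Sigmat+\mut\mut^\top)] = \operatorname{Tr}[A_0^{-1}\Sigmat] + G_{22} - \operatorname{Tr}[H\tilde G] - [GHG]_{22}.
\]
The algebraic identity $G - GHG = W^{-1} - W^{-1}HW^{-1}$, together with the explicit form of $W^{-1}$ (whose second row equals $(1,0,0,0)$, so $W^{-1}_{22}=0$ and $W^{-1}e_2=e_1$), collapses $G_{22}-[GHG]_{22}$ to $-H_{11}$. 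I will then combine concentration estimates (e.g.\ $a_i^\top A_0^{-1}a_j=\Theta(p)$, $\mu_i^\top A_0^{-1}\mu_j=\Theta(1)$, with fluctuations of order $1/\sqrt{n}$) with the explicit block inversion of the $4\times 4$ matrix $W^{-1}+G$ to show that $-H_{11}$ cancels $\operatorname{Tr}[H\tilde G]$ to leading order, leaving only $O(1/\sqrt{n})$ residuals.

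The \textbf{main obstacle} is this last step: individual mean-dependent quantities such as $\mut^\top A_0^{-1}\mut$ are of order $\Theta(1)$, not $o(1)$, so the claimed irrelevance of the mean shift does \emph{not} follow from direct magnitude bounds—it must emerge from an exact algebraic cancellation within the $4\times 4$ Woodbury correction, controlled through concentration for the random quadratic forms involving $a_t,a_s$. The RMT step is conceptually standard but still requires verifying that the self-consistent resolvent equation for $(A_0/n+zI)^{-1}$ at $z=0$ produces precisely the system (\ref{eq:fixedpointunderparam}), and that both solutions $\alpha_1,\alpha_2$ remain bounded away from zero so that the limit formula is well defined.
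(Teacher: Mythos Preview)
Your Woodbury-on-$X^\top X$ route is a genuinely different organization from the paper's proof, and the algebraic identity $G - GHG = W^{-1} - W^{-1}HW^{-1}$ (hence $G_{22} - [GHG]_{22} = -H_{11}$) is a clean reduction. The paper instead works one level up, with the rank-$2$ additive perturbation of the $n\times p$ matrix $X = X^0 + 1_{n_t}\mu_t^\top + 1_{n_s}\mu_s^\top$, and never inverts a $4\times 4$ Woodbury block. For $\operatorname{Tr}[(X^\top X)^{-1}\Sigmat]$ it rewrites this as $\sum_i \sigma_i^{-2}(X\Sigmat^{-1/2}/\sqrt n)$ and uses Weyl's interlacing inequality, $\sigma_{i+2}(\bar X^0)\le\sigma_i(\bar X)\le\sigma_{i-2}(\bar X^0)$, to match all but four terms to the centered sum; those four terms are $O(1/n)$ because the bottom singular value stays bounded away from zero. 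For $\mu_t^\top(X^\top X)^{-1}\mu_t$ it shows, via a direct singular-value computation for the rank-$2$ perturbation together with Wedin's $\sin\Theta$ theorem, that $\mu_t$ lies (up to $O(1/\sqrt p)$) in the span of the top two right singular vectors of $X/\sqrt n$, whose singular values are $\Theta(\sqrt p)$; this immediately yields $\mu_t^\top(X^\top X)^{-1}\mu_t = O(1/p)$. After this reduction to the centered problem, the result of \citet{yang2020precise} is quoted verbatim.

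On your ``main obstacle'': you have slightly misdiagnosed it. After your identity, $-H_{11}$ is exactly $\mu_t^\top(X^\top X)^{-1}\mu_t$ and $\operatorname{Tr}[H\tilde G]$ is exactly $\operatorname{Tr}[A_0^{-1}\Sigmat]-\operatorname{Tr}[(X^\top X)^{-1}\Sigmat]$; both are \emph{individually} $O(1/p)$, so no further cancellation between the two is needed. The real work in your route is therefore not a cancellation argument but a lower bound $|\det(W^{-1}+G)|=\Omega(p)$ (or the corresponding control of minors in the $4\times4$ case), which is delicate because $W^{-1}+G$ mixes entries at scales $\Theta(p)$, $\Theta(\sqrt p)$, $\Theta(1)$ and the diagonal $\Theta(p)$ contributions can nearly offset (e.g.\ $G_{11}-n_t$, where in fact $G_{11}=a_t^\top A_0^{-1}a_t\le n_t$ always by a projection argument). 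A second subtlety is that your stated concentration $a_i^\top A_0^{-1}a_j=\Theta(p)$ is not a standard quadratic-form estimate: $a_i$ is built from $Z^i$, which also sits inside $A_0$, so a leave-one-out or resolvent-identity step is required. Both issues are surmountable, but the paper's interlacing/Wedin argument sidesteps them entirely and is considerably shorter.
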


\emph{Proof sketch.} As seen from (\ref{eq:riskbreakdown}), $R_X(\betahat;\beta)$ is related to spectral properties of the sample covariance matrix $\Sigmahat$, dictated by its local laws. The core of our argument is to connect the spectrum of $\Sigmahat$ for non-centered data to its zero-centered counterpart. This is done by factoring out the means $\mut, \mus$ as a rank-2 perturbation of a random matrix with i.i.d. entries, see Propositions \ref{prop:rankonepertublower}, \ref{prop:ranktwopertub}, and \ref{prop:variancesame} in Appendix \ref{subsec:proofunderparammixed}. We then apply anisotropic local laws for the zero-centered case and conclude. We finally note that this strategy gives a convergence rate of $O(\sigma^2 p^{-1/2})$ for the LHS of (\ref{eq:riskundermixed}).$\hfill\qed$

Theorem \ref{thm:underparammixed} gives a \emph{deterministic equivalent} of the test error obtained using training and synthetic data in the under-parameterized regime. In fact, $R_X(\hat \beta; \beta)$ is a random quantity (the data is random), while $\frac{\sigma^2}{n} 
\operatorname{Tr} [ ( \alpha_1 M^\top M + \alpha_2 \, I_p )^{-1} ]$ is deterministic as it depends on properties of the data distributions. Remarkably, the deterministic equivalent depends only on the covariances $\Sigma_t, \Sigma_s$ (via $M=\Sigma_s^{1/2}\Sigma_t^{-1/2}$) and it does not depend on the means $\mu_t, \mu_s$. This is highlighted in Figure \ref{fig:power-decay-center}, showing that the excess risk is unchanged upon varying the cosine similarity between the means. 
Two points are now in order, which are elaborated upon in the next two paragraphs. 
\vspace{-.3em}
\begin{enumerate}[leftmargin=2em]
    \item[(a)] The independence of the test error on the mean shift is surprising, and it is in stark contrast with the setting in which we only train on $(X_s, y_s)$, where the performance does depend on $\mu_s, \mu_t$. 
    \item[(b)] The deterministic equivalent can be optimized to find the covariance $\Sigma_s$ minimizing the error. 
\end{enumerate}

\vspace{-.5em}

\paragraph{(a) Training only on synthetic data.}

We now adjust our assumption at the beginning of this section. Namely, we assume that \(\gt = 0,\ 1+\tau \leq\gs=\gamma \leq 1/\tau\), which means that we are training on data from a single distribution that is different from the one we are testing on. 
\begin{proposition} \label{prop:underparamsingle}
In the setting described above, it holds that, with high probability,

\vspace{-.3em}
\begin{equation}
\lim_{n\rightarrow \infty}\left \lvert R_X(\hat{\beta};\beta)- \frac{\sigma^2}{n}\cdot \frac{\gamma}{\gamma -1}\cdot \left[\operatorname{Tr}[\Sigmat\Sigmas^{-1}]
+ \lVert \Sigmas^{-1/2}\mut \rVert_2^2 - \left(\frac{\mut^\top \Sigmas^{-1}\mus}{\lVert \Sigmas^{-1/2}\mus \rVert_2}\right)^2\right ] \right \rvert = 0.
\end{equation}
\end{proposition}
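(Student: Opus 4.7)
Because $n_t = 0$, we train only on the synthetic data and $\betahat = (X_s^\top X_s)^{-1}X_s^\top y_s$. Since $n_s > p$, $\hat{\Sigma} = X_s^\top X_s/n_s$ is almost surely invertible, so $\Pi = 0$ and the bias term in (\ref{eq:riskbreakdown}) vanishes. The task reduces to computing a deterministic equivalent of the variance
\[
V_X(\betahat;\beta) \;=\; \sigma^2\,\mathrm{Tr}\bigl[(X_s^\top X_s)^{-1}(\Sigmat+\mut\mut^\top)\bigr].
\]

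The plan is to decompose $X_s = W + 1_{n_s}\mus^\top$ with $W = Z^{(s)}\Sigmas^{1/2}$ zero-mean, set $u := W^\top 1_{n_s}/\sqrt{n_s}$, and obtain the rank-two perturbation
\[
X_s^\top X_s = W^\top W + n_s\mus\mus^\top + \sqrt{n_s}(u\mus^\top + \mus u^\top).
\]
I would apply the Woodbury identity with $U=[\mus,u]\in\mathbb{R}^{p\times 2}$ and then show that only the $\mus$-direction of the $2\times 2$ correction contributes at leading order in $1/n_s$. The dominant perturbation is $n_s\mus\mus^\top$, while the $u$-cross terms carry only the prefactor $\sqrt{n_s}$ and, crucially, the bilinear forms $v^\top (W^\top W)^{-1} u$ for deterministic $v$ concentrate at zero at rate $O(p^{-1/2})$: in the SVD $Z^{(s)} = QD^{1/2}V^\top$, the vector $Q^\top 1_{n_s}/\sqrt{n_s}$ is isotropic with squared norm $\asymp p/n_s$, so $u$ has no preferred alignment with any fixed direction. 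The Woodbury correction then collapses to the rank-one Sherman-Morrison formula
\[
(X_s^\top X_s)^{-1} \;\approx\; (W^\top W)^{-1} - \frac{n_s\,(W^\top W)^{-1}\mus\mus^\top (W^\top W)^{-1}}{1 + n_s\,\mus^\top (W^\top W)^{-1}\mus}.
\]

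With the reduction complete, I would invoke the anisotropic local laws for the zero-mean Wishart matrix $W^\top W = \Sigmas^{1/2}Z^{(s)\top}Z^{(s)}\Sigmas^{1/2}$ used in Theorem~\ref{thm:underparammixed} (cf.\ Propositions~\ref{prop:rankonepertublower}--\ref{prop:variancesame} in Appendix~\ref{subsec:proofunderparammixed}), giving the deterministic equivalents $\mathrm{Tr}[(W^\top W)^{-1}M] \to \tfrac{1}{n_s}\tfrac{\gamma}{\gamma-1}\mathrm{Tr}[\Sigmas^{-1}M]$ and $v^\top(W^\top W)^{-1}w \to \tfrac{1}{n_s}\tfrac{\gamma}{\gamma-1}\,v^\top\Sigmas^{-1}w$ for deterministic $v,w,M$. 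Plugging in, the bulk term of the Sherman-Morrison expression contributes $\tfrac{\sigma^2}{n_s}\tfrac{\gamma}{\gamma-1}[\mathrm{Tr}(\Sigmat\Sigmas^{-1}) + \|\Sigmas^{-1/2}\mut\|_2^2]$. For the subtracted correction, $n_s\,\mus^\top(W^\top W)^{-1}\mus \to \tfrac{\gamma}{\gamma-1}\|\Sigmas^{-1/2}\mus\|_2^2 \asymp p$ absorbs the $+1$ in the denominator, while the numerator $n_s(\mut^\top(W^\top W)^{-1}\mus)^2 \to \tfrac{1}{n_s}\bigl(\tfrac{\gamma}{\gamma-1}\bigr)^2(\mut^\top\Sigmas^{-1}\mus)^2$, so the correction reduces to $\tfrac{\sigma^2}{n_s}\tfrac{\gamma}{\gamma-1}(\mut^\top\Sigmas^{-1}\mus)^2/\|\Sigmas^{-1/2}\mus\|_2^2$. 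Subtracting reproduces the claimed identity.

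The principal technical hurdle is rigorously proving the negligibility of the $u$-dependent Woodbury contributions, since $u = W^\top 1_{n_s}/\sqrt{n_s}$ is correlated with $(W^\top W)^{-1}$ and is neither a deterministic vector nor independent of $W^\top W$. I would handle this either via the SVD argument sketched above (using rotational invariance in the Gaussian case and universality for general sub-exponential entries) or via the same anisotropic resolvent machinery already employed in Theorem~\ref{thm:underparammixed}, which uniformly controls bilinear forms $v^\top(W^\top W - z)^{-1}w$ in deterministic $v,w$ and extends to the quadratic functionals of the resolvent needed here. Beyond this step, the remaining manipulations are purely algebraic.
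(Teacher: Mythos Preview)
Your proposal is correct and uses the same core ingredients as the paper (Woodbury for the rank-two mean perturbation, then the anisotropic local law for the zero-mean sample covariance), but the organization differs in an instructive way. The paper first whitens by $\Sigmas^{-1/2}$ so that the bulk reduces to the isotropic law for $Z_s^\top Z_s/n$; more importantly, instead of retaining the $\mu_s\mu_s^\top$ Sherman--Morrison block as you do, the paper decomposes $\mu_t' = \Sigmas^{-1/2}\mu_t$ into its components parallel and perpendicular to $\mu_s' = \Sigmas^{-1/2}\mu_s$, eliminates the parallel part via the spectral perturbation result (Proposition~\ref{prop:ranktwopertub}: the top eigenvalue of $\hat\Sigma'$ is $\Theta(p)$ with eigenvector aligned to $\mu_s'$), and then shows that \emph{all three} Woodbury correction blocks vanish against the perpendicular part. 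Thus the subtracted term $(\mu_t^\top\Sigmas^{-1}\mu_s)^2/\|\Sigmas^{-1/2}\mu_s\|^2$ arises in your argument from the surviving Sherman--Morrison correction, whereas in the paper it arises geometrically as $\|\mu_{t,\parallel s}'\|^2$. Your route is more algebraic and avoids the spectral perturbation lemma; the paper's route makes the Woodbury step cleaner (nothing survives). The hurdle you flag --- bilinear forms involving the random $u = W^\top 1_{n_s}/\sqrt{n_s}$ --- is precisely the paper's $T_{v,v}$ bound, which is handled not by pure resolvent machinery but by expanding in a fixed orthonormal basis, applying the local law term-by-term to deterministic vectors, and then controlling the residual random sum $\sum_k \tilde 1_{n_s}^\top (Z_s/\sqrt{n}) q_k$ by a direct second-moment/Chebyshev argument; you should expect to need the same hybrid step.
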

This result (proved in Appendix \ref{app:pfpropunderparamsingle}) extends the zero-centered expression by \cite{hastie2022surprises}. We observe consistency if we disregard means ($\mu_s=\mu_t=0$) and covariance shift ($\Sigma_t\Sigma_s^{-1}=I_p$). Proposition \ref{prop:underparamsingle} also extends the zero-centered anisotropic setting of \cite{yang2020precise} to the case without samples from the training distribution, and consistency follows after setting $\mu_s=\mu_t=0$. The effect of the mean shift is captured by $\lVert \Sigmas^{-1/2}\mut \rVert_2^2 - (\mut^\top \Sigmas^{-1}\mus/\lVert \Sigmas^{-1/2}\mus \rVert_2)^2$: what matters is \emph{(i)} the cosine similarity between $\Sigmas^{-1/2}\mu_s$ and $\Sigmas^{-1/2}\mu_t$, and \emph{(ii)} the alignment of the principal directions of $\Sigmas$ with $\mut$. In other words, the excess risk decreases as \emph{(i)} the mean of synthetic training data aligns with the mean of test data in the directions of the training covariance, and \emph{(ii)} the principal directions of the training covariance matrix align with the test mean.

\vspace{-.5em}

\paragraph{(b) Synthetic data selection.} 
Let us denote the deterministic quantity from (\ref{eq:riskundermixed}) as
\vspace{-.3em}
\begin{equation}
\Rcal_u(M)\coloneqq\frac{\sigma^2}{n} 
\operatorname{Tr} \left[ \left( \alpha_1 M^\top M + \alpha_2 \, I_p \right)^{-1}\right],    
\end{equation}
where $\alpha_1$ and $\alpha_2$ satisfy (\ref{eq:fixedpointunderparam}). This corresponds to the limit of the risk \(R_X(\betahat;\beta)\) due to Theorem \ref{thm:underparammixed}. Note that $\Rcal_u(M)$ depends only on the covariance matrices of the original training ($\Sigmat$) and the augmenting synthetic data ($\Sigmas$) via $M=\Sigma_s^{1/2}\Sigma_t^{-1/2}$.
Thus, in the under-parameterized setting, the guiding question (\ref{q:q}) posed in the introduction can be formalized as: 
\vspace{-.2em}
\begin{center}
    \emph{Given $\Sigmat$, what is the optimal $\Sigmas$ that minimizes  $\Rcal_u(M)$?}
\end{center}
The following theorem exactly treats this. Its proof is in Appendix \ref{app:pfunderparamcovmatching} and a brief sketch is below.

\begin{theorem}\label{thm:underparamcovmatching}
 Let $\Mcal\coloneqq\{M\in\bbR^{p\times p}:\rank(M) = p,\ \operatorname{Tr}[M^\top M]=p\}$. Then, for \(M_{opt}\in \Mcal\) minimizing the limit risk of Theorem \ref{thm:underparammixed}, i.e., 
\(  M_{opt} \coloneqq  \arginf_{M\in \Mcal} \Rcal_u(M),\) it holds that 
\vspace{-.3em}
\begin{equation}\label{eq:bal}
  \lambda_i(M_{opt})=1,\qquad \forall i\in\{1, \ldots, p\}.  
\end{equation}
\end{theorem}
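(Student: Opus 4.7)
The plan is to exploit a structural feature of the fixed-point system \eqref{eq:fixedpointunderparam}: the first equation pins down $\alpha_1+\alpha_2=1-p/n$ \emph{independently} of the eigenvalues $\lambda_1,\dots,\lambda_p$ of $M^\top M$. Combined with the constraint $\operatorname{Tr}[M^\top M]=\sum_i\lambda_i=p$ defining $\mathcal{M}$, this will turn the minimization into a one-line convexity (equivalently Cauchy--Schwarz) argument, with the nonlinear second fixed-point equation entering only to check feasibility of the extremizer.

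First I would write the objective explicitly as $\mathcal{R}_u(M)=\frac{\sigma^2}{n}\sum_{i=1}^p\frac{1}{\alpha_1\lambda_i+\alpha_2}$, where $(\alpha_1,\alpha_2)$ denotes the unique positive solution of \eqref{eq:fixedpointunderparam} associated to the given $M$. Applying Cauchy--Schwarz (equivalently AM--HM) to the $p$ positive numbers $\alpha_1\lambda_i+\alpha_2$ yields
\[
\left(\sum_{i=1}^p\frac{1}{\alpha_1\lambda_i+\alpha_2}\right)\cdot\sum_{i=1}^p(\alpha_1\lambda_i+\alpha_2)\ \ge\ p^2.
\]
Since $\sum_i\lambda_i=p$, the second factor equals $p(\alpha_1+\alpha_2)=p(1-p/n)$, so
\[
\mathcal{R}_u(M)\ \ge\ \frac{\sigma^2}{n}\cdot\frac{p}{1-p/n}\ =\ \frac{\sigma^2\,p}{n-p},
\]
a lower bound that is \emph{uniform} over $\mathcal{M}$ since its right-hand side is free of $M$.

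Next I would verify that this bound is actually attained, and only at eigenvalue profiles of the claimed form. Equality in Cauchy--Schwarz forces all the $\alpha_1\lambda_i+\alpha_2$ to coincide, and since $\alpha_1>0$ this is equivalent to $\lambda_1=\cdots=\lambda_p$; combined with $\sum_i\lambda_i=p$, this gives $\lambda_i=1$ for every $i$, which is precisely \eqref{eq:bal}. Conversely, substituting $\lambda_i\equiv 1$ into the second equation of \eqref{eq:fixedpointunderparam} reduces it to $\alpha_1/(1-p/n)=n_s/n$, so $\alpha_1=(n_s/n)(1-p/n)>0$ and $\alpha_2=(n_t/n)(1-p/n)>0$ form the unique positive solution; plugging back, $\mathcal{R}_u(M)$ evaluates to exactly $\sigma^2 p/(n-p)$. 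This shows the infimum is achieved and identifies every minimizer.

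The main apparent difficulty is that $\alpha_1$ and $\alpha_2$ depend on the \emph{whole} spectrum $\{\lambda_i\}$ through the implicit system \eqref{eq:fixedpointunderparam}, which a priori couples the eigenvalues in a nonlinear way and seems to invite a heavy KKT/Lagrangian calculation. The point of this plan is that the coupling never actually needs to be unraveled: the only piece of information extracted from the fixed-point system is the linear identity $\alpha_1+\alpha_2=1-p/n$, and the rest follows from a single application of Cauchy--Schwarz plus a direct plug-in check at $\lambda_i=1$.
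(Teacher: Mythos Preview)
Your proof is correct and takes a genuinely different, more elementary route than the paper. The paper first rewrites $\mathcal{R}_u(M)$ as an explicit increasing function of $\alpha_1$ alone (via the identity $\alpha_1+\alpha_2=1-p/n$), and then runs a majorization argument: it shows that any ``balancing'' transformation $(\lambda_i,\lambda_j)\to(\lambda_i-c,\lambda_j+c)$ strictly decreases $\alpha_1$, and invokes the Hardy--Littlewood--P\'olya characterization of majorization to conclude that the all-equal spectrum is optimal. Your approach sidesteps this entirely: by applying Cauchy--Schwarz to $\sum_i(\alpha_1\lambda_i+\alpha_2)^{-1}$ and noting that both $\sum_i\lambda_i=p$ and $\alpha_1+\alpha_2=1-p/n$ are fixed \emph{regardless} of the particular spectrum, you obtain a lower bound $\sigma^2 p/(n-p)$ that is uniform over $\mathcal{M}$ without ever tracking how $\alpha_1$ moves. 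The paper's route is more laborious but has the merit of revealing the monotone dependence of the risk on $\alpha_1$, which is reused elsewhere (e.g., the scaling result $\mathcal{R}_u(\eta M)\le\mathcal{R}_u(M)$); your route is shorter and gives the exact optimal value $\sigma^2 p/(n-p)$ in closed form as a byproduct.
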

\vspace{-.5em}
\emph{Proof sketch.} 
From the first equation in (\ref{eq:fixedpointunderparam}), $\Rcal_u(M)$ can be expressed in terms of a single parameter, e.g., $\alpha_1$. A key insight is that $\Rcal_u(M)$ is increasing in $\alpha_1$, which simplifies the optimization. Denoting with $\lambda_1 \geq \dots \geq \lambda_p$ the eigenvalues of $M$ in decreasing order, we show that transformations of the form $(\lambda_i,\lambda_j) \rightarrow (\lambda_i-c,\lambda_j+c)$ for $c>0$, can only lower $\alpha_1$. Thus, a majorization argument allows us to conclude that the most balanced solution (namely, (\ref{eq:bal})) is optimal.
$\hfill\qed$ 

Theorem \ref{thm:underparamcovmatching} proves that, having fixed $\operatorname{Tr}[M^\top M]$, the limit risk $\Rcal_u(M)$ is minimized when $M$ has all eigenvalues equal. Thus, given a training covariance $\Sigmat$, choosing synthetic data with $\Sigmas\propto\Sigmat$, i.e., \emph{matching the covariances}, is optimal. This is highlighted in Figure \ref{fig:matching-covariance-scale}, showing that the excess risk decreases as $\Sigmas$ aligns with $\Sigmat$. Increasing the scale of $\Sigma_s$ also reduces the risk, i.e., for any $M\in\bbR^{p\times p}$ s.t.\ $\rank(M)=p$ and any constant $\eta>1$, it holds that $\Rcal_u(\eta M)\leq \Rcal_u(M)$, see Appendix \ref{subsec:underparam-scaling-eigenvalues} for the proof and Figure \ref{fig:power-decay-scale} for an illustration. Recalling $M = \Sigmas^{1/2}\Sigmat^{-1/2}$, this suggests that greater diversity in synthetic data is advantageous. However, as Theorem \ref{thm:underparammixed} relies on bounds on the spectra of $\Sigmat, \Sigmas$ (see Section \ref{sec:preliminaries}), $\eta$ must be of constant order, i.e., it cannot grow with $n$ and $p$ (otherwise, the error between $R_X(\betahat;\beta)$ and $\Rcal_u(\eta M)$ may not vanish as in (\ref{eq:riskundermixed})).
This motivates the trace normalization ($\operatorname{Tr}[M^\top M]=p$) in Theorem \ref{thm:underparamcovmatching}. While other normalizations exist (e.g., on the determinant in \citep{yang2020precise}), they overly restrict the search space and make interpretation for synthetic data selection less clear.
\begin{figure}[tb]
     \centering
     \begin{subfigure}[b]{0.28\textwidth}
         \centering
         \includegraphics[width=\columnwidth]{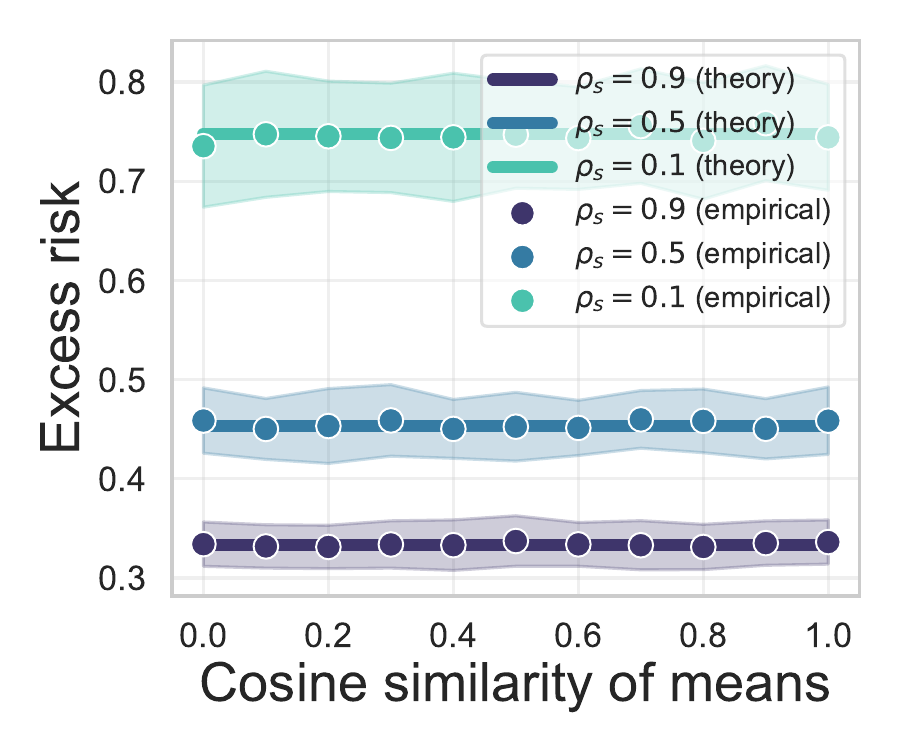}
        \caption{Aligning the means}
         \label{fig:power-decay-center}
     \end{subfigure}
     \hfill
     \begin{subfigure}[b]{0.28\textwidth}
         \centering
         \includegraphics[width=\columnwidth]{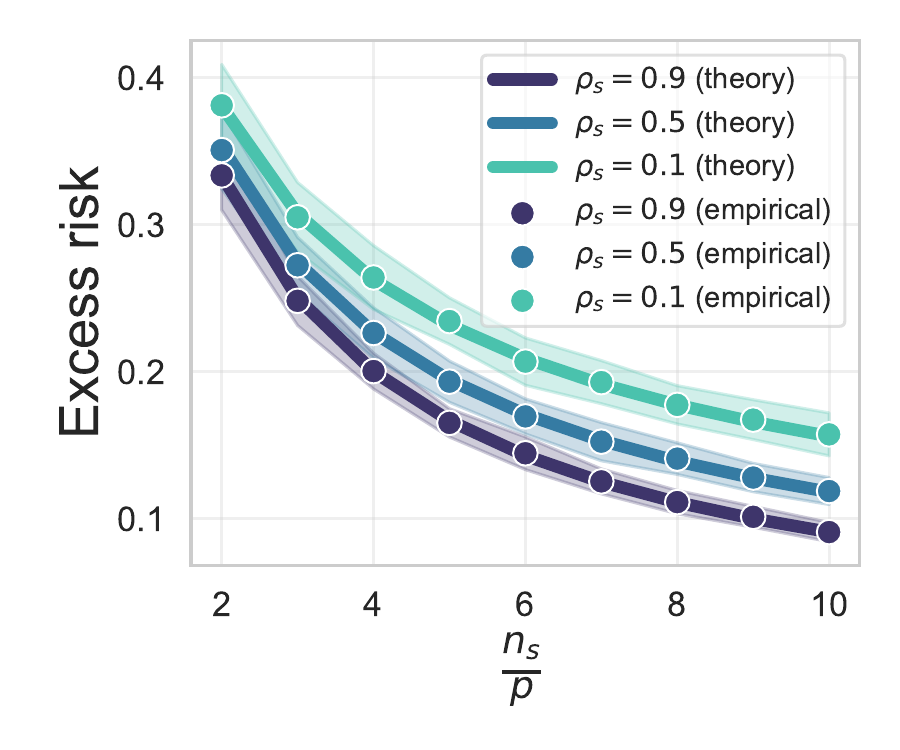}
     \caption{More synthetic data}
         \label{fig:matching-covariance-scale}
     \end{subfigure}
     \hfill
     \begin{subfigure}[b]{0.28\textwidth}
         \centering
         \includegraphics[width=\columnwidth]{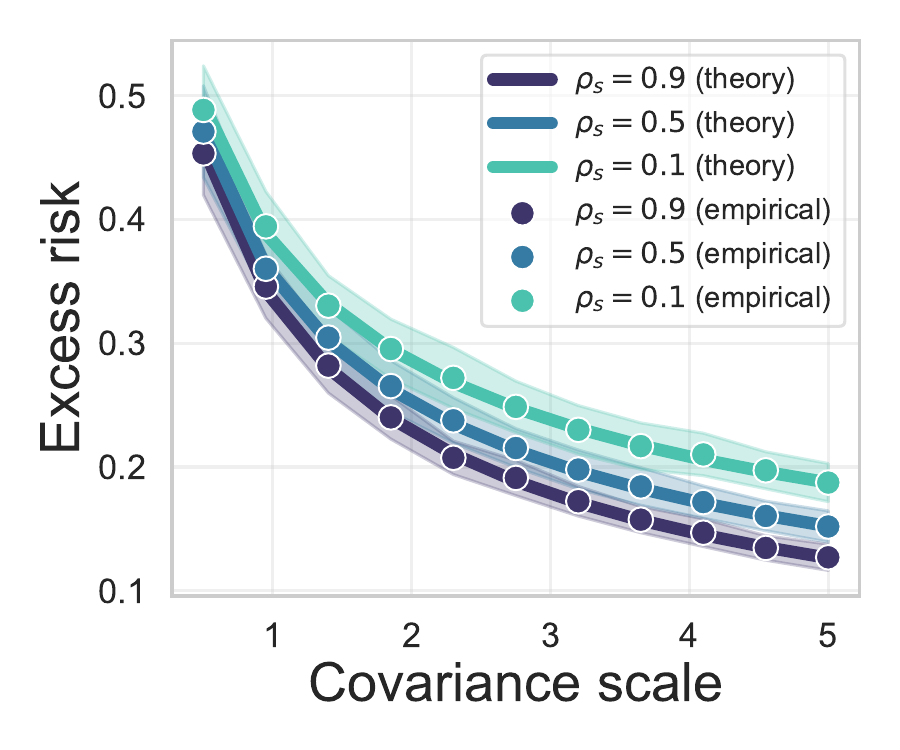}
         \caption{Scaling the covariance}
         \label{fig:power-decay-scale}
     \end{subfigure}
 \vspace{-.3em}    \caption{Excess risk using training data from $\mathcal N(\mu_t, \Sigma_t)$ and synthetic data from $\mathcal N(\mu_s, \Sigma_s)$, where $\Sigma_t, \Sigma_s$ are Kac–Murdock–Szegö matrices (Toeplitz matrices with geometrically decaying entries) with parameters $\rho_t, \rho_s$, scaled so that $\Tr [M^\top M]=p$. We pick $\|\mu_t\|_2=\|\mu_s\|_2=2\sqrt{p}$, $\rho_t=0.9$, $p=600$, $n_t=1200$, $n_s=1200$, unless varying the parameters in the plot. Each value is computed from 100 i.i.d.\ trials, the error band is at 1 standard deviation, and theoretical predictions are continuous lines.
     Different curves correspond to different values of $\rho_s$.
     \emph{(a)} Changing the cosine similarity of the mean does not impact the risk (here, $\Sigma_s$ is scaled by $\eta:=\rho_s$).
     \emph{(b)} Larger $\rho_s$ gives lower risk since $\Sigma_s$ is closer to $\Sigma_t$.
     \emph{(c)} Scaling $\Sigma_{\mathrm{s}}$ reduces the risk.}
   \vspace{-1em}  \label{fig:gaussian-examples}
\end{figure}

\vspace{-.3em}

\subsection{Over-parameterized regime}\label{subsec:Overparamregime}
\vspace{-.3em}

As opposed to Section \ref{subsec:underparamregime}, let us assume that \(\tau\leq \gamma,\gs,\gt \leq 1/(1+\tau)\),
so that $n<p$ and we are in the over-parameterized regime.
We sample $\beta$ from a sphere of constant radius, independently from \(X, \varepsilon_t, \varepsilon_s\).
We also assume that $\Sigmas$ and $\Sigmat$ are simultaneously diagonalizable. This assumption is of technical nature and common in related work \citep{song2024generalization,mallinar2024minimumnorm, ildiz2024highdimensional}. Writing out this condition, we have the SVDs $\Sigmas = U\Lambda^sU^\top, \Sigmat =  U\Lambda^tU^\top$. Let us denote by \(\lambda^s_i\coloneqq \Lambda^s_{i,i}, \lambda^t_i\coloneqq \Lambda^t_{i,i}\) and introduce the spectral probability distributions used in our claims:
\vspace{-.3em}
\begin{align}\label{def:hpgp}    \hat{H}_p(\lambda^{s},\lambda^{t})&\coloneqq \frac{1}{p}\sum_{i=1}^p \1_{\{(\lambda^{s},\lambda^{t})=(\lambda_i^{s},\lambda_i^{t})\}},\quad
        \hat{G}_p(\lambda^{s},\lambda^{t})\coloneqq \sum_{i=1}^p \inner{\beta}{u_i}^2 \1_{\{(\lambda^{s},\lambda^{t})=(\lambda_i^{s},\lambda_i^{t})\}}.
\end{align}

This section follows the same blueprint as Section \ref{subsec:underparamregime} for the under-parameterized regime. Namely, Theorem \ref{thm:overparammixed} gives a \emph{deterministic equivalent} of the excess risk using training and synthetic data and, in doing so, it extends results by \cite{song2024generalization} to non-zero centered data. The deterministic equivalent depends only on regression coefficients $\beta$ and covariances $\Sigma_t, \Sigma_s$, and it does not depend on means $\mu_t, \mu_s$. Then,
Theorem \ref{thm:overparamcovmatching} finds $\Sigmas$ that minimizes the limit risk from Theorem \ref{thm:overparammixed} when $\Sigma_t=I_p$, thus showing the optimality of \emph{covariance matching} ($\Sigma_s\propto \Sigma_t$) with isotropic training data. 
The proofs of these results follow a similar argument chain as in Section \ref{subsec:underparamregime}, although they tend to be more technically involved. We briefly discuss differences, deferring the full arguments of Theorems \ref{thm:overparammixed} and \ref{thm:overparamcovmatching} to Appendices \ref{subsec:proofoverparammixed}
and
\ref{subsec:proofoverparamcovmatching}, respectively.

\begin{theorem}\label{thm:overparammixed}
    Under the
    assumptions from Section \ref{sec:preliminaries} and the start of this section, it holds that, with high probability,
  \vspace{-.3em}
  \begin{equation}\label{eq:overparammixedrisk}
        \lim_{n\rightarrow \infty} \abs{ R_X(\hat{\beta};\beta)- \Vcal(\Sigmas,\Sigmat) - \Bcal(\Sigmas,\Sigmat,\beta)} = 0,
    \end{equation}
    where
\vspace{-.3em}
    \begin{equation*}
        \Vcal(\Sigmas,\Sigmat)\hspace{-.2em}\coloneqq \hspace{-.2em}\frac{\sigma^2}{ \gamma}\hspace{-.2em} \int \hspace{-.2em}\frac{-\lambda^t(a_3\lambda^s\hspace{-.2em}+\hspace{-.2em}a_4\lambda^t)}{(a_1\lambda^s\hspace{-.2em}+\hspace{-.2em}a_2\lambda^t\hspace{-.2em}+\hspace{-.2em}1)^2}d\hat{H}_p(\lambda^s\hspace{-.2em},\hspace{-.1em}\lambda^t),\,\, 
        \Bcal(\Sigmas,\Sigmat,\beta)\hspace{-.2em}\coloneqq \hspace{-.4em}\int\hspace{-.2em} \frac{b_3\lambda^s\hspace{-.2em}+\hspace{-.2em}(b_4\hspace{-.2em}+\hspace{-.2em}1)\lambda^t}{(b_1\lambda^s\hspace{-.2em}+\hspace{-.2em}b_2\lambda^t\hspace{-.2em}+\hspace{-.2em}1)^2}d\hat{G}_p(\lambda^s\hspace{-.2em},\hspace{-.1em}\lambda^t),
    \end{equation*}
    and $a_i$, $b_i$ ($i\in\{1, 2, 3, 4\}$) are the unique solutions to the equations reported in Appendix \ref{subsec:defequations}.
\end{theorem}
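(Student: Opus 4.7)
The plan is to follow the same three-step blueprint as for Theorem~\ref{thm:underparammixed}, adapted to the over-parameterized regime: decompose the risk, reduce the ``covariance'' part to zero-centered data by treating the means as a low-rank perturbation of the design, and show the ``mean'' part is negligible. Using (\ref{eq:riskbreakdown}) and splitting the test quadratic form $\Sigmat + \mu_t\mu_t^\top$, I would write $R_X = R_X^{\mathrm{cov}} + R_X^{\mathrm{mean}}$ with
\begin{equation*}
R_X^{\mathrm{cov}} \coloneqq \beta^\top \Pi \Sigmat \Pi \beta + \tfrac{\sigma^2}{n}\operatorname{Tr}[\hat\Sigma^+ \Sigmat], \qquad R_X^{\mathrm{mean}} \coloneqq (\mu_t^\top \Pi \beta)^2 + \tfrac{\sigma^2}{n}\mu_t^\top \hat\Sigma^+ \mu_t,
\end{equation*}
and then argue that $R_X^{\mathrm{cov}} = \Vcal(\Sigmas,\Sigmat) + \Bcal(\Sigmas,\Sigmat,\beta) + o(1)$ while $R_X^{\mathrm{mean}} = o(1)$.

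For the covariance part, I would write $X = \tilde X + e_t \mu_t^\top + e_s \mu_s^\top$ with $\tilde X$ having i.i.d.\ zero-mean rows and $e_t, e_s \in \bbR^n$ the training/synthetic indicator vectors, so that $\hat\Sigma$ differs from $\tilde\Sigma \coloneqq \tilde X^\top \tilde X /n$ by a matrix of rank at most $4$. Analogously to Propositions~\ref{prop:rankonepertublower}--\ref{prop:variancesame} in the under-parameterized proof, a Sherman--Morrison--Woodbury analysis of the regularized resolvent $(\hat\Sigma + zI)^{-1}$ yields that the two matrices are spectrally close when tested against low-rank targets; sending $z\to 0^+$ transfers this closeness both to $\hat\Sigma^+$ and to the projector $\Pi = \lim_{z\to 0^+} z(\hat\Sigma + zI)^{-1}$. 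The anisotropic local laws for $\tilde\Sigma$ underlying \cite{song2024generalization} then deliver the deterministic equivalents $\Vcal(\Sigmas,\Sigmat)$ and $\Bcal(\Sigmas,\Sigmat,\beta)$ for the zero-centered problem, which coincide with the claimed formulas.

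The key observation for the mean part is that $\mu_t$ lies almost in the row span of $X$: with $v = e_t/n_t$,
\begin{equation*}
X^\top v = \tfrac{1}{n_t} X_t^\top \mathbf{1}_{n_t} = \mu_t + \eta, \qquad \eta \coloneqq \tfrac{1}{n_t}\Sigmat^{1/2} (Z^t)^\top \mathbf{1}_{n_t}, \qquad \mathbb{E}\|\eta\|_2^2 = \tfrac{\operatorname{Tr}(\Sigmat)}{n_t} = O(1).
\end{equation*}
In the over-parameterized regime $X\hat\Sigma^+ X^\top = n I_n$, so $v^\top X \hat\Sigma^+ X^\top v = n\|v\|_2^2 = n/n_t$; combining with $\|\hat\Sigma^+\|_{\mathrm{op}} = O(1)$ (from a Marchenko--Pastur-type lower edge for $\tilde\Sigma$ plus eigenvalue interlacing) and Cauchy--Schwarz on the cross terms, $\tfrac{1}{n}\mu_t^\top \hat\Sigma^+ \mu_t = O(1/n_t) = o(1)$. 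For the bias piece, $\Pi\mu_t = -\Pi\eta$ since $\Pi X^\top v = 0$; because $\beta$ is drawn uniformly from a sphere of constant radius independently of the data (hence independently of $\Pi\eta$), standard concentration of linear forms on the sphere yields $|(\Pi\eta)^\top \beta| = O(\|\Pi\eta\|_2/\sqrt p) = O(1/\sqrt p)$, so $(\mu_t^\top \Pi\beta)^2 = O(1/p) = o(1)$.

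The hardest step will be making the Woodbury reduction for $R_X^{\mathrm{cov}}$ quantitative near $z = 0$: the pseudoinverse and the projector $\Pi$ are discontinuous functions of the spectrum, so the rank-$O(1)$ perturbation bounds must be uniform on a small regularization window and combined with a lower-edge estimate for $\hat\Sigma$'s non-zero spectrum. The control of $\Pi$ is specifically a feature of the over-parameterized regime and was absent in Theorem~\ref{thm:underparammixed}; once $\Pi$ and $\hat\Sigma^+$ are tamed in this way, the remainder of the argument carries over with essentially the same structure as the under-parameterized proof.
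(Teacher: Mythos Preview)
Your high-level plan matches the paper's proof: split $R_X$ into the four pieces $B_X^1,B_X^2,V_X^1,V_X^2$ (your $R_X^{\mathrm{cov}}$ and $R_X^{\mathrm{mean}}$), kill the $\mu_t\mu_t^\top$ pieces, and reduce the $\Sigma_t$ pieces to the zero-mean setting via a ridge/Woodbury argument before invoking \cite{song2024generalization}. Where you genuinely differ is in how you handle $R_X^{\mathrm{mean}}$. The paper controls $\|\Pi\mu_t\|_2$ and $\mu_t^\top\hat\Sigma^+\mu_t$ through the spectral perturbation analysis of Proposition~\ref{prop:ranktwopertub}, showing that $\mu_t$ is captured by the two outlier right singular vectors of $X/\sqrt n$. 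Your row-span argument via $X^\top v=\mu_t+\eta$ with $\|\eta\|_2=O(1)$, the identity $X\hat\Sigma^+X^\top=nI_n$, and $\Pi X^\top v=0$ is more elementary and bypasses the spectral analysis altogether; it is a cleaner route to the same conclusions $V_X^2=O(1/n)$ and $B_X^2=O(1/p)$.

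Two technical caveats. First, your justification of $\|\hat\Sigma^+\|_{\mathrm{op}}=O(1)$ by ``lower edge for $\tilde\Sigma$ plus eigenvalue interlacing'' does not work as stated: since $n<p$ and the mean perturbation has rank $2$, interlacing only gives $\sigma_n(X)\ge\sigma_{n+2}(\tilde X)=0$. The paper instead proves the lower bound directly on the perturbed matrix (Proposition~\ref{prop:rankonepertublower}, via rotating the rank-one mean into a single row), and you will need that argument or an equivalent. Second, for the variance piece of $R_X^{\mathrm{cov}}$ the Woodbury-on-the-resolvent reduction is more delicate than your sketch suggests: the paper cannot apply Woodbury directly to $\tfrac1n\operatorname{Tr}[(\hat\Sigma+\lambda I)^{-2}\hat\Sigma\Sigma_t]$ and instead writes $V_X^1(\lambda)=\tfrac{d}{d\lambda}\big[\tfrac\lambda n\operatorname{Tr}(\Sigma_t(\hat\Sigma+\lambda I)^{-1})\big]$, applies Woodbury to the undifferentiated trace, and recovers the derivative by a finite difference, balancing three error scales to get $O(n^{-1/7})$. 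Your last paragraph correctly flags that the $z\to0^+$ step is the crux, but be aware that the bias and variance pieces need different regularization schedules ($\lambda=p^{-1/3}$ vs.\ $\lambda=n^{-1/7}$) rather than a single limit.
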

We highlight two additional difficulties in the proof of Theorem \ref{thm:overparammixed} arising from the over-parameterized regime: \emph{(1)} the inverse does not replace the pseudo-inverse in (\ref{eq:riskbreakdown}), and \emph{(2)} the bias term does not vanish. We address the former by introducing the $\lambda$-regularized ridge estimator $\betahat_\lambda$, which approximates $\betahat$ for small $\lambda$ and admits inverse-based formulas similar to (\ref{eq:riskbreakdown}). Addressing the latter requires a delicate control of the inverse, obtained via Woodbury formula.

\begin{theorem}\label{thm:overparamcovmatching}
    Let $\Scal\coloneqq\{\Sigma\in\bbR^{p\times p}_{\succ0}: \Tr\rbr{\Sigma}=p \}$, where $\bbR^{p\times p}_{\succ 0}$ denotes the set of $p\times p$ positive definite matrices. Recall the definitions of $\Vcal(\Sigmas,\Sigmat)$, $\Bcal(\Sigmas,\Sigmat,\beta)$ from Theorem \ref{thm:overparammixed}, and  define $\Rcal_o(\Sigmas,\Sigmat,\beta) \coloneqq \Vcal(\Sigmas,\Sigmat)+\Bcal(\Sigmas,\Sigmat,\beta)$. 
 Then, for any $\Sigmas \in \Scal$, with high probability over the sampling of $\beta$ over a sphere of constant radius,
    it holds that
 \vspace{-.3em}
   \[
      \Rcal_o(I_p,I_p,\beta) \le \Rcal_o(\Sigmas,I_p,\beta)+o(1),
    \]
    where $o(1)$ denotes a quantity that vanishes as $n, p\to\infty$.
\end{theorem}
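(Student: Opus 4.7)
The plan is to reduce the minimization to a symmetric optimization over the eigenvalues of $\Sigmas$ and then apply a pairwise-averaging argument analogous to the one underlying Theorem~\ref{thm:underparamcovmatching}. Since $\Sigmat=I_p$, the simultaneous-diagonalizability condition is trivially met and we may write $\Sigmas = U\Lambda^s U^\top$ for an \emph{arbitrary} orthogonal $U$, so the fixed-point coefficients $a_i, b_i$ from Appendix~\ref{subsec:defequations} depend on $\Sigmas$ only through the spectrum $\lambda_1^s,\dots,\lambda_p^s$. The deterministic equivalent of Theorem~\ref{thm:overparammixed} then specializes to
\begin{equation*}
\Vcal(\Sigmas,I_p)=\frac{\sigma^2}{\gamma p}\sum_{i=1}^p \frac{-(a_3\lambda^s_i+a_4)}{(a_1\lambda^s_i+a_2+1)^2},\quad \Bcal(\Sigmas,I_p,\beta)=\sum_{i=1}^p \inner{\beta}{u_i}^2 \frac{b_3\lambda^s_i+(b_4+1)}{(b_1\lambda^s_i+b_2+1)^2}.
\end{equation*}

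First I would eliminate the $\beta$-dependence in $\Bcal$. Because $U$ can be taken independent of $\beta$, standard Lipschitz concentration on the sphere yields $\inner{\beta}{u_i}^2 = \norm{\beta}_2^2/p + O(p^{-1/2})$ uniformly in $i$ with high probability. Hence $\Bcal(\Sigmas,I_p,\beta)$ equals $\tfrac{\norm{\beta}_2^2}{p}\sum_{i=1}^p (b_3\lambda^s_i+b_4+1)/(b_1\lambda^s_i+b_2+1)^2$ up to an $o(1)$ error. Combined with Theorem~\ref{thm:overparammixed}, this reduces $\Rcal_o(\Sigmas,I_p,\beta)$ to a \emph{symmetric} function $F(\lambda_1^s,\dots,\lambda_p^s)$ of the eigenvalues only, with the constraint $\Sigmas\in\Scal$ becoming $\lambda_i^s>0$ together with $\sum_{i=1}^p\lambda_i^s = p$. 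The claim now boils down to showing that $F$ attains its infimum on this simplex at the barycenter $\lambda_1^s=\dots=\lambda_p^s=1$.

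Second, I would establish this via pairwise averaging (equivalently, Schur-convexity and the Hardy--Littlewood--P\'olya theorem). Fix indices $i\ne j$ with $\lambda_i^s\ne\lambda_j^s$ and prove that replacing the pair $(\lambda_i^s,\lambda_j^s)$ by $(\bar\lambda,\bar\lambda)$ with $\bar\lambda=(\lambda_i^s+\lambda_j^s)/2$ does not increase $F$. Iterating this averaging (or invoking Schur-convexity directly on the simplex) drives all eigenvalues to $1$; combining with the $o(1)$ errors from the $\beta$-concentration above and from the deterministic equivalent of Theorem~\ref{thm:overparammixed} then yields $\Rcal_o(I_p,I_p,\beta)\le\Rcal_o(\Sigmas,I_p,\beta)+o(1)$.

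The main obstacle is that $a_j,b_j$ themselves depend on the full spectrum through the fixed-point equations in Appendix~\ref{subsec:defequations}, so a naive convexity check in $\lambda_i^s$ is coupled. I plan to tackle this in two stages. \emph{(i)} With $(a_j,b_j)$ held fixed, each summand of $F$ is a rational function of the form $(c_0+c_1\lambda)/(c_2+c_3\lambda)^2$, whose convexity on $(0,\infty)$ under the relevant sign constraints follows by inspecting the second derivative; summing these symmetric convex functions under the affine constraint $\sum_i\lambda_i^s=p$ then delivers the pairwise-averaging inequality for the frozen problem. \emph{(ii)} To handle the implicit dependence, I would differentiate the fixed-point system and show that at any configuration with $\lambda_i^s=\lambda_j^s$ the first-order variation of $(a_j,b_j)$ under a mass-preserving swap vanishes by symmetry, so the induced correction is of second order and inherits the favorable sign from \emph{(i)}. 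An alternative route paralleling Theorem~\ref{thm:underparamcovmatching} would be to identify a scalar summary of the fixed-point solution in which $F$ is monotone, and show that this summary is itself Schur-convex in the spectrum using the algebraic structure of the equations in Appendix~\ref{subsec:defequations}.
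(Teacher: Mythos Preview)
Your high-level plan (concentrate the bias over $\beta$, then optimize a symmetric function of the spectrum via pairwise averaging) is the right shape, but it misses the algebraic cancellations that make the argument go through, and without them your two-stage Schur-convexity program is both harder and, as stated, incomplete.

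The paper's proof exploits two simplifications you do not see. First, plugging $\lambda^t\equiv 1$ into the $b$-system of Appendix~\ref{subsec:defequations} forces $b_3=b_1$ and $b_4=b_2$ by uniqueness, so the bias summand collapses to $1/(b_1\lambda^s_i+b_2+1)$. The first $b$-equation then gives $\sum_i 1/(b_1\lambda^s_i+b_2+1)=p-n$, hence the concentrated bias equals $\tfrac{p-n}{p}\|\beta\|_2^2$ \emph{for every} $\Sigmas\in\Scal$. The bias is therefore constant up to $o(1)$ and drops out of the optimization entirely. Second, the third $a$-equation gives $\Vcal(\Sigmas,I_p)=\sigma^2(a_1+a_2)$ directly, and combining the first two $a$-equations shows $a_2=\gamma_t/(1-\gamma)$ is independent of $\Sigmas$. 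So the whole problem reduces to minimizing the scalar $a_1$, which satisfies $\sum_i 1/(a_1\lambda_i^s+c_2)=p-n$; this is exactly the structure of Theorem~\ref{thm:underparamcovmatching}, and the same majorization argument applies verbatim. Your ``alternative route'' is in fact the route, and the scalar summary is $a_1$.

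By contrast, your stage~(i) is not justified: the summand $(c_0+c_1\lambda)/(c_2+c_3\lambda)^2$ is not convex on $(0,\infty)$ for arbitrary sign patterns (compute the second derivative), so frozen-coefficient convexity need not hold for the variance piece. Stage~(ii) is only a sketch and would have to control the second-order effect of the implicit dependence, which is nontrivial. Finally, a smaller point: the claim $\inner{\beta}{u_i}^2=\|\beta\|_2^2/p+O(p^{-1/2})$ \emph{uniformly in $i$} is false---single coordinates of a sphere-uniform vector do not concentrate that tightly. What is true (and what the paper uses) is Lipschitz concentration of the full quadratic form $\beta^\top A\beta$ around $\tfrac{r^2}{p}\Tr(A)$ with $\|A\|_2\le 1$.
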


Due to the complexity of the expressions for $\Vcal(\Sigmas,\Sigmat)$ and  $\Bcal(\Sigmas,\Sigmat,\beta)$, the optimality of covariance matching ($\Sigma_s\propto \Sigma_t$) in the over-parameterized regime is shown for isotropic training data ($\Sigma_t=I_p$). At the technical level, we note that the bias generally depends on the eigenspace decomposition of the covariance matrices via $\hat{G}_p$, as defined in (\ref{def:hpgp}). However, when $\Sigmat = I_p$, cancellations in the equations for $b_i$ ($i\in \{1, 2, 3, 4\}$) give that the bias $\Bcal(\Sigmas,I_p,\beta)$ is close to $\frac{p-n}{p}\norm{\beta}_2$ for any $\Sigmas$. Having obtained that, the variance is then optimized following the approach of Theorem \ref{thm:underparamcovmatching}.
\vspace{-.3em}
\section{Experimental results}
\vspace{-.3em} 
Theorems \ref{thm:underparamcovmatching} and \ref{thm:overparamcovmatching} show the optimality of \emph{covariance matching} ($\Sigma_s\propto\Sigma_t$) in both under-parameterized and over-parameterized regimes. We now extensively test the applicability of this synthetic data selection criterion in a range of practical settings. 
We consider classification problems, assume access to a large pool of synthetic samples obtained from generative models, and perform the augmentation per class. 
We implement \emph{covariance matching} via a greedy algorithm: we initialize $S=\varnothing$ and, until $|S|=n_s$, we add the $x$ from the generated pool that minimizes $
\|\widehat{\Sigma}(S\cup\{x\})-\widehat{\Sigma}_t\|_F$,
where $\widehat{\Sigma}(\cdot)$ and $\widehat{\Sigma}_t$ denote the sample covariance of CLIP features of the synthetic samples and real samples respectively and $\|\cdot\|_F$ is the Frobenius norm. To accelerate the selection, we compute covariances in a 32-dimensional PCA space fit on the $n_s$ real reference features. After the selection, we train a classifier on the union of real and selected synthetic samples.  

\paragraph{Experimental setup.}
When using CIFAR-10, 
we evaluate three training paradigms. \emph{(1) Scratch}: train a ResNet-18~\citep{he2016deep} from scratch on the available data. \emph{(2) Distillation}: train a ResNet-18 using soft targets (logits) from a ResNet-50 trained on full CIFAR-10, following \citet{hinton2015distilling}. \emph{(3) Pretrained}: fine-tune an ImageNet-pretrained ResNet-18 with a new classification head. We also repeat the \emph{Scratch} and \emph{Distillation} experiments replacing the ResNet with two transformer models (ViT and Swin-T).  Unless stated otherwise, we use \(n_t=200\) real images and augment with \(n_s=800\) synthetic images per class. 
The features for the selection algorithms are extracted with CLIP ViT-B, yielding a $p=512$-dimensional feature space, which places us in an under-parameterized regime. We report in Table \ref{tab:overparam-ablation} in Appendix \ref{app:add} an additional experiment for $n_s+n_t=400$, which places us in an over-parameterized regime. We additionally consider ImageNet-100 as a more diverse dataset, and  RxRx1~\citep{sypetkowski2023rxrx1} as a specialized one. For RxRx1, we use a small subset of $n_t =30$  images from four common perturbations (1108, 1124, 1137, 1138) on HUVEC cells. We consider the task of perturbation classification and augment with $n_s=60$ samples chosen from 500 images generated by MorphGen~\citep{demirel2025morphgen}. Further details are in Appendix \ref{app:add}. 

\paragraph{Baselines.} We compare \emph{Covariance matching} with the following baselines.  
\emph{(1) Center matching}~\citep{he2022synthetic}: select the $n_s$ images nearest to the centroid of the $n_t$ real training features. 
 \emph{(2) Center sampling}~\citep{lin2023explore}: sample with probability proportional to the cosine similarity to the $n_t$ real training features.
 \emph{(3) DS3}~\citep{hulkund2025datas}: cluster the generated pool into 200 clusters; for each of the $n_t$ real images, retain its nearest cluster; then, sample $n_s$ images uniformly from the retained set. 
 \emph{(4) K-means}~\citep{lin2023explore}: cluster the generated pool into $n_s$ clusters and choose one random representative per cluster. 
 \emph{(5) Random}: uniformly sample $n_s$ images from the generated pool. The methods ``No-filtering''~\citep{hulkund2025datas}, ``Match-dist''~\citep{hulkund2025datas}, and ``Match-label''~\citep{hulkund2025datas} are all equivalent to \emph{Random} in our setting due to having the same number of data for each class. 
 \emph{(6) Text matching}~\citep{lin2023explore}: select the $n_s$ images nearest to the class text embedding. 
 \emph{(7) Text sampling}~\citep{lin2023explore}: sample with probability proportional to the cosine similarity to the class text embedding. 
 We also report a baseline, \emph{No synthetic}, corresponding to using only $n_t$ samples from the training distribution (synthetic data discarded), as well as a baseline, \emph{Real upper bound}, corresponding to using $n_t+n_s$ samples from the training distribution (synthetic data replaced by in-distribution data). All experiments are repeated over 10 random seeds (except Table \ref{tab:imagenet} which is on 5 seeds), and we report the mean $\pm$ $1$ standard deviation.  
  
\begin{table}[t]
\caption{\emph{Covariance matching} outperforms all baselines across three training paradigms on CIFAR-10, when the synthetic data is generated 
via five truncated StyleGAN2-Ada models. 
}
\vspace{-.5em}
\centering
\resizebox{0.7\textwidth}{!}{
\begin{tabular}{lccc}
\cmidrule[0.5pt]{1-4}
Method & Scratch & Distillation & Pretrained \\
\cmidrule[0.5pt]{1-4}
No synthetic & $44.36 \pm 1.51$ & $47.33 \pm 0.57$ & $63.40 \pm 1.33$ \\
\cmidrule[0.25pt]{1-4}
Center matching~\citep{he2022synthetic}    & $50.04 \pm 2.84$ & $53.83 \pm 0.59$ & $67.01 \pm 0.89$ \\
Center sampling~\citep{lin2023explore}    & $50.48 \pm 2.03$ & $54.91 \pm 1.07$ & $67.71 \pm 0.90$ \\
DS3~\citep{hulkund2025datas} & $52.83 \pm 2.19$ & $58.32 \pm 0.43$ & $68.21 \pm 0.66$ \\
K-means~\citep{lin2023explore}  & $50.74 \pm 1.77$ & $56.06 \pm 0.68$ & $66.50 \pm 1.11$ \\
Random   & $49.38 \pm 2.43$ & $54.89 \pm 0.91$ & $67.65 \pm 0.77$ \\
Text matching~\citep{lin2023explore} & $50.94 \pm 1.40$ & $55.17 \pm 0.57$ & $67.81 \pm 0.76$ \\
Text sampling~\citep{lin2023explore} & $50.28 \pm 1.18$ & $54.82 \pm 0.72$ & $67.45 \pm 1.02$ \\ 
Covariance matching (ours)& $54.00 \pm 1.89$ & $59.77 \pm 0.61$ & $69.20 \pm 0.56$ \\  
\cmidrule[0.25pt]{1-4}
Real upper bound & $61.08 \pm 2.54$ & $65.38 \pm 0.51$ & $74.35 \pm 0.56$ \\
\cmidrule[0.5pt]{1-4}
\end{tabular}
}
\label{tab:clip-trunc}
\vspace{-0.5em}
\end{table} 

\paragraph{Main findings.} First, we test \emph{diversity/quality trade-offs}. To do so, for each class we generate 
images with StyleGAN2-Ada~\citep{Karras2020ada} under different truncations~\citep{karras2019style}: 6K images from a 0.2-truncated model with three randomized truncation centers and 4K images from a 0.6-truncated model with two randomized centers. This produces synthetic data with varying diversity and fidelity. The results of Table \ref{tab:clip-trunc} demonstrate that covariance matching 
outperforms all baselines for all training paradigms. Table \ref{tab:gen-metrics} in Appendix \ref{app:add}  suggests that this superiority is partly due to selecting more diverse samples, evident from the improved Recall~\citep{kynkaanniemi2019improved}, FID~\citep{heusel2017gans}, and KID~\citep{binkowski2018demystifying} scores guaranteed by covariance matching. Going beyond ResNets, we also  demonstrate the effectiveness of covariance matching for transformer models in Table \ref{tab:vit-swint} in Appendix \ref{app:add}.

Second, we test \emph{text-to-image (T2I) generative models}. To do so,
for each class we generate 4K SANA-1.5~\citep{xie2025sana}, 4K PixArt-$\alpha$~\citep{chen2023pixart}, and 2K StableDiffusion1.4~\citep{rombach2022high} images. 
Table \ref{tab:clip-gen-pool} shows that covariance matching also performs well in this mixed setup.  
Finally, to demonstrate the generality of our findings, we consider a broader dataset from computer vision (ImageNet-100) and a specialized dataset from fluorescence microscopy (RxRx1, \citep{sypetkowski2023rxrx1}). Once again, the results reported in Tables \ref{tab:imagenet}-\ref{tab:cell-experiment} show that covariance matching performs on par with the best baselines in all settings.  

\begin{table}[t]
\caption{\emph{Covariance matching} performs on par with the best baseline across three training paradigms on CIFAR-10, when the synthetic data is generated 
via various T2I generative models.}
\vspace{-.5em} 
\centering
\resizebox{0.7\textwidth}{!}{
\begin{tabular}{lccc}
\cmidrule[0.5pt]{1-4}
Method & Scratch & Distillation & Pretrained \\
\cmidrule[0.5pt]{1-4}
No synthetic & $44.36 \pm 1.51$ & $47.33 \pm 0.57$ & $63.40 \pm 1.33$ \\
\cmidrule[0.25pt]{1-4}
Center matching~\citep{he2022synthetic}     & $53.46 \pm 1.95$ & $57.67 \pm 0.58$ & $66.52 \pm 0.81$ \\
Center sampling~\citep{lin2023explore}    & $50.15 \pm 1.79$ & $56.05 \pm 0.65$ & $65.38 \pm 0.98$ \\
DS3~\citep{hulkund2025datas}     & $54.15 \pm 2.17$ & $59.43 \pm 0.73$ & $66.00 \pm 0.94$ \\
K-means~\citep{lin2023explore}  & $51.63 \pm 1.29$ & $56.77 \pm 0.89$ & $65.23 \pm 0.61$ \\
Random   & $51.26 \pm 1.96$ & $55.27 \pm 0.74$ & $65.24 \pm 1.01$ \\
Text matching~\citep{lin2023explore} & $51.20 \pm 1.82$ & $56.08 \pm 0.57$ & $65.93 \pm 0.59$ \\
Text sampling~\citep{lin2023explore}     & $50.31 \pm 1.70$ & $55.79 \pm 0.68$ & $64.93 \pm 1.12$ \\
Covariance matching (ours)& $54.45 \pm 2.11$ & $59.17 \pm 0.64$ & $66.69 \pm 0.70$ \\  
\cmidrule[0.25pt]{1-4}
Real upper bound & $61.08 \pm 2.54$ & $65.38 \pm 0.51$ & $74.35 \pm 0.56$ \\
\cmidrule[0.5pt]{1-4}
\end{tabular}
}
\label{tab:clip-gen-pool}
\vspace{-.25em}
\end{table}

\begin{table*}[t]
\centering
\caption{\emph{Covariance matching} performs on par with the best baselines for two additional datasets. In (a), we train a ResNet-18 from scratch on ImageNet-100 with synthetic images from StyleGAN-XL and T2I models. In (b), we train a linear model on top of an ImageNet-pretrained ResNet for perturbation classification on a small subset of RxRx1~\citep{sypetkowski2023rxrx1} augmented with synthetic images from MorphGen~\citep{demirel2025morphgen}.}
\vspace{-.25em}
\begin{minipage}{0.54\textwidth}
\centering
\resizebox{\textwidth}{!}{
\begin{tabular}{lcc}
\cmidrule[0.5pt]{1-3}
Method & Truncated models & T2I models \\
\cmidrule[0.5pt]{1-3}
No synthetic & \multicolumn{2}{c}{$40.78 \pm 1.29$} \\
\cmidrule[0.25pt]{1-3}
Center matching~\citep{he2022synthetic} & $53.39 \pm 0.37$ & $53.96 \pm 1.06$ \\
DS3~\citep{hulkund2025datas}  & $57.47 \pm 0.87$ & $53.51 \pm 0.31$ \\
Random    & $54.14 \pm 0.82$ & $49.84 \pm 1.32$ \\
Text matching~\citep{lin2023explore}    & $53.39 \pm 0.99$ & $53.37 \pm 0.72$ \\
Covariance matching (ours)    & $57.52 \pm 0.36$ & $53.07 \pm 0.89$ \\
\cmidrule[0.25pt]{1-3}
Real upper bound & \multicolumn{2}{c}{$62.67 \pm 0.65$} \\
\cmidrule[0.5pt]{1-3}
\end{tabular}
}

\subcaption{ImageNet-100 dataset}
\label{tab:imagenet}
\end{minipage}
\hfill
\begin{minipage}{0.41\textwidth}
\centering
\resizebox{\textwidth}{!}{
\begin{tabular}{lc}
\cmidrule[0.5pt]{1-2}
Method & MorphGen \\
\cmidrule[0.5pt]{1-2}
No synthetic & $86.83 \pm 2.44$ \\
\cmidrule[0.25pt]{1-2}
Center matching~\citep{he2022synthetic} & $88.17 \pm 2.35$ \\
Random    & $87.33 \pm 2.03$ \\
K-means~\citep{lin2023explore}     & $89.00 \pm 1.70$ \\
DS3~\citep{hulkund2025datas}  & $89.67 \pm 1.45$ \\
Center sampling~\citep{lin2023explore}  & $88.75 \pm 2.27$ \\
Covariance matching (ours)    & $90.00 \pm 1.86$ \\  
\cmidrule[0.5pt]{1-2}
\end{tabular}
}
\subcaption{RxRx1 dataset}
\label{tab:cell-experiment}
\end{minipage}
\vspace{-1.25em}
\end{table*}

\vspace{-2mm}
\paragraph{Additional controlled experiments.} We report additional results in Appendix \ref{app:add}. In Table \ref{tab:zerodiversity-clip}, we consider \emph{zero-diversity generators}. 
Specifically, for each class, we combine 2K StyleGAN2-Ada images with a total of 8K images produced by two zero-diversity generators. Each of these generators emits a single prototype per class: one near the class center of the real samples, and one near the class label’s CLIP embedding. This yields high precision, but low diversity relative to the real distribution. Our results show that, again, covariance matching performs well as it avoids selecting many samples with low diversity (collapsed clusters). In contrast, not fully taking into account the diversity of selected samples, methods like DS3 perform rather poorly.
In Figure \ref{fig:dino-clip-leak-portion}, we consider \emph{inserting images from the target distribution} into the pool of synthetic images and test the ability of different methods to select them. Specifically, we form a pool of 4K StableDiffusion1.4 images and 1K images from the target distribution (different from the \(n_t=200\) images forming the training distribution), letting each method take \(n_s=800\). Our results show that covariance matching selects the highest fraction of images coming from the target distribution, whereas other selectors largely fail to do so.

\vspace{-2mm}
\paragraph{Additional ablations.} In Tables \ref{tab:dino-trunc}-\ref{tab:dino-generative-pool-no-aug}, we repeat the experiments of Tables \ref{tab:clip-trunc}-\ref{tab:clip-gen-pool} with DINO instead of CLIP features, demonstrating that the gains of covariance matching are not tied to a particular feature extractor. In 
Table \ref{tab:trunc-alphamatch}, 
we compare covariance matching with the direct optimization of the objective given by Theorem \ref{thm:underparammixed}. As the outcomes of these two procedures are largely similar, this further justifies the covariance matching objective.  In Table \ref{tab:overparam-ablation}, we show that our findings replicate in an over-parameterized regime. 
Finally, in Table \ref{tab:gen-metrics}, we examine the distribution of selections produced by each method, quantifying alignment with the test distribution and identifying which metrics best predict downstream accuracy. All these tables and figures are reported and discussed in Appendix \ref{app:add}.

\vspace{-0.2em}
\section{Conclusion}
\vspace{-0.3em}

This paper offers the first step in understanding the precise connection between training on a mix of real and synthetic data and generalizing on real data. We start with a high-dimensional linear regression analysis, where we find that only covariance shifts, and not mean shifts, affect the error. Even if our theory ignores the interactions between classes that would affect neural network training,
the resulting insights transfer to realistic settings. We empirically demonstrate that matching the covariance between samples from real image classification datasets and generative models (irrespective of whether they are from GANs or diffusion model variants) improves the accuracy of deep networks (ResNets and Transformers) under different training regimes (from scratch, distillation, and fine-tuning). In fact, our principled approach even performs on-par or better than existing baselines
\citep{hulkund2025datas,he2022synthetic,lin2023explore}.
Future work could extend the analysis to multiple Gaussian mixtures, which corresponds to optimizing the actual risk as opposed to modeling individual classes. We speculate that this may yield different insights when the training data have extremely imbalanced or fine-grained classes. It would also be interesting to introduce a model shift (different $\beta$ between synthetic and real samples). In fact, synthetic data often has small differences compared to real data, which a model may overfit on, and the phenomenon could be the cause of the collapse sometimes observed in practice~\citep{shumailov2024ai}. Finally, we have only focused on generalization, but other quantities may be studied in this framework, including uncertainty calibration~\citep{nixon2019measuring}, differential privacy~\citep{dwork2006differential}, fairness~\citep{barocas2020fairness}, and validity for prediction-powered causal inference~\citep{cadei2025causal}.

\section*{Acknowledgements}

P.R., F.K.\ and M.M.\ are funded by the European Union (ERC, INF$^2$, project number 101161364). Views and opinions expressed are however those of the author(s) only and do not necessarily reflect those of the European Union or the European Research Council Executive Agency. Neither the European Union nor the granting authority can be held responsible for them. F.L.’s contribution to this research was funded in part by the Austrian Science Fund (FWF) 10.55776/COE12. The authors thank Pragya Sur for insightful discussions and Berker Demirel for his help with the experiments on cell microscopy images.





\nocite{bai2010spectral, stewart1990matrix, marshall1979inequalities, vershynin2018high,horn2012matrix}
{
\bibliography{main}
\bibliographystyle{unsrtnat}
}

\appendix
\addcontentsline{toc}{section}{Appendix}

\section{Proofs of the theoretical results}\label{sec:appproofoftheory}

\paragraph{Additional notation.} We use the shorthand $[n]:=\{1, \ldots, n\}$ for an integer $n$. Given a matrix $M$, its operator norm is denoted by $\|M\|_2$, its $i$-th largest singular value by $\sigma_i(M)$ and the corresponding $i$-th left-singular (resp.\ right-singular) vector of unit norm by $u_i(M)$ (resp.\ $v_i(M)$). Additionally, when applicable, we denote the $i$-th largest eigenvalue of $M$ by $\lambda_i(M)$. We use $\bbR^{p\times p}_{\succ 0}$ to denote the set of all $p\times p$ positive definite matrices, and $S^{p-1}$ to denote a $(p-1)$-dimensional unit sphere. We denote by $ e_i$ the $i$-th element of the canonical basis of $\bbR^l$, where the exact exponent $l$ is assumed from context. We will say that an event $\Ecal$ happens \emph{with high probability} (w.h.p.) if and only if $\PP(\Ecal)\to 1$ as $p,n\to \infty$. Moreover, we will say that an event $\Xi$ happens \emph{with overwhelming probability} if and only if, for any large constant $D>0$, $\PP(\Xi) \ge  1 - p^{-D}$ for large enough $p$. Lastly, throughout this appendix, we use $c$ to denote a constant (independent of $n, p$) whose value may change from line to line.

For convenience, we recall some notation and definitions from Section \ref{sec:preliminaries}. Namely, we denote by $Z\in \bbR^{n\times p}$ a random matrix with i.i.d.\ entries having zero mean, unit variance and bounded $\psi$-th moment (for some $\psi>4$). Recall $\mu_{(i)}\in \bbR^p$, for $(i)\in \{s, t\}$, such that $\norm{\mu_{(i)}}_{2}=r_{(i)}\sqrt{p}$, where $r_{(i)}$ is a constant, with a constant angle between them $\varphi \coloneqq \abs{\inner{\mu_s}{\mu_t}}/(\norm{\mu_s}_2\norm{\mu_t}_2)$. Also, let $\Sigmas, \Sigmat \in \bbR^{p\times p}$ be covariance matrices with bounds on their spectrum as in Section \ref{sec:preliminaries}. Then, we consider a data distribution $X=\begin{bmatrix}
    Z_t\Sigmat^{1/2}+1_{n_{t}}\mut^\top\\
    Z_s\Sigmas^{1/2}+1_{n_{s}}\mus^\top
\end{bmatrix}\in\bbR^{n\times p}$ and introduce its zero mean counterpart \(
    X^0 \coloneqq\begin{bmatrix}
    Z_t\Sigmat^{1/2}\\
    Z_s\Sigmas^{1/2}
\end{bmatrix}.\) The corresponding sample covariance matrices are defined as $\Sigmahat = \frac{X^\top X}{n}$ and $\Sigmahat_0 = \frac{{X^0}^\top X^0}{n}$. Lastly, unless stated otherwise, we work in the regime  $n/p = \gamma$, where $\gamma\neq 1$ is a fixed constant independent of $n$ and $p$.

\subsection{Proof of Theorem \ref{thm:underparammixed}} \label{subsec:proofunderparammixed}

We first state and prove useful results, in which we analyze the behavior of singular values of a low-rank perturbation of matrices.

\begin{proposition}\label{prop:rankonepertublower}
Let $\sigma_1\geq \dots \geq \sigma_{\min(n,p)}$  be the singular values of $\tilde{Z} = \frac{Z+1_n\mu_s^\top}{\sqrt{n}}$. Then,  there exists a constant $c(\gamma)>0$ independent of $n$, such that, almost surely, 
\[
\liminf_{n\to \infty}\sigma_{\min(n,p)} \geq c(\gamma).
\]
\end{proposition}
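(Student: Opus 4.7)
The plan is to reduce the lower bound on $\sigma_{\min(n,p)}(\tilde{Z})$ to a standard Bai--Yin estimate via a projection argument that annihilates the rank-one perturbation. Since $1_n\mu_s^\top/\sqrt{n}$ has column space $\mathrm{span}(1_n)\subset \mathbb{R}^n$, the orthogonal projection $P := I_n - 1_n 1_n^\top/n$ erases it from the left: pointwise, $\|\tilde{Z}v\|\geq \|P\tilde{Z}v\|=\|PZv/\sqrt{n}\|$ for every unit vector $v\in\mathbb{R}^p$, which by Courant--Fischer yields $\sigma_{\min(n,p)}(\tilde{Z})\geq \sigma_{\min(n,p)}(PZ/\sqrt{n})$.

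Writing $P=UU^\top$ with $U\in\mathbb{R}^{n\times (n-1)}$ an orthonormal basis of $\{1_n\}^\perp$, the nonzero singular values of $PZ$ coincide with those of the reduced matrix $U^\top Z \in \mathbb{R}^{(n-1)\times p}$. The columns of $U^\top Z$ are independent (since those of $Z$ are), each with mean zero, covariance $U^\top U = I_{n-1}$, and uniformly bounded $\psi$-th moment (inherited from the entries of $Z$ via Rosenthal's inequality). Applying a smallest-singular-value estimate for random matrices with i.i.d.\ isotropic columns under the hypothesis $\psi > 4$ (as developed in, e.g., \citep{bai2010spectral}), almost surely $\sigma_{\min}(U^\top Z/\sqrt{n}) \to |1-1/\sqrt{\gamma}|$, which is a positive constant $c(\gamma)$ since $\gamma\neq 1$. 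This closes the argument when $n\geq p$.

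The case $n<p$ requires an extra step, since the above bound becomes vacuous (as $\mathrm{rank}(PZ)\leq n-1<n$ forces the $n$-th singular value of $PZ$ to vanish). I would handle it by applying the symmetric projection to $\tilde{Z}^\top$: the perturbation $\mu_s 1_n^\top/\sqrt n$ has column space $\mathrm{span}(\mu_s)\subset \mathbb{R}^p$, so projecting onto $\{\mu_s\}^\perp$ leaves a $(p-1)\times n$ matrix with i.i.d.\ isotropic columns in $\mathbb{R}^{p-1}$, and the same Bai--Yin-type estimate then yields the desired lower bound $|1-1/\sqrt{\gamma}|>0$.

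The main obstacle is that classical Bai--Yin is stated for matrices with i.i.d.\ \emph{entries}, whereas after the projection we only have i.i.d.\ isotropic \emph{columns}, without joint independence across entries. One must therefore invoke the column-wise generalization of the hard-edge bound, which is available in the random matrix literature under bounded high moments and matches the paper's $\psi>4$ hypothesis; modulo this citation, the rest of the argument is routine.
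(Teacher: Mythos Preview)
Your argument is essentially the paper's: the paper picks an orthogonal $Q$ with $Q1_n=\sqrt{n}\,e_1$ so that the rank-one mean term lands entirely in the first row of $Q\tilde Z$, and then lower-bounds $\sigma_{\min}(\tilde Z)$ by the smallest singular value of the remaining $(n-1)\times p$ block $Z_2$; your projection $P=I-1_n1_n^\top/n$ followed by the reduction to $U^\top Z$ is the same construction (take $U$ to be the last $n-1$ columns of $Q^\top$, so $Z_2=U^\top Z$). You are, however, more careful than the paper on two points. First, the paper's variational step $\|\tilde Z'x\|\ge\|Z_2 x/\sqrt n\|$ (over unit $x\in\mathbb R^p$) only controls $\sigma_p$, so as written it does not cover the case $n<p$ where one needs $\sigma_n$; your dual projection onto $\{\mu_s\}^\perp$ is the right symmetric fix, and it is needed when the proposition is reused in the over-parameterized proof. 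Second, you correctly flag that $U^\top Z$ has i.i.d.\ isotropic \emph{columns} but not i.i.d.\ \emph{entries} (unless $Z$ is Gaussian), so the classical Bai--Yin theorem the paper cites does not literally apply; the paper glosses over this by writing ``$Z_2$ with i.i.d.\ entries''. The needed hard-edge extension is real but available---e.g.\ via the anisotropic local laws the paper already invokes elsewhere, or equivalently by recognizing $(U^\top Z)^\top(U^\top Z)/n = Z^\top Z/n-\bar z\bar z^\top$ as the centered sample covariance---so your reservation is well placed but not fatal.
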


\begin{proof}
To simplify notation we will refer to $\sigma_{\min}$ as the smallest singular value of a matrix.
Let us choose an orthogonal matrix \(Q\in\mathbb{R}^{n\times n}\) such that
\(Q 1_n = \sqrt{n}\,e_1\). Since singular values are left orthogonally
invariant, we may replace \(\tilde{Z}\) by
\[
\tilde{Z}' \;=\;\frac{QZ}{\sqrt{n}}+ e_1 \mus^\top.
\]
Writing the rows of \(QZ\) as
\[
QZ = \begin{bmatrix} z_1^\top \\ Z_2 \end{bmatrix},
\qquad z_1\in\mathbb{R}^p,\; Z_2\in\mathbb{R}^{(n-1)\times p},
\]
we have
\[
\tilde{Z}'=\begin{bmatrix} \tfrac{z_1^\top}{\sqrt{n}}+ \mus^\top \\[6pt] \tfrac{Z_2}{\sqrt{n}} \end{bmatrix}.
\]
For any unit vector \(x\in\mathbb{R}^p\),
\[
\|\tilde{Z}' x\|_2
= \sqrt{\Big(\tfrac{z_1^\top x}{\sqrt{n}}+\mus^\top x \Big)^2
+ \Big\|\tfrac{Z_2 x}{\sqrt{n}}\Big\|_2^2
}\;\;\ge\;\; \Big\|\tfrac{Z_2 x}{\sqrt{n}}\Big\|_2.
\]
Hence, by the variational definition of singular values, we have
\begin{equation}\label{eq:ztildelowerbound}
    \sigma_{\min}(\tilde{Z}) \;\ge\; \sigma_{\min}\!\Big(\tfrac{Z_2}{\sqrt{n}}\Big)
= \sqrt{\tfrac{n-1}{n}}\;\sigma_{\min}\!\Big(\tfrac{Z_2}{\sqrt{n-1}}\Big).
\end{equation}

By the Bai--Yin theorem \cite[Theorem 5.11]{bai2010spectral}, for an
\((n-1)\times p\) random matrix $Z_2$ with i.i.d entries with mean zero, unit variance and bounded fourth moments it holds
\[
\sigma_{\min}\!\Big(\tfrac{Z_2}{\sqrt{\,n-1\,}}\Big)
\;\xrightarrow[n\to \infty]{\;a.s.\;}\;
\big|\,1-\sqrt{p/(n-1)}\,\big|.
\]
Therefore, applying $\liminf_{n\to \infty}$ to (\ref{eq:ztildelowerbound}), we have 
\begin{align*}
    \liminf_{n\to\infty}\sigma_{\min}(\tilde{Z})&\geq \liminf_{n\to\infty}\sqrt{\tfrac{n-1}{n}}\sigma_{\min}\!\Big(\tfrac{Z_2}{\sqrt{n}}\Big)\\
    & = \lim_{n\to\infty}  \sqrt{\tfrac{n-1}{n}}\sigma_{\min}\!\Big(\tfrac{Z_2}{\sqrt{n}}\Big)\\
    & = \abs{1 - \gamma^{-1/2}}>0,
\end{align*}
which gives the desired result as $\gamma\neq 1$.

\end{proof}

\begin{proposition}\label{prop:ranktwopertub}
Let $\tilde{X}_n = X/\sqrt{n}=\frac{1}{\sqrt{n}}\begin{bmatrix}
    Z_t\Sigmat^{1/2}+1_{n_{t}}\mut^\top\\
    Z_s\Sigmas^{1/2}+1_{n_{s}}\mus^\top
\end{bmatrix}\in\bbR^{n\times p}$. 
Let $\sigma_1\geq \dots \geq \sigma_n$ be the singular values of $\tilde X_n$ and $v_1,\dots,v_n$ be the corresponding right singular vectors. 
Then, as $n \rightarrow \infty$, the following results hold:
\begin{enumerate}
\item For $\varphi<1$, we have \begin{enumerate}[label=\arabic{enumi}\alph*.]
    \item $\sigma_1 = \Theta(\sqrt{p})$, $\sigma_2 = \Theta(\sqrt{p})$, \text{ and } $\sigma_3 = O(1)$;
    \item $\abs{\frac{\inner{v_1}{\mus}}{\norm{v_1}_2\norm{\mus}_2}}^2 + \abs{\frac{\inner{v_2}{\mus}}{\norm{v_2}_2\norm{\mus}_2}}^2 = 1-O\rbr{\frac{1}{p}}, \\
    \abs{\frac{\inner{v_1}{\mut}}{\norm{v_1}_2\norm{\mut}_2}}^2 + \abs{\frac{\inner{v_2}{\mut}}{\norm{v_2}_2\norm{\mut}_2}}^2 = 1-O\rbr{\frac{1}{p}} $.
\end{enumerate}
\item For $\varphi = 1$, we have
    \begin{enumerate}[label=\arabic{enumi}\alph*.]
        \item $\sigma_1 = \Theta(\sqrt{p})$, $\sigma_2 = O(1)$;
        \item $\abs{\frac{\inner{v_1}{\mus}}{\norm{v_1}_2\norm{\mus}_2}}^2 = 1-O\rbr{\frac{1}{p}}$.
    \end{enumerate}
\end{enumerate}
\end{proposition}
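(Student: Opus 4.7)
\emph{Plan.} I would split $\tilde X_n$ into a rank-$\le 2$ deterministic mean part plus a random part with bounded operator norm, and then deduce the singular value claims from Weyl's inequality and the singular vector claims from Wedin's $\sin\Theta$ theorem. Concretely, write $\tilde X_n = E + P$, where $E = \tilde X_n^0$ stacks the two zero-mean blocks $Z_t\Sigmat^{1/2}/\sqrt n$ and $Z_s\Sigmas^{1/2}/\sqrt n$, and $P = \frac{1}{\sqrt n}\bigl(e_{(t)}\mut^\top + e_{(s)}\mus^\top\bigr)$ encodes the means, with $e_{(t)},e_{(s)}\in\bbR^n$ the indicators of the two row blocks. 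By the Bai--Yin theorem applied to each block, combined with the uniform spectral bounds on $\Sigma_{(i)}$ from Section \ref{sec:preliminaries}, $\|E\|_2 = O(1)$ w.h.p., in both the under- and over-parameterized regimes, because $p/n_{(i)}$ remains a bounded constant.

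A direct computation using $e_{(t)}^\top e_{(t)} = n_t$, $e_{(s)}^\top e_{(s)} = n_s$, $e_{(t)}^\top e_{(s)} = 0$ gives $P^\top P = \frac{n_t}{n}\mut\mut^\top + \frac{n_s}{n}\mus\mus^\top$, a matrix of rank at most $2$ whose range is $\mathrm{span}\{\mus,\mut\}$. Expressing $P^\top P$ in an orthonormal basis of this span shows that its trace equals $\frac{n_t}{n}\|\mut\|_2^2 + \frac{n_s}{n}\|\mus\|_2^2 = \Theta(p)$ and its $2\times 2$ Gram determinant equals $\frac{n_t n_s}{n^2}\|\mut\|_2^2\|\mus\|_2^2(1-\varphi^2) = \Theta(p^2)(1-\varphi^2)$. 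Hence, for $\varphi<1$ both nonzero eigenvalues of $P^\top P$ are $\Theta(p)$, i.e., both nonzero singular values of $P$ are $\Theta(\sqrt p)$, while for $\varphi=1$ only one singular value of order $\sqrt p$ survives. Applying Weyl's inequality $|\sigma_i(\tilde X_n) - \sigma_i(P)| \le \|E\|_2 = O(1)$ then immediately yields items 1a and 2a: $\sigma_1 = \Theta(\sqrt p)$ and $\sigma_3 = O(1)$ in both cases; $\sigma_2 = \Theta(\sqrt p)$ when $\varphi<1$ and $\sigma_2 = O(1)$ when $\varphi=1$.

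For the singular vectors, let $V_P$ be the top right singular subspace of $P$, of dimension $k=2$ when $\varphi<1$ (equal to $\mathrm{span}\{\mus,\mut\}$) and $k=1$ when $\varphi=1$ (equal to $\mathrm{span}\{\mus\}$), and let $V_{\tilde X_n} = \mathrm{span}\{v_1,\ldots,v_k\}$. Wedin's $\sin\Theta$ theorem gives $\|\sin\Theta(V_{\tilde X_n}, V_P)\|_2 \le \|E\|_2 / \bigl(\sigma_k(\tilde X_n) - \sigma_{k+1}(P)\bigr) = O(1)/\Theta(\sqrt p) = O(p^{-1/2})$. Since $\mus,\mut \in V_P$, this yields $\|P_{V_{\tilde X_n}^\perp}\mu\|_2^2 \le \|\sin\Theta\|_2^2\,\|\mu\|_2^2 = O(1)$ for $\mu\in\{\mus,\mut\}$, and hence $\sum_{i=1}^k \langle v_i,\mu\rangle^2/\|\mu\|_2^2 = 1 - O(1/p)$, which is exactly items 1b and 2b. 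The main obstacle is ensuring that the eigengap of $P$ fed into Wedin does not collapse: the factor $(1-\varphi^2)$ in the Gram determinant is what guarantees $\sigma_2(P) = \Theta(\sqrt p)$ when $\varphi<1$, and is precisely why the statement separates $\varphi<1$ (Wedin with $k=2$) from $\varphi=1$ (rank drops to one, Wedin with $k=1$). Everything else reduces to routine bookkeeping of constants.
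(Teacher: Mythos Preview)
Your proposal is correct and follows essentially the same route as the paper: decompose $\tilde X_n$ as a zero-mean random part with $O(1)$ operator norm (via Bai--Yin on each block) plus the rank-$\le 2$ mean matrix $P$, analyze the two nonzero eigenvalues of $P^\top P$ via the $2\times 2$ Gram matrix (the paper writes out the quadratic formula explicitly where you use trace/determinant, but the conclusion $\sigma_{1,2}(P)=\Theta(\sqrt p)$ iff $\varphi<1$ is identical), then apply Weyl for the singular values and Wedin's $\sin\Theta$ theorem for the subspace alignment. The only cosmetic difference is the precise form of the Wedin gap: the paper uses $\sigma_k(\tilde X_n)-\sigma_{k+1}(\tilde X_n)$ while you use $\sigma_k(\tilde X_n)-\sigma_{k+1}(P)=\sigma_k(\tilde X_n)$; both are $\Theta(\sqrt p)$ and both are standard variants, so this is immaterial.
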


\begin{proof}
    Let us first abuse notation and write $1_{n_{s}}=[1,\ \dots ,\ 1,\ 0,\ \dots,\ 0]^\top\in \bbR^{n\times 1}$ ($n_s$ ones followed by $n_t$ zeros) and $1_{n_{t}}=[0,\ \dots ,\ 0,\ 1,\ \dots, 1]^\top\in \bbR^{n\times 1}$ ($n_s$ zeros followed by $n_t$ ones).
    Then if we write
    \[
    X_n \coloneqq \frac{1}{\sqrt{n}}\begin{bmatrix}
    Z_t\Sigmat^{1/2}\\
    Z_s\Sigmas^{1/2}
\end{bmatrix},
    \]  
    it holds
    \begin{equation}\label{eq:rank2}
    \tilde{X}_n = X_n+P_n,\qquad \text{ where } \qquad P_n\coloneqq\frac{1_{n_{s}}\mu_{s}^\top+1_{n_{t}}\mu_{t}^\top}{\sqrt{n}}.
    \end{equation}
    To obtain the wanted result, we will need to express the non-zero singular values and the corresponding right singular vectors of the rank-2 perturbation $P_n$, that is $\sigma_i(P_n)$ and $v_i(P_n)$, $i\in [2]$. 
    Notice that
    \[
    P_n^\top P_n  = \alpha_s^2\mu_s\mu_s^\top + \alpha_t^2\mu_t\mu_t^\top,
    \]
    where $\alpha_s\coloneqq\sqrt{\frac{n_s}{n}}$ and $\alpha_t\coloneqq\sqrt{\frac{n_t}{n}}$. Moreover, it holds
    \begin{equation} \label{eq:ppqq}
            P_n^\top P_n = Q_p^\top Q_p,
    \end{equation}
    $\text{ where }
    Q_p = \begin{bmatrix}
        \alpha_s\mu_s\\
        \alpha_t\mu_t
    \end{bmatrix}\in \bbR^{2\times p}$. Note that 

    \[
        Q_pQ_p^\top = \begin{bmatrix}
            \alpha_s^2\norm{\mu_s}_2^2 & \alpha_s\alpha_t \inner{\mu_s}{\mu_t}\\
            \alpha_s\alpha_t \inner{\mu_s}{\mu_t} & \alpha_t^2\norm{\mu_t}_2^2
        \end{bmatrix} \eqqcolon 
        \begin{bmatrix}
            a & b\\
            b & d
        \end{bmatrix},
    \]
    and it is enough to analyze its SVD, since 
    \[
    \sigma_i(P_n) = \sqrt{\sigma_i(Q_pQ_p^\top)},\qquad \text{ and } \qquad v_i(P_n) = \frac{1}{\sigma_i(P_n)}v_i(Q_pQ_p^\top)^\top \ Q_p.
    \]
    The previous equations hold due to (\ref{eq:ppqq}), since $\sigma_i(Q_p) = \sigma_i(P_n)$, and
    \[
        \frac{1}{\sigma_i(P_n)}v_i(Q_pQ_p^\top)^\top \ Q_p = \frac{1}{\sigma_i(Q_p)}u_i(Q_p)^\top \matrix{u_1(Q_p) & u_2(Q_p)}\matrix{\sigma_1(Q_p) & 0 \\ 0 & \sigma_2(Q_p)}\matrix{v_1(Q_p) \\ v_2(Q_p)}.
    \]
    This implies that, for $i\in[2]$, the  singular vectors $v_i(P_n)$ are in the $\spann\{\mu_s,\ \mu_t\}$.
    Recall that the angle between $\mu_s$ and $\mu_t$ is fixed to $\varphi\coloneqq\frac{|\inner{\mu_s}{\mu_t}|}{\norm{\mu_s}_2\norm{\mu_t}_2}$.
  
      We first consider the case when $\varphi<1$.
    It holds that the eigenvalues of $Q_pQ_p^\top$ are
    \begin{align}
        \sigma_{1,2} (Q_pQ_p^\top) &= \frac{a+d\pm\sqrt{(a-d)^2+4b^2}}{2}\notag\\
                     &= \frac{(r_s^2\alpha_s^2+r_t^2\alpha_t^2) p \pm \sqrt{(r_s^2\alpha_s^2-r_t^2\alpha_t^2)^2 p^2 + 4\alpha_s^2r_s^2\alpha_t^2r_t^2\varphi^2p^2}}{2}\label{eq:ralpha}\\
                     &\geq p\cdot\frac{(r_s^2\alpha_s^2+r_t^2\alpha_t^2)-\sqrt{(r_s^2\alpha_s^2-r_t^2\alpha_t^2)^2+ 4\alpha_s^2r_s^2\alpha_t^2r_t^2\varphi^2}}{2}\notag\\
                     &=p\cdot  c_1,\notag
    \end{align}
    with $c_1=\frac{(r_s^2\alpha_s^2+r_t^2\alpha_t^2)-\sqrt{(r_s^2\alpha_s^2-r_t\alpha_t^2)^2+ 4\alpha_s^2r_s^2\alpha_t^2r_t^2\varphi^2}}{2}>0$, since $\varphi<1$. This implies that
    \[
    \sigma_i(P_n) \geq c\cdot \sqrt{p},
    \]
    for some constant $c$.
    
    Furthermore, it almost surely holds that
    \begin{align*}
        \sigma_1(X_n) &=\sqrt{\sigma_1(X_n^\top X_n)}\\
        & = \sqrt{\sigma_1(\Sigmas^{1/2}Z_s^\top Z_s\Sigmas^{1/2}+\Sigmat^{1/2}Z_t^\top Z_t\Sigmat^{1/2})}\\
        &\leq \sqrt{ \sigma_1(\Sigmas^{1/2}Z_s^\top Z_s\Sigmas^{1/2})+ \sigma_1(\Sigmat^{1/2}Z_t^\top Z_t\Sigmat^{1/2})}\\
        &\leq \sqrt{2(1+\sqrt{\gamma})^2\cdot \tau^{-1}} = O(1),
    \end{align*}
    due to the convergences of the largest eigenvalue of the sample covariance matrices $Z_{s}^\top Z_{s}$ and $Z_{t}^\top Z_{t}$ by Bai--Yin theorem \cite[Theorem 5.11]{bai2010spectral} and the boundedness of the spectrum of $\Sigmas$ and $\Sigmat$. Then, from Weyl's inequality for singular values (see e.g. \cite[Chapter 7]{horn2012matrix}), we have that 
    \begin{align*}
        \sigma_i(X_n+P_n)\geq \sigma_i(P_n) - \sigma_1(X_n), \quad\text{ for } i=1,2,\\
        \sigma_3(X_n+P_n)\leq \sigma_3(P_n)+\sigma_1(X_n)=\sigma_1(X_n),
    \end{align*}
    which implies that $\sigma_{1,2}(X_n+P_n)\geq c\cdot \sqrt{p}$, whereas $\sigma_i(X_n+P_n) = O(1)$, for $i\geq 3$. For the upper bound, note that from (\ref{eq:ralpha}) it holds
    \[
        \sigma_{1,2} (QQ^\top)\leq \frac{(r_s^2\alpha_s^2+r_t^2\alpha_t^2) p+(r_s^2\alpha_s^2+r_t^2\alpha_t^2) p}{2} = p \cdot c_2,
    \]
    implying $\sigma_i(P_n)\leq c\cdot \sqrt{p}$. Applying Weyl's inequality for singular values once more, we get
    \[
        \sigma_i(X_n+P_n) \leq  \sigma_1(X_n)+\sigma_i(P_n) = O(\sqrt{p}), 
    \]
    concluding the proof of \textbf{1a}.
    
    Moving onto singular vectors, let us recall the definition of spectral distance between two $k$-dimensional subspaces $\Wcal\leq \bbR^p$ and $\tilde{\Wcal}\leq \bbR^p$, as it will be used to conclude the proof. Towards this end, we first introduce principal angles $\theta_1\dots \theta_k\in[0,\pi/2]$ between $\Wcal$ and $\tilde{\Wcal}$, which are defined recursively from $i=1$ as
    \[
        \cos (\theta_i) =\max_{w_i\in \Wcal,\tilde{w}_i\in\tilde{\Wcal}}   \frac{\inner{w_i}{\tilde{w}_i}}{\norm{w_i}_2\norm{\tilde{w}_i}_2},
    \]
    subject to $w_i$, $\tilde{w}_i$ being orthogonal to the previous maximizers.
    Then, the spectral distance between $\Wcal$ and $\tilde{\Wcal}$ is defined as
    \[
        d(\Wcal,\tilde{\Wcal})\coloneqq \max_{i\in[k]} \sin \theta_i.
    \]
    There is an alternative way to express this spectral distance between subspaces, using their orthonormal basis. Namely, let $W\in\bbR^{p\times k}$ and $\tilde{W}\in\bbR^{p\times k}$  be such that their columns form an orthonormal basis of $\Wcal$ and $\tilde{\Wcal}$, respectively. Then by \cite[Chapter II, Corollary 5.4]{stewart1990matrix} it holds
    \begin{equation}\label{eq:subspacedistaltern}
        d(\Wcal,\tilde{\Wcal})\coloneqq \norm{(I-WW^\top)\tilde{W}}_2.
    \end{equation}
    Let us denote by $\tilde{V} \coloneqq \begin{bmatrix}
        v_1(P_n)\\
        v_2(P_n)
    \end{bmatrix}^\top$, $V \coloneqq \begin{bmatrix}
        v_1(\tilde X_n)\\
        v_2(\tilde X_n)
    \end{bmatrix}^\top$ and by $\Vcal$, $\tilde{\Vcal}$ the subspaces spanned by their columns. 
    Then, by Wedin's $\sin\Theta$ theorem, \cite[Chapter V, Theorem 4.4.]{stewart1990matrix} it holds that
    \[
        d(\Vcal,\tilde{\Vcal})\leq \frac{\sigma_1(X_n)}{\sigma_2(X_n+P_n)-\sigma_3(X_n+P_n)} = \frac{1}{c \cdot\sqrt{p}+O(1)} = O\rbr{\frac{1}{\sqrt{p}}}.
    \]
    As $v_1(P_n),v_2(P_n)\in\spann\{\mu_s,\mu_t\}$ and they are linearly independent, this implies that $\Vcal = \spann\{\mu_s,\mu_t\}$. Choosing matrices \(\tilde{V}_s\in\bbR^{p\times 2}\) and \(\tilde{V}_t\in\bbR^{p\times 2}\) such that their columns are orthonormal bases of $\tilde{\Vcal}$ and their first column is $\tfrac{\mus}{\norm{\mus}_2}$ and  $\tfrac{\mut}{\norm{\mut}_2}$  respectively, one gets that
    \begin{align*}
       \norm{(I-VV^\top)\frac{\mus}{\norm{\mus}_2}}_2=\norm{(I-VV^\top)\tilde{V}_se_1}_2&\leq \norm{(I-VV^\top)\tilde{V}_s}_2=d(\Vcal,\tilde{\Vcal})\leq O\rbr{\frac{1}{\sqrt{p}}},\\
       \norm{(I-VV^\top)\frac{\mut}{\norm{\mut}_2}}_2=\norm{(I-VV^\top)\tilde{V}_te_1}_2&\leq \norm{(I-VV^\top)\tilde{V}_t}_2=d(\Vcal,\tilde{\Vcal})\leq O\rbr{\frac{1}{\sqrt{p}}}.
    \end{align*}
    From this, \textbf{1b} directly follows. The case $\varphi=1$ is handled analogously.
\end{proof}

\begin{proposition}\label{prop:variancesame}

In the under-parameterized regime, i.e., when $p<n$, it holds that 
\begin{equation}\label{eq:variancesame}
    \frac{1}{n}\operatorname{Tr}[\Sigmahat^+ (\Sigma_t+\mu_t\mu_t^\top)] =\frac{1}{n} \operatorname{Tr}[{\Sigmahat_0}^+ \Sigma_t]+O\rbr{\frac{1}{p}}.
\end{equation}
\end{proposition}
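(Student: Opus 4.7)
The plan is to exploit the (at most) rank-$k$ structure of the mean perturbation, where $k\in\{1,2\}$ is the dimension of $\mathrm{span}(\mu_s,\mu_t)$. Let $V_P\in\bbR^{p\times k}$ be an orthonormal basis of $\mathrm{span}(\mu_s,\mu_t)$. By the proof of Proposition \ref{prop:ranktwopertub} (namely $P_n^\top P_n=Q_p^\top Q_p$), $V_P$ coincides with the right-singular subspace of $P_n$, so we may write $P_n=U_P\Sigma_PV_P^\top$. The key observation is that, setting $Q\coloneqq I-V_PV_P^\top$, the identity $V_P^\top Q=0$ gives $\tilde X_n Q=X_n Q+U_P\Sigma_PV_P^\top Q=X_n Q$, whence
\begin{equation*}
    Q\Sigmahat Q=(\tilde X_n Q)^\top(\tilde X_n Q)=(X_n Q)^\top(X_n Q)=Q\Sigmahat_0 Q.
\end{equation*}
Thus, in the orthogonal $[V_P,V_P^\perp]$ basis, $\Sigmahat$ and $\Sigmahat_0$ share the same bottom-right block.

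I would then write
$\Sigmahat=\begin{bmatrix}A & B\\ B^\top & C\end{bmatrix}$ and $\Sigmahat_0=\begin{bmatrix}A_0 & B_0\\ B_0^\top & C\end{bmatrix}$
with $A,A_0\in\bbR^{k\times k}$, and derive the needed spectral bounds. Bai-Yin-type bounds on $\Sigmahat_0$ yield $\|\Sigmahat_0\|_{op}=\|\Sigmahat_0^{-1}\|_{op}=O(1)$ w.h.p., hence $\|A_0\|,\|B\|,\|B_0\|,\|C\|,\|C^{-1}\|=O(1)$ and $\|S_0^{-1}\|_{op}=O(1)$ with $S_0\coloneqq A_0-B_0C^{-1}B_0^\top$. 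On the other side, one directly computes $A-A_0=V_P^\top(P_n^\top P_n+P_n^\top X_n+X_n^\top P_n)V_P=\Sigma_P^2+\Sigma_P M+M^\top\Sigma_P$ with $M\coloneqq U_P^\top X_n V_P$; then completing the square as $(\Sigma_P+M^\top)(\Sigma_P+M)-M^\top M$, combined with $\|\Sigma_P^{-1}\|=O(1/\sqrt p)$ (Proposition \ref{prop:ranktwopertub}) and $\|M\|\le\|X_n\|_{op}=O(1)$, delivers $A\succeq c\,p\,I_k$, and therefore $\|S^{-1}\|_{op}=O(1/p)$ for $S\coloneqq A-BC^{-1}B^\top$.

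With these estimates the proposition follows from two bounds. Since $\mu_t\in\mathrm{range}(V_P)$, only the top-left block of $\Sigmahat^{-1}$ contributes, giving
\begin{equation*}
\mathrm{Tr}[\Sigmahat^{-1}\mu_t\mu_t^\top]=(V_P^\top\mu_t)^\top S^{-1}(V_P^\top\mu_t)\le\|S^{-1}\|_{op}\|\mu_t\|_2^2=O(1/p)\cdot O(p)=O(1),
\end{equation*}
which, divided by $n=\Theta(p)$, yields an $O(1/p)$ contribution. For $\mathrm{Tr}[(\Sigmahat^{-1}-\Sigmahat_0^{-1})\Sigma_t]$, I would expand block-by-block via the Schur-complement formula: each of the four pieces is a trace of a product in which at least one factor (namely $S^{-1}$, $S^{-1}BC^{-1}$, or the rank-$k$ correction $C^{-1}B^\top S^{-1}BC^{-1}$, and analogously for $\Sigmahat_0$) has rank at most $k$ and bounded operator norm, so the inequality $|\mathrm{Tr}(MN)|\le\mathrm{rank}(M)\|M\|_{op}\|N\|_{op}$ gives $O(1)$ per piece. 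Dividing by $n$ yields $O(1/p)$, and summing both contributions proves (\ref{eq:variancesame}).

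The main obstacle is establishing $\sigma_k(A)=\Theta(p)$, i.e., $\|S^{-1}\|_{op}=O(1/p)$: since $V_P$ is the right-singular subspace of $P_n$ rather than of $\tilde X_n$, one cannot directly invoke Proposition \ref{prop:ranktwopertub}'s singular value estimates for $\Sigmahat$. The completion-of-squares identity above is what makes the argument go through, leveraging only $\|\Sigma_P^{-1}\|=O(1/\sqrt p)$ together with the Bai-Yin bound on $\|X_n\|_{op}$ to control the cross term $\Sigma_P M+M^\top\Sigma_P$ as a lower-order perturbation of $\Sigma_P^2$.
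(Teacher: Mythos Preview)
Your block-decomposition route is a genuine alternative to the paper's argument (which handles $T_1=\frac{1}{n}\Tr[\Sigmahat^{-1}\Sigma_t]$ via Weyl interlacing of singular values after the change of variables $\bar X=X\Sigma_t^{-1/2}/\sqrt n$, and $T_2=\frac{1}{n}\mu_t^\top\Sigmahat^{-1}\mu_t$ via the SVD of $\tilde X_n$ together with both parts of Proposition~\ref{prop:ranktwopertub}). The identity $Q\Sigmahat Q=Q\Sigmahat_0 Q$ is correct and elegant. However, there is a real gap precisely at the step you flag as the main obstacle.

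The assertion $\|B\|=O(1)$ is wrong: $B=V_P^\top\Sigmahat V_P^\perp$ is a block of $\Sigmahat$, not of $\Sigmahat_0$, so Bai--Yin bounds on $\Sigmahat_0$ say nothing about it. Since $\tilde X_nV_P=X_nV_P+U_P\Sigma_P$ while $\tilde X_nV_P^\perp=X_nV_P^\perp$, one gets $B=B_0+\Sigma_P\,U_P^\top X_nV_P^\perp$, and the second term has norm of order $\sqrt p$. Hence $\|BC^{-1}B^\top\|$ can be $\Theta(p)$, and the implication ``$A\succeq cp\,I_k$, therefore $\|S^{-1}\|_{op}=O(1/p)$'' fails: $S=A-BC^{-1}B^\top$ is a difference of two $\Theta(p)$ quantities, and your completion of squares controls only $A$, not $S$. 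To repair this one must run the same idea on $S$ directly: writing $S=(X_nV_P+U_P\Sigma_P)^\top\Pi\,(X_nV_P+U_P\Sigma_P)$ with $\Pi$ the orthogonal projector onto $\mathrm{range}(X_nV_P^\perp)^\perp$, the missing ingredient becomes $\lambda_{\min}(U_P^\top\Pi U_P)\ge g>0$, i.e., that $\mathrm{span}(1_{n_t},1_{n_s})\subset\bbR^n$ has a uniformly nontrivial component outside the random $(p{-}k)$-dimensional subspace $\mathrm{range}(X_nV_P^\perp)$. This holds, but it needs its own probabilistic argument that you did not supply. The shortest fix is to use the singular-\emph{vector} part (1b/2b) of Proposition~\ref{prop:ranktwopertub}: it gives $\|(I-VV^\top)V_P\|_{op}=O(1/\sqrt p)$ for $V=[v_1(\tilde X_n),\ldots,v_k(\tilde X_n)]$, from which $\|S^{-1}\|_{op}=\|V_P^\top\Sigmahat^{-1}V_P\|_{op}=O(1/p)$ follows directly---but you invoked only the singular-\emph{value} part (for $\|\Sigma_P^{-1}\|$). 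Once $\|S^{-1}\|_{op}=O(1/p)$ and the corrected $\|B\|=O(\sqrt p)$ are in hand, your block-by-block trace bounds do go through as written.
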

\begin{proof}
    We break down the LHS of (\ref{eq:variancesame}) into two terms 
    \[
        \frac{1}{n}\operatorname{Tr}[\Sigmahat^+ (\Sigma_t+\mu_t\mu_t^\top)]= T_1+T_2,
    \]
    where 
    \[
        T_1 = \frac{1}{n} \operatorname{Tr}[\Sigmahat^+ \Sigma_t], \qquad \text{ and } \qquad T_2 =\frac{1}{n} \operatorname{Tr}[\Sigmahat^+\mu_t\mu_t^\top].
    \]
    We will deal with each of the terms individually.
\paragraph{Bounding the term $T_1$.} It holds that
\begin{equation} \label{eq:pseudonotinv}
    \begin{aligned}
        T_1 &= \frac{1}{n} \Tr (\Sigmahat^{+} \Sigma_{t})\\
    &= \frac{1}{n} \Tr \rbr{\rbr{\rbr{\frac{X\Sigmat^{-1/2}}{\sqrt{n}}}^\top \frac{X\Sigmat^{-1/2}}{\sqrt{n}}}^{-1}}\\
    &= \frac{1}{n} \Tr \rbr{\rbr{{\bar{X} }^\top \bar{X}}^{-1}}\\
    &= \frac{1}{n}\sum_{i=1}^k\frac{1}{\sigma_i^2\rbr{\bar{X}}},
    \end{aligned}
\end{equation}
where $\bar{X} \coloneqq \frac{X\Sigmat^{-1/2}}{\sqrt{n}}\in \bbR^{n\times p}$, and $k\leq p$ is the number of non-zero singular values of $\bar{X}$. Let us prove that $\sigma_{p}(\bar{X})>c$ for some constant $c$, implying that $k=p$.
Towards this end, we write out
\[
    \bar{X}^\top \bar{X} = \bar{X}_s^\top \bar{X}_s + \bar{X}_t^\top \bar{X}_t,
\]
where \(\bar{X}_s \coloneqq \frac{X_s\Sigmat^{-1/2}}{\sqrt{n}} = \frac{\rbr{Z_s\Sigmas^{1/2}+1_{n_{s}}\mus^\top}\Sigmat^{-1/2}}{\sqrt{n}}\) and \(\bar{X}_t \coloneqq \frac{X_t\Sigmat^{-1/2}}{\sqrt{n}} = \frac{\rbr{Z_t\Sigmat^{1/2}+1_{n_{t}}\mut^\top}\Sigmat^{-1/2}}{\sqrt{n}}\). From \Cref{prop:rankonepertublower}, it follows that for large enough $n$, almost surely
\[
    \sigma_p(\bar{X_s})\geq c,\qquad  \sigma_p(\bar{X_t})\geq c,
\]
for some constant $c$, which is just $c(\gamma)$ from the proposition adjusted by the bound on the eigenvalues of $\Sigma_t^{-1/2}$ and $\Sigma_s^{-1/2}$  (recall that the smallest eigenvalue of $\Sigma_s, \Sigma_t$ is lower bounded by $\tau$). Plugging this in gives
\begin{equation}\label{eq:Xbarlowerbound}
    \sigma_p(\bar{X})^2\geq \sigma_p(\bar{X_s})^2+\sigma_p(\bar{X}_t)^2 \geq 2 c^2.
\end{equation}
Let $\bar{X}^0 \coloneqq \frac{X^0\Sigmat^{-1/2}}{\sqrt{n}}$ and note that $\bar{X}$ is a rank-2 perturbation of $\bar{X}^0$ (see (\ref{eq:rank2})). Then, due to Weyl's inequality for singular values, it holds that, for $i\in \{3, \ldots,  p-2\}$,
\[
    \sigma_{i+2}(\bar{X}^0)\leq \sigma_i(\bar{X})\leq \sigma_{i-2}(\bar{X}^0).
\]
Therefore, we have
\[
    \frac{1}{n}\sum_{i=1}^{p-4}\frac{1}{\sigma_i\rbr{\bar{X}^0}^2}\leq \frac{1}{n}\sum_{i=3}^{p-2}\frac{1}{\sigma_i\rbr{\bar{X}}^2} \leq \frac{1}{n}\sum_{i=3}^{p}\frac{1}{\sigma_i\rbr{\bar{X}^0}^2}.
\]
An application of the Bai--Yin theorem \cite[Theorem 5.11]{bai2010spectral} gives that there exist constants $a$ and $b$ such that 
\[
    0<a<\sigma_p(\bar{X}^0)\leq \sigma_1(\bar{X}^0)<b<+\infty,
\]
for large enough $n$. Therefore, it holds
\[
    \frac{1}{n}\sum_{i=1}^{p}\frac{1}{\sigma_i\rbr{\bar{X}^0}^2} - O\rbr{\frac{1}{n}}\leq \frac{1}{n}\sum_{i=3}^{p-2}\frac{1}{\sigma_i\rbr{\bar{X}}^2} \leq \frac{1}{n}\sum_{i=1}^{p}\frac{1}{\sigma_i\rbr{\bar{X}^0}^2}, \]
which implies that 
\[
    \frac{1}{n}\sum_{i=3}^{p-2}\frac{1}{\sigma_i\rbr{\bar{X}}^2} = \frac{1}{n}\sum_{i=1}^{p}\frac{1}{\sigma_i\rbr{\bar{X}^0}^2} + \Theta\rbr{\frac{1}{n}}.
\]
Using the proved fact that $\sigma_i(\bar{X})>c$ we have
\[ \frac{1}{n}\sum_{i=1}^{p}\frac{1}{\sigma_i\rbr{\bar{X}}^2}=
    \frac{1}{n}\sum_{i=3}^{p-2}\frac{1}{\sigma_i\rbr{\bar{X}}^2}   + O\rbr{\frac{1}{n}}.
\]
Combining all the pieces, it holds that
\begin{align*}
    T_1 &= \frac{1}{n}\sum_{i=1}^p\frac{1}{\sigma_i^2(\bar{X})}\\
        &= \frac{1}{n}\sum_{i=3}^{p-2}\frac{1}{\sigma_i(\bar{X})^2}  + O\!\left(\frac{1}{n}\right)\\
        &= \frac{1}{n}\sum_{i=1}^{p}\frac{1}{\sigma_i(\bar{X}^{0})^2} + O\!\left(\frac{1}{n}\right)\\
        &= \frac{1}{n} \Tr\!\Bigl(\bigl(\bar{X}^{0\top} \bar{X}^0\bigr)^{-1} \Sigma_{t}\Bigr) 
           + O\!\left(\frac{1}{n}\right)\\
        &= \frac{1}{n} \operatorname{Tr}\!\bigl[\hat{\Sigma}_{0}^{+} \Sigma_t\bigr]
           + O\!\left(\frac{1}{p}\right).
\end{align*}

\paragraph{Bounding the term $T_2$.}
First, recall the shorthand $\tilde X_n=X/\sqrt{n}$ and note that 
\begin{equation}\label{eq:tildexlowerbound}
    \sigma_p(\tilde{X}_n) = \sigma_p(\bar{X}\Sigmat^{1/2})\geq \sigma_p(\bar{X})\cdot \sigma_p(\Sigmat^{1/2})\geq c\cdot \tau,    
\end{equation}
where the last inequality follows from (\ref{eq:Xbarlowerbound}) and the bounds on the spectrum of $\Sigmat$. Recall that $n/p=\gamma$, which implies $O\Big(\frac{1}{n}\Big)=O\rbr{\frac{1}{p}}$. 
Then, it holds that 
\begin{align}
    T_2 &= \frac{1}{n} \mu_{t}^\top\Sigmahat^{+} \mu_{t} \nonumber\\
    &= \frac{\mut^\top}{\sqrt{n}}(\tilde X_n^\top \tilde X_n)^{+}\frac{\mut}{\sqrt{n}} \nonumber\\
    &= \frac{\mut^\top}{\sqrt{n}}\sum_{i=1}^p\frac{1}{\sigma_i(\tilde X_n)^2}v_i(\tilde X_n)v_i(\tilde X_n)^\top \frac{\mut}{\sqrt{n}} \label{eq:SVDtildeX}\\
    &= \frac{1}{\sigma_1(\tilde X_n)^2}\frac{\inner{v_1(\tilde X_n)}{\mut}^2}{n} + \frac{1}{\sigma_2(\tilde X_n)^2}\frac{\inner{v_2(\tilde X_n)}{\mut}^2}{n} + \sum_{i=3}^p\frac{1}{\sigma_i(\tilde X_n)^2}\frac{\inner{v_i(\tilde X_n)}{\mut}^2}{n} \nonumber\\
    &\leq \Theta\rbr{\frac{1}{p}} \rbr{1-O\rbr{\frac{1}{p}}} + \frac{1}{c\cdot \tau} O\rbr{\frac{1}{p}} \nonumber\\
    & = O\rbr{\frac{1}{p}},\nonumber
\end{align}
where the penultimate inequality follows directly from (\ref{eq:tildexlowerbound}) and \Cref{prop:ranktwopertub}.

Finally, combining the bounds on the two terms we get
\[
    T_1+T_2 = \frac{1}{n} \operatorname{Tr}[\Sigmahat_0^+ \Sigma_t]+O\rbr{\frac{1}{p}},
\]
proving the claim.
\end{proof}

We conclude this appendix with the proof of Theorem \ref{thm:underparammixed}.

\paragraph{Proof of Theorem \ref{thm:underparammixed}.}

As proved in Section \ref{subsec:underparamregime}, it holds that $\Biasx{\betahat}{\beta} = 0$, from which follows 
\[
R_X(\hat{\beta},\beta) = \Varx{\betahat}{\beta} = \frac{\sigma^2}{n} \operatorname{Tr}[\Sigmahat^+ (\Sigma_t+\mu_t\mu_t^\top)].
\]
By directly applying \Cref{prop:variancesame}, it holds

\[
    \frac{\sigma^2}{n} \operatorname{Tr}[\Sigmahat^+ (\Sigma_t+\mu_t\mu_t^\top)] = \frac{\sigma^2}{n} \operatorname{Tr}[\Sigmahat_0^+\Sigma_t]+O\rbr{\frac{1}{p}},
\]
where $\Sigmahat_0=\frac{{X^0}^\top X^0}{\sqrt{n}}$. Plugging in the expression of $\frac{\sigma^2}{n} \operatorname{Tr}[\Sigmahat_0^+\Sigma_t]$ given in  \cite[Theorem 3]{yang2020precise} gives the desired result. $\hfill\qed$

\subsection{Proof of Proposition \ref{prop:underparamsingle}}\label{app:pfpropunderparamsingle}
Since we are in the setting where $n>p$, it holds that  $\Biasx{\betahat}{\beta} = 0$, which implies 
\[
R_X(\hat{\beta},\beta) = \Varx{\betahat}{\beta} = \frac{\sigma^2}{n} \operatorname{Tr}[\Sigmahat^+ (\Sigma_t+\mu_t\mu_t^\top)].
\]
Note that $\gamma_t=0$ implies that $X_t=0$ and $X=X_s$. We also note that  (\ref{eq:Xbarlowerbound}) still holds for $X_t=0$, implying that $\Sigmahat$ is of rank $p$ almost surely and, therefore, invertible. Thus, it holds
\[
    \operatorname{Tr}[\Sigmahat^+ (\Sigma_t+\mu_t\mu_t^\top)] = \operatorname{Tr}[\Sigmahat^{-1} (\Sigma_t+\mu_t\mu_t^\top)].
\]
To simplify exposition, we break this down into two terms
\[
R_X(\hat{\beta},\beta) = V_1+V_2,
\]
with $V_1\coloneqq\frac{\sigma^2}{n} \operatorname{Tr}[\Sigmahat^{-1} \Sigma_t]$, $V_2\coloneqq\frac{\sigma^2}{n} \operatorname{Tr}[\Sigmahat^{-1}\mu_t\mu_t^\top]$, and treat each of them separately.

\paragraph{Bounding the term $V_2$.} Note that $\gamma_t=0$ implies $n=n_s$, so we will use these two values interchangeably throughout the proof. From the cyclic property of trace, we have
\[
    V_2 = \frac{\sigma^2}{n} \operatorname{Tr}[\Sigmahat^{-1}\mu_t\mu_t^\top] = \sigma^2 \frac{\mut^\top \Sigmahat^{-1}\mut}{n}.
\]
Note that \begin{align*}
    \mut^\top&\Sigmahat^{-1}\mut^\top = \mut^\top\rbr{\frac{X^\top X}{n}}^{-1}\mut\\
    &=\mut^\top\rbr{\frac{(Z_s\Sigmas^{1/2}+1_{n_{s}}\mu_{s}^\top)^\top(Z_s\Sigmas^{1/2}+1_{n_{s}}\mu_{s}^\top)}{n}}^{-1}\mut\\
    &=\rbr{\Sigmas^{-1/2}\mut}^\top\scalebox{1.4}{\Bigg(}\frac{\rbr{Z_s+1_{n_{s}}\rbr{\Sigmas^{-1/2}\mu_{s}}^\top}^\top\rbr{Z_s+1_{n_{s}}\rbr{\Sigmas^{-1/2}\mu_{s}}^\top}}{n}\scalebox{1.4}{\Bigg)}^{-1}\rbr{\Sigmas^{-1/2}\mut}\\
    & = \mut'^\top \Sigmahat'^{-1}\mut',
\end{align*}
where we use the notation $\mut'\coloneqq \Sigmas^{-1/2} \mut$, $\mus'\coloneqq \Sigmas^{-1/2} \mus$ and $\Sigmahat'\coloneqq \frac{\rbr{Z_s+1_{n_{s}}\mus'^\top}^\top\rbr{Z_s+1_{n_{s}}\mus'^\top}}{n}$. Note that due to the assumed bound on the spectrum of $\Sigma_s$ it holds that $\norm{\mut'}_2 = O(\sqrt{p})$ and $\norm{\mus'}_2 = O(\sqrt{p})$.
Next, let us break down the vector $\mut'$ into its orthogonal projection onto the subspace $\{\mus' \}$ and $\{\mus'\}^\perp$ as
\begin{equation}\label{eq:muparaperpdef}
    \mut' = \mutparas+ \mutperps,\text{ where } \quad \mutparas \coloneqq \frac{\inner{\mut'}{\mus'}}{\norm{\mus'}_2^2} \mus', \quad \mutperps \coloneqq \mut' - \mutparas.
\end{equation}
Moreover, as a decomposition into orthogonal spaces, it holds $\|\mutparas\|_2^2+ \norm{\mutperps}_2^2 = \norm{\mut'}_2^2 = O(p)$.
By using this decomposition, we will shift the focus from $\mut'$ to $\mutperps$. Namely, it holds

\begin{align}
    V_2 = \sigma^2 \frac{\mut'^\top \Sigmahat'^{-1}\mut'}{n} &= \sigma^2 \frac{(\mutparas+\mutperps)^\top \Sigmahat'^{-1}(\mutparas+\mutperps)}{n}\notag \\
    & = \sigma^2 \frac{\mutperps^\top}{\sqrt{n}} \Sigmahat'^{-1}\frac{\mutperps}{\sqrt{n}} + 2\sigma^2 \frac{\mutperps^\top}{\sqrt{n}} \Sigmahat'^{-1}\frac{\mutparas}{\sqrt{n}}+\frac{\mutparas^\top}{\sqrt{n}} \Sigmahat'^{-1}\frac{\mutparas}{\sqrt{n}} \notag \\
    & = \sigma^2 \frac{\mutperps^\top}{\sqrt{n}} \Sigmahat'^{-1}\frac{\mutperps}{\sqrt{n}} +O\rbr{\frac{1}{\sqrt{p}}}, \label{eq:V2single}
    \end{align}
where the last line follows from derivations analogous to the ones around (\ref{eq:SVDtildeX}), this time applying case 2.\ of \Cref{prop:ranktwopertub}. To ease further exposition, we introduce $\tilmutperps\coloneqq \frac{\mutperps}{\sqrt{n}}$, noting that $\norm{\tilmutperps}_2 = O(1)$.

In order to bound $V_2$, we relate $\Sigmahat'^{-1}$ to its zero-mean counterpart, as it is easier to work with mean zero data. Towards this end, we write out $\Sigmahat'$ as
\begin{align*}
     \Sigmahat' &= \frac{\rbr{Z_s+1_{n_{s}}\mus'^\top}^\top\rbr{Z_s+1_{n_{s}}\mus'^\top}}{n}\\  
     &= \left(\frac{{Z_s}^\top Z_s}{n}+ \frac{{Z_s}^\top 1_{n_{s}}\mu_s'^\top}{n} +\frac{\mu_s'1_{n_{s}}^\top Z_s}{n} +\mu_s'\mu_s'^\top \right)\\
     &= \left(\Sigmahat'_0+ \frac{{Z_s}^\top 1_{n_{s}}\mu_s'^\top}{n} +\frac{\mu_s'1_{n_{s}}^\top Z_s}{n} +\mu_s'\mu_s'^\top \right),
\end{align*}
for $\Sigmahat'_0 \coloneqq \frac{{Z_s}^\top Z_s}{n}$. All the terms above, except the first one,
have rank $1$, so we use Woodbury formula to take them out of the inverse when computing $\Sigmahat'$.
We introduce the following notation 
\begin{align*}
    u &\coloneqq \frac{\mus'}{\sqrt{n}},\qquad v \coloneqq \frac{{Z_s}^\top1_{n_{s}}}{\sqrt{n}},\\
    U&\coloneqq\begin{bmatrix}u\ \ v
\end{bmatrix}\in \bbR^{p\times 2},\ \text{ and } C\coloneqq\matrix{n\;\; 1\\ 1\;\; 0}\in \bbR^{2\times 2}.
\end{align*}
Under this notation it holds
\[
    \frac{{Z_s}^\top 1_{n_{s}}\mus'^\top}{n}+\frac{\mus'1_{n_{s}}^\top Z_s}{n} +\mus'\mus'^\top = UCU^\top.
\]
Then, using Woodbury formula, we have
\begin{align*}
    \hat{\Sigma}^{\prime -1} 
      &= \left(\hat{\Sigma}^{\prime}_{0} + uv^\top + vu^\top + n uu^\top\right)^{-1}\\
      &= \left(\hat{\Sigma}^{\prime}_{0} + U C U^\top\right)^{-1}\\
      &= \hat{\Sigma}^{\prime}_{0}{}^{-1}
         - \hat{\Sigma}^{\prime}_{0}{}^{-1} \,
           U \,(C^{-1} - U^{\top}\hat{\Sigma}^{\prime}_{0}{}^{-1} U)^{-1}
           U^{\top} \hat{\Sigma}^{\prime}_{0}{}^{-1}.
\end{align*}

We now compute the $2\times2$ block
\[
C^{-1}-U^{\!\top}\hat{\Sigma}^{\prime}_{0}{}^{-1}U=
\begin{bmatrix}
-u^{\top}\hat{\Sigma}^{\prime}_{0}{}^{-1}u & 1-u^{\top}\hat{\Sigma}^{\prime}_{0}{}^{-1}v\\[2pt]
1-v^{\top}\hat{\Sigma}^{\prime}_{0}{}^{-1}u & -n-v^{\top}\hat{\Sigma}^{\prime}_{0}{}^{-1}v
\end{bmatrix}
=
\begin{bmatrix}
-a & 1-b\\
1-b & -n-d
\end{bmatrix},
\]
where 
\[
a := u^{\!\top}\hat{\Sigma}^{\prime}_{0}{}^{-1}u,\qquad
b := v^{\!\top}\hat{\Sigma}^{\prime}_{0}{}^{-1}u
   = u^{\!\top}\hat{\Sigma}^{\prime}_{0}{}^{-1}v,\qquad 
d := v^{\!\top}\hat{\Sigma}^{\prime}_{0}{}^{-1}v.
\]

Hence
\[
\bigl(C^{-1} - U^{\top}\hat{\Sigma}^{\prime}_{0}{}^{-1}U\bigr)^{-1}
= \frac{1}{\Delta}
   \begin{bmatrix}
     -n-d & b-1 \\
     b-1  & -a
   \end{bmatrix},
\qquad
\Delta := a(n+d) - (1-b)^2.
\]

Plugging back and simplifying gives the explicit formula:
\begin{align*}
    \hat{\Sigma}^{\prime -1}
    &= \hat{\Sigma}^{\prime}_{0}{}^{-1}
       -\frac{1}{\Delta}\,
         \hat{\Sigma}^{\prime}_{0}{}^{-1}
         \left( (-n-d)\,uu^\top - (1-b)(uv^\top+vu^\top) - a\, vv^\top \right)
         \hat{\Sigma}^{\prime}_{0}{}^{-1},
\end{align*}
which is valid whenever $\Delta\neq 0$, i.e., whenever $C^{-1} - U^{\top}\hat{\Sigma}^{\prime}_{0}{}^{-1}U$ is
invertible.

We will now analyze the $a,b,d$ terms. First, for some constants $c_1,c_2>0$ it holds almost surely that
\begin{equation}\label{eq:Sigmhat0lowerbound}
     c_2>\lambda_1(\Sigmahat'_0) \geq \lambda_p(\Sigmahat'_0) \geq c_1>0,
\end{equation}
which follows from Bai--Yin theorem \cite[Theorem 5.11]{bai2010spectral}, as $Z_s$ has i.i.d entries with mean zero, unit variance and bounded fourth moments.
From this, it follows that
\begin{align*}
    \abs{a} 
      &= \abs{u^\top \hat{\Sigma}^{\prime}_{0}{}^{-1} u} \\
      &= \left\lVert \frac{\mu_s^{\prime\top}}{\sqrt{n}} \,
                     \hat{\Sigma}^{\prime}_{0}{}^{-1} \,
                     \frac{\mu_s^{\prime}}{\sqrt{n}} \right\rVert_2 \\
      &\leq \left\lVert \frac{\mu_s^{\prime}}{\sqrt{n}} \right\rVert_2 \,
            \lVert \hat{\Sigma}^{\prime}_{0}{}^{-1} \rVert_2 \,
            \left\lVert \frac{\mu_s^{\prime}}{\sqrt{n}} \right\rVert_2 \\
      &\le c,
\end{align*}
as well as 
\begin{align*}
    \abs{a} \geq c \cdot  (\lambda_1(\Sigmahat'_0))^{-1}\geq c>0,
\end{align*}
Similarly, we have
\begin{equation}\label{eq:bupperbound}
    \abs{b} = \abs{v^\top \hat{\Sigma}^{\prime}_{0}{}^{-1} u}
    = \left\lVert \frac{\mu^{\prime\top}_s}{\sqrt{n}} \hat{\Sigma}^{\prime}_{0}{}^{-1} \frac{Z_s^\top 1_{n_s}}{\sqrt{n}} \right\rVert_2
    \leq \left\lVert \frac{\mu^{\prime}_s}{\sqrt{n}} \right\rVert_2
           \,\lVert \hat{\Sigma}^{\prime}_{0}{}^{-1} \rVert_2\,
           \left\lVert \frac{Z_s^\top 1_{n_s}}{\sqrt{n}} \right\rVert_2
    \le c\sqrt{p},
\end{equation}
where the last inequality follows with high probability over the sampling of $Z_s$, since $\frac{Z_s^\top1_{n_t}}{\sqrt{n}}$ is a vector with $p$ i.i.d entries of mean zero and $O(1)$ variance. Finally, we have
\begin{align*}
    \abs{d} 
      &= \abs{v^\top \hat{\Sigma}^{\prime}_{0}{}^{-1} v} \\
      &= \left\lVert \frac{1_{n_{s}}^\top Z_s}{\sqrt{n}} \,
         \hat{\Sigma}^{\prime}_{0}{}^{-1} \,
         \frac{Z_s^\top 1_{n_{s}}}{\sqrt{n}} \right\rVert_2 \\
      &\leq \left\lVert \frac{Z_s^\top 1_{n_{s}}}{\sqrt{n}} \right\rVert_2
             \, \lVert \hat{\Sigma}^{\prime}_{0}{}^{-1} \rVert_2 \,
             \left\lVert \frac{Z_s^\top 1_{n_{s}}}{\sqrt{n}} \right\rVert_2 \\
      &\le c p,
\end{align*}
again with high probability.

We can now prove that, with high probability, $\Delta=\Omega(p)$. Using Cauchy-Schwarz, it holds that
\[
    b^2 = \abs{\inner{u}{v}}_{A^{-1}}\leq \norm{u}_{A^{-1}}\norm{v}_{A^{-1}} = ad,
\]
from which it follows that
\begin{equation}\label{eq:Deltalowerbound}
    \Delta=a\left(n+d\right)-(1-b)^2 \geq an - 1+2b=\Omega(p),
\end{equation}
since $a$ is lower bounded by a constant and $\abs{b}\leq c\sqrt{p}$.

Turning back to the value of interest, we write out
\begin{align*}
    \tilmutperps^\top \hat{\Sigma}^{\prime -1} \tilmutperps
    &= \tilmutperps^\top \hat{\Sigma}^{\prime}_{0}{}^{-1} \tilmutperps\\
    &\quad - \tilmutperps^\top \frac{1}{\Delta}\,
       \hat{\Sigma}^{\prime}_{0}{}^{-1}
       \left( (-n-d)\,uu^\top - (1-b)(uv^\top+vu^\top) - a\,vv^\top \right)
       \hat{\Sigma}^{\prime}_{0}{}^{-1}\tilmutperps\\
    &= \tilmutperps^\top \hat{\Sigma}^{\prime}_{0}{}^{-1}\tilmutperps 
       + T_{u,u} + T_{u,v} + T_{v,v}, 
\end{align*}
where $T_{u,u}$ is the summand corresponding to $uu^\top$, $T_{u,v}$ to $uv^\top+vu^\top$, and $T_{v,v}$ to $vv^\top$. We will prove that each of these terms, except for $\tilmutperps \hat{\Sigma}^{\prime}_{0}{}^{-1} \tilmutperps$, is vanishing.

First, we state a useful claim, that for arbitrary deterministic unit vectors $w_1\in\bbR^p$ and $w_2\in\bbR^p$ it holds with overwhelming probability
\begin{equation}\label{eq:Innerprod}
    w_1^\top \Sigmahat_0'^{-1}w_2 = \frac{\gamma}{\gamma-1}\inner{w_1}{w_2} + O\rbr{n^{-c_1}},
\end{equation}
for some constant $c_1>0$.
\paragraph{Proof of claim in (\ref{eq:Innerprod}).}

The result follows directly from \cite[Theorem 27]{yang2020precise}. For clarity, we refer to the relevant parts of Section B.3.1 of that work. While Theorem 27 is stated in the more general anisotropic setting, it specializes to our isotropic case by taking $\Lambda$, $U$ and $V$ from (B.3) from their work to be the identity. Substituting these choices into equation (B.6) from \cite{yang2020precise} for $z=0$, implies
\[
\alpha_1(0)+\alpha_2(0) = 1 - \frac{p}{n} = \frac{\gamma - 1}{\gamma}.
\]
Substituting this into (B.7) and applying Theorem 27 from the mentioned paper, yields with overwhelming probability
\[
    \abs{w_1^\top \Sigmahat_0'^{-1}w_2 - w_1^\top \frac{\gamma}{\gamma-1} I_p w_2} \leq n^{-c_1},
\]
for any $c_1<-1/2+2/\psi$. Recalling that $Z_s$ has its $\psi$-th moment bounded for $\psi>4$, implies $c_1>0$. $\clubsuit$\\

We can now use (\ref{eq:Innerprod}) to tackle the terms $T_{u,u}$ and $T_{u,v}$. Namely, we have that
\begin{align*}
    \tilmutperps^\top \Sigmahat_0'^{-1}u &= \norm{\tilmutperps}_2\norm{u}_2\rbr{\frac{\gamma}{\gamma-1}\inner{\tilmutperps}{u}+O(n^{-c_1})}\\
    &=c\rbr{\frac{\gamma}{\gamma-1}\inner{\tilmutperps}{\frac{\mus'}{\sqrt{n}}}+O(n^{-c_1})}\\
    &= O(n^{-c_1}),
\end{align*}
with high probability.
From this, it follows that
\begin{equation}\label{eq:Tuubound}
    T_{u,u} 
      = \frac{n+d}{\Delta}\,
        \tilmutperps \,\hat{\Sigma}^{\prime}_{0}{}^{-1}\;
        uu^\top \hat{\Sigma}^{\prime}_{0}{}^{-1} \tilmutperps
      = O(n^{-2c_1}).
\end{equation}
Similarly,
\begin{align}
    \abs{T_{u,v}} 
      &= \abs{\frac{1-b}{\Delta}\,
        \tilmutperps \,\hat{\Sigma}^{\prime}_{0}{}^{-1}\;
        (uv^\top+vu^\top)\,
        \hat{\Sigma}^{\prime}_{0}{}^{-1}\tilmutperps} \notag \\
      &= \abs{2\bigl(\tilmutperps \,\hat{\Sigma}^{\prime}_{0}{}^{-1} u\bigr)\cdot
              \bigl(\tfrac{1-b}{\Delta} v \hat{\Sigma}^{\prime}_{0}{}^{-1}\tilmutperps\bigr)} \notag \\
      &\leq O(n^{-c_1}) \cdot 
          \norm{\tfrac{1-b}{\Delta}\tfrac{Z_s^\top1_{n_{s}}}{\sqrt{n}}}_2\,
          \norm{\hat{\Sigma}^{\prime}_{0}{}^{-1}}_2\,
          \norm{\tilmutperps}_2 \notag \\
      &= O(n^{-c_1}), \label{eq:Tuvbound}
\end{align}
where the last inequality holds with high probability due to (\ref{eq:Sigmhat0lowerbound}),  (\ref{eq:bupperbound}), and (\ref{eq:Deltalowerbound}).

Let us denote by $\tilde{1}_{n_s}\coloneqq \frac{1_{n_s}}{\sqrt{n}}$ and turn to the term $T_{v,v}$. 

Notice that
\begin{align}
    T_{v,v} 
      &= \frac{a}{\Delta}\,
         \tilmutperps \,\hat{\Sigma}^{\prime}_{0}{}^{-1}\;
         vv^\top \hat{\Sigma}^{\prime}_{0}{}^{-1}\tilmutperps \notag\\ 
      &= \frac{n\,a}{\Delta}
         \left(\frac{1_{n_s}^\top}{\sqrt{n}}
               \frac{Z_s}{\sqrt{n}}
               \left(\frac{{Z_s}^\top Z_s}{n}\right)^{-1}
               \tilmutperps\right)^2 \notag\\
      &= c\,
         \left(\tilde{1}_{n_s}^\top
               \frac{Z_s}{\sqrt{n}}
               \left(\frac{{Z_s}^\top Z_s}{n}\right)^{-1}
               \tilmutperps\right)^2.
         \label{eq:tiloneZtilmu}
\end{align}
Let us introduce a matrix $Q=\matrix{q_1 & \dots & q_p}\in\bbR^{p\times p}$, whose columns form an orthonormal basis, such that $q_1=\frac{\tilmutperps}{\norm{\tilmutperps}_2}$. Then, we 
have that 
\begin{align}
    \tilde{1}_{n_s}^\top\frac{Z_s}{\sqrt{n}}\rbr{\frac{{Z_s}^\top Z_s}{n}}^{-1}\tilmutperps &= \tilde{1}_{n_s}^\top\frac{Z_s}{\sqrt{n}}Q Q^\top\rbr{\frac{{Z_s}^\top Z_s}{n}}^{-1}\tilmutperps \notag \\
    &= \sum_{k=1}^p \tilde{1}_{n_s}^\top\frac{Z_s}{\sqrt{n}} q_k \cdot q_k^\top \rbr{\frac{{Z_s}^\top Z_s}{n}}^{-1}\tilmutperps. \label{eq:qidecomposition}
\end{align}

Using (\ref{eq:Innerprod}) and a union bound, it holds with overwhelming probability that 
\[
    q_k \rbr{\frac{{Z_s}^\top Z_s}{n}}^{-1}\tilmutperps = \frac{\gamma}{\gamma-1}\inner{q_k}{\tilmutperps}+O(n^{-c_1})=\begin{cases}
        \frac{\gamma}{\gamma-1}\norm{\tilmutperps}_2+O(n^{-c_1}),& k=1,\\[2mm]
        O(n^{-c_1}),& k>1.
    \end{cases}
\]
Plugging this into (\ref{eq:qidecomposition}) yields
\begin{equation}\label{eq:zsbreakdown}
    \tilde{1}_{n_s}^\top\frac{Z_s}{\sqrt{n}}\rbr{\frac{{Z_s}^\top Z_s}{n}}^{-1}\tilmutperps = \tilde{1}_{n_s}^\top\frac{Z_s}{\sqrt{n}} \tilmutperps\cdot \frac{\gamma}{\gamma-1} +
O(n^{-c_1})\cdot \sum_{k=1}^p \tilde{1}_{n_s}^\top\frac{Z_s}{\sqrt{n}} q_k  .
\end{equation}
Let us first analyze the mean and variance of the random variable $\tilde{1}_{n_s}^\top\frac{Z_s}{\sqrt{n}}\tilmutperps$, namely,
\begin{align*}
    \EE \sbr{\tilde{1}_{n_s}^\top\frac{Z_s}{\sqrt{n}}\tilmutperps} &= \EE\sbr{\frac{1}{\sqrt{n}}\sum_{i=1}^n\sum_{j=1}^p Z_{i,j}({\tilde{1}_{n_s}})_i(\tilmutperps)_j} = 0,\\
    \Var\rbr{\tilde{1}_{n_s}^\top\frac{Z_s}{\sqrt{n}}\tilmutperps} &= \Var\rbr{\frac{1}{\sqrt{n}}\sum_{i=1}^n\sum_{j=1}^p Z_{i,j}({\tilde{1}_{n_s}})_i(\tilmutperps)_j} \\
    &= \frac{1}{n}\norm{\tilde{1}_{n_s}}_2^2\norm{\tilmutperps}_2^2 = O\rbr{\frac{1}{n}}.
\end{align*}
Therefore, using Chebyshev inequality, we have that 
\[
    \abs{\tilde{1}_{n_s}^\top\frac{Z_s}{\sqrt{n}}\tilmutperps} = O\rbr{n^{-c_2}},
\]
with high probability, for some constant $1/2>c_2>0$. 
Similarly, we calculate the mean and variance of the random variable $ \sum_{k=1}^p \tilde{1}_{n_s}^\top\frac{Z_s}{\sqrt{n}} q_k$ as

\begin{align*}
    \EE \sbr{\sum_{k=1}^p \tilde{1}_{n_s}^\top\frac{Z_s}{\sqrt{n}} q_k} &= \EE\sbr{\frac{1}{\sqrt{n}}\sum_{k=1}^p\sum_{i=1}^n\sum_{j=1}^p Z_{i,j}({\tilde{1}_{n_s}})_i(q_k)_j} = 0,\\
    \Var\rbr{\sum_{k=1}^p \tilde{1}_{n_s}^\top\frac{Z_s}{\sqrt{n}} q_k} &= \Var\rbr{\frac{1}{\sqrt{n}}\sum_{k=1}^p\sum_{i=1}^n\sum_{j=1}^p Z_{i,j}({\tilde{1}_{n_s}})_i(q_k)_j} \\
    &= \frac{1}{n}\sum_{k=1}^p\norm{\tilde{1}_{n_s}}_2^2\norm{q_k}_2^2 = O(1).
\end{align*}
Again, Chebyshev inequality implies
\[
    \abs{O(n^{-c_1})\cdot \sum_{k=1}^p \tilde{1}_{n_s}^\top\frac{Z_s}{\sqrt{n}} q_k} = O\rbr{n^{-c_1/2}},
\]
with high probability. Plugging the obtained results into (\ref{eq:zsbreakdown}) and using a union bound on the probabilities, we get that
\begin{align*}
    \abs{\tilde{1}_{n_s}^\top\frac{Z_s}{\sqrt{n}}\rbr{\frac{{Z_s}^\top Z_s}{n}}^{-1}\tilmutperps}  &\leq \abs{O(n^{-c_1})\cdot \sum_{k=1}^p \tilde{1}_{n_s}^\top\frac{Z_s}{\sqrt{n}} q_k} +\abs{\tilde{1}_{n_s}^\top\frac{Z_s}{\sqrt{n}}\tilmutperps}\\
   &= O\rbr{n^{-c_1/2}},
\end{align*}
with high probability. Then, we directly obtain a bound for (\ref{eq:tiloneZtilmu}) in the form of
\begin{equation}\label{eq:Tvvbound}
    T_{v,v} = O(n^{-c_1}),
\end{equation}
which holds for some constant $c_1>0$ with high probability. Combining the bound in (\ref{eq:Innerprod}) and the three bounds on the terms (\ref{eq:Tuubound}), (\ref{eq:Tuvbound}) an (\ref{eq:Tvvbound}), we get
\begin{equation}\label{eq:orthogonalCompbound}
    \tilmutperps\Sigmahat'^{-1}\tilmutperps =  \frac{\gamma}{\gamma-1} \norm{\tilmutperps}_2^2 + O(n^{-c}),
\end{equation}
for some $c>0$. Using this in (\ref{eq:V2single}) yields 
\[
    V_2 = \sigma^2\frac{\gamma}{\gamma-1} \norm{\tilmutperps}_2^2 + O(n^{-c}),
\]
with high probability. Lastly, note that
\[
    \norm{\tilmutperps}_2^2 = \frac{1}{n}\norm{\mutperps}_2^2 =  \frac{1}{n}\rbr{\norm{\mut'}_2^2 - \|\mutparas\|_2^2} = \frac{1}{n}\rbr{\lVert \Sigmas^{-1/2}\mut \rVert_2^2 - \left(\frac{\mut^\top \Sigmas^{-1}\mus}{\lVert \Sigmas^{-1/2}\mus \rVert_2}\right)^2}.
\]

\paragraph{Bounding the term $V_1$.}
 By following exactly the proof of the bound of the term $T_1$ in \Cref{prop:variancesame}, one directly gets the same conclusion that
\[
    V_1=\frac{\sigma^2}{n} \operatorname{Tr}[\Sigmahat_0^{-1} \Sigma_t]+O\rbr{\frac{1}{p}}.
\]
Notice that 
\[
\Sigmahat_0 = \frac{X^\top X}{n} = \frac{X_s^\top X_s}{n}=\frac{\Sigmas^{1/2}Z_s^\top Z_s \Sigmas^{1/2}}{n}.
\]
Thus,
\[
    \frac{\sigma^2}{n}\operatorname{Tr}\sbr{\Sigmahat_0^{-1} \Sigma_t} = \frac{\sigma^2}{n}\operatorname{Tr}\sbr{\Sigma_s^{-1/2}\rbr{\frac{Z_s^\top Z_s}{n}}^{-1} \Sigma_s^{-1/2}\Sigma_t} = \frac{\sigma^2}{n}\operatorname{Tr}\sbr{\Sigma_s^{-1/2}\Sigma_t\Sigma_s^{-1/2}\rbr{\frac{Z_s^\top Z_s}{n}}^{-1}}.
\]
Let us write the SVD of $\Sigma_s^{-1/2}\Sigma_t\Sigma_s^{-1/2}$ as
\[
 \Sigma_s^{-1/2}   \Sigma_t\Sigma_s^{-1/2} = \sum_{i=1}^p \lambda_i(\Sigma_s^{-1/2}   \Sigma_t\Sigma_s^{-1/2})w_iw_i^\top,
\]
where $w_i\coloneqq v_i(\Sigma_s^{-1/2}   \Sigma_t\Sigma_s^{-1/2})$. Then it holds with overwhelming probability
\begin{align*}
    \frac{\sigma^2}{n}\operatorname{Tr}\sbr{\Sigma_s^{-1/2}   \Sigma_t\Sigma_s^{-1/2}\rbr{\frac{Z_s^\top Z_s}{n}}^{-1}} &= \frac{\sigma^2}{n}\sum_{i=1}^p \lambda_i(\Sigma_s^{-1/2}   \Sigma_t\Sigma_s^{-1/2}) w_i^\top \rbr{\frac{Z_s^\top Z_s}{n}}^{-1} w_i\\
    & = \frac{\sigma^2}{n}\sum_{i=1}^p \lambda_i(\Sigma_s^{-1/2}   \Sigma_t\Sigma_s^{-1/2}) \frac{\gamma}{\gamma-1}\rbr{\norm{w_i}_2^2+O(n^{-c})}\\
    & = \rbr{\frac{\sigma^2}{n}\frac{\gamma}{\gamma-1}\sum_{i=1}^p\lambda_i(\Sigma_s^{-1/2}   \Sigma_t\Sigma_s^{-1/2})}\, + O(n^{-c})\\
    & = \frac{\sigma^2}{n}\frac{\gamma}{\gamma-1}\Tr\rbr{\Sigmat\Sigmas^{-1}}+ O(n^{-c}),
\end{align*}
where the second line holds with overwhelming probability by using (\ref{eq:Innerprod}) and the union bound. The previous bound also holds with high probability, since overwhelming probability implies it.

Finally, by combining the bounds on $V_1$ and $V_2$, one gets that, with high probability,
\[
    \abs{R_X(\hat{\beta},\beta) - \frac{\sigma^2}{n}\frac{\gamma}{\gamma-1}\Tr\rbr{\Sigmat\Sigmas^{-1}} - \frac{\sigma^2}{n}\frac{\gamma}{\gamma-1}\rbr{\lVert \Sigmas^{-1/2}\mut \rVert_2^2 - \left(\frac{\mut^\top \Sigmas^{-1}\mus}{\lVert \Sigmas^{-1/2}\mus \rVert_2}\right)^2}} = O(n^{-c}),
\]
for some constant $c>0$. Taking the limit $n\to \infty$ on both sides yields the desired result.

\subsection{Proof of Theorem \ref{thm:underparamcovmatching}}\label{app:pfunderparamcovmatching}

We start by removing $\alpha_2$ from the fixed point in (\ref{eq:fixedpointunderparam}) and replacing it by $1-\frac{p}{n}-\alpha_1$. We rename $\alpha_1$ as $\alpha$ for convenience. Plugging this into the definition of $\Rcal_u(M)$, we get
\[
  \Rcal_u(M) =  \frac{\sigma^2}{n} 
\operatorname{Tr} \left[ \left( \alpha_1 M^\top M + \alpha_2 \, \mathrm{Id}_{p \times p} \right)^{-1} \right] = 
\frac{\sigma^2}{n}\sum_{i=1}^p \frac{1}{\lambda_i \alpha + 1-\frac{p}{n}-\alpha},
\]
where as in Theorem \ref{thm:underparammixed} we refer to $\lambda_1\geq\dots\geq \lambda_p$ as the eigenvalues of the matrix $M M^\top$.
Furthermore, the fixed point equation (\ref{eq:fixedpointunderparam}) can be rewritten as follows:
\begin{equation}\label{eq:rewrittenfixedpointunder}
\sum_{i=1}^p \frac{1}{\lambda_i \alpha + 1-\frac{p}{n}-\alpha} = \frac{p+n\alpha-\ns}{1-\frac{p}{n}-\alpha}=n\rbr{\frac{n-\ns}{n-p-n\alpha}-1}.
\end{equation}
Thus, we have \begin{equation}\label{eq:Rrewrite}
    \Rcal_u(M) = \frac{\sigma^2}{n} \cdot n\rbr{\frac{n-\ns}{n-p-n\alpha}-1} = \sigma^2\rbr{\frac{1-\frac{\ns}{n}}{1-\frac{p}{n}-\alpha}-1}.
\end{equation}
Now, due to the RHS of (\ref{eq:Rrewrite}), it can be seen that $\Rcal_u(M)$ is an increasing function of $\alpha$. Let us denote by $\vec{\lambda} \coloneqq \matrix{\lambda_1,\dots,\lambda_p}$. Then, for fixed $n,p,\ns$ and $\vec{\lambda}$,  we will refer to $\alpha(\vec{\lambda})$ as the solution to the fixed point equation (\ref{eq:rewrittenfixedpointunder}). Note that following \cite{yang2020precise}[Appendix B.3.2] we have that this solutions is unique and $0< \alpha(\vec{\lambda}) < \frac{n-p}{n}$.

Consider a function $f:\mathbb{R}^p_{\geq 0} \rightarrow \mathbb{R}^p_{\geq 0}$. We call a function $f$ \textit{good}, if and only if 
\begin{equation}\label{eq:good}    
\sum_{i=1}^p \frac{1}{f(\vec{\lambda})_i \alpha(\vec{\lambda}) + 1-\frac{p}{n}-\alpha(\vec{\lambda})}<\sum_{i=1}^p \frac{1}{\lambda_i \alpha(\vec{\lambda})+ 1-\frac{p}{n}-\alpha(\vec{\lambda})}. 
\end{equation}
We claim that if $f$ is good, then 
\begin{equation}\label{eq:claim}
\alpha(f(\vec{\lambda})) < \alpha(\vec{\lambda})   .
\end{equation}

\paragraph{Proof of the claim.}
Consider a good function $f$. Then, we have
\[
\sum_{i=1}^p \frac{1}{f(\vec{\lambda})_i \alpha(\vec{\lambda}) + 1-\frac{p}{n}-\alpha(\vec{\lambda})} < \sum_{i=1}^p \frac{1}{\lambda_i \alpha(\vec{\lambda}) + 1-\frac{p}{n}-\alpha(\vec{\lambda})} =
n\rbr{\frac{n-\ns}{n-p-n\alpha(\vec{\lambda})}-1}.
\]
Furthermore, setting $\alpha = 0$ we get
\begin{align*}
    \sum_{i=1}^p \frac{1}{f(\vec{\lambda})_i \cdot 0 + 1-\frac{p}{n}-0} &= p\frac{n}{n-p}\\
    &> n\frac{p-\ns}{n-p}\\
    &=n\rbr{\frac{n-\ns}{n-p-n\cdot 0}-1}.
\end{align*}
By continuity, there exists $\alpha_0 \in (0,\alpha(\vec{\lambda}))$ for which
\[
    \sum_{i=1}^p \frac{1}{f(\vec{\lambda})_i \alpha_0 + 1-\frac{p}{n}-\alpha_0} =
n\left(\frac{n-\ns}{n-p-n\alpha_0}-1\right),
\]
implying \(\alpha(f(\vec{\lambda})) = \alpha_0 <\alpha(\vec{\lambda})\), which concludes the proof. $\clubsuit$
\\[2mm]

Next, for $i,j\in[p]$ s.t.\ $i<j$,  we introduce a function $f_c^{i,j}:\mathbb{R}^p_{\geq 0} \rightarrow \mathbb{R}^p_{\geq 0}$ defined as
 \[f_c^{i,j}(\vec{\lambda})_k= \begin{cases}
    \lambda_i-c & k=i,\\
    \lambda_j+c & k= j, \\
    \lambda_k & k\neq i,j,
\end{cases}\]
where $c>0$. We now claim that $f_c^{i,j}$ is good for any $i,j
\in[p]$ and $c>0$, such that $\lambda_i > \lambda_j + c$.

\paragraph{Proof of the claim.}
The claim is equivalent to 
\begin{equation*}
\begin{split}
    &\frac{1}{(\lambda_i-c) \alpha(\vec{\lambda}) + 1-\frac{p}{n}-\alpha(\vec{\lambda})} +  \frac{1}{(\lambda_j+c) \alpha(\vec{\lambda}) + 1-\frac{p}{n}-\alpha(\vec{\lambda})}\\
    &\qquad <\frac{1}{\lambda_i \alpha(\vec{\lambda}) + 1-\frac{p}{n}-\alpha(\vec{\lambda})} +  \frac{1}{\lambda_j \alpha(\vec{\lambda}) + 1-\frac{p}{n}-\alpha(\vec{\lambda})}.
\end{split}    
\end{equation*}
For simplicity, let $\delta := 1-\frac{p}{n}-\alpha(\vec{\lambda})$ and $\alpha:=\alpha(\vec{\lambda})$. Then, 
\begin{align*}
    &\frac{1}{(\lambda_i-c) \alpha +\delta} +  \frac{1}{(\lambda_j+c) \alpha + \delta} < \frac{1}{\lambda_i \alpha +\delta} +  \frac{1}{\lambda_j \alpha + \delta}\\
    &\iff  \frac{\alpha(\lambda_i+\lambda_j)+2\delta}{(\lambda_i \alpha - c\alpha +\delta)(\lambda_j \alpha +c\alpha +\delta)}< \frac{\alpha(\lambda_i+\lambda_j)+2\delta}{(\lambda_i \alpha +\delta)(\lambda_j \alpha +\delta)} \\
    &\iff (\lambda_i \alpha +\delta)(\lambda_j \alpha +\delta) < (\lambda_i \alpha - c\alpha +\delta)(\lambda_j \alpha +c\alpha +\delta)\\
    &\iff c\alpha (\lambda_i\alpha+\delta)-c\alpha (\lambda_j\alpha+\delta)-c^2\alpha^2 > 0\\
    &\iff c\alpha^2(\lambda_i-\lambda_j) > c^2\alpha^2\\
    &\iff \lambda_i > \lambda_j+c,
\end{align*}
which proves the claim. $\clubsuit$
\\[2mm]

This implies that, for $t\in (0,1)$, transformations of the form 
\begin{equation} \label{eq:Ttransformunder}
    (\lambda_i,\lambda_j) \rightarrow (t\lambda_i+(1-t)\lambda_j,(1-t)\lambda_i+t\lambda_j),
\end{equation}
are good.

Let us denote by \(\vec{\lambda}'\coloneqq\matrix{1,\dots,1}\), which corresponds to eigenvalues of $I_p = M'^\top M'$, that is $M'\coloneqq I_p\in \Mcal$. Pick any  \(\vec{\lambda}''\neq \vec{\lambda}'\) that corresponds to some matrix $M''\in \Mcal$, so it satisfies \(\lambda_1''\geq\lambda_2''\geq \dots \geq \lambda_p''\), as well as \(\sum_{i=1}^p \lambda_i''=p\).

We recall the definition of \textit{majorization}, as it will be used to conclude the proof. Namely, we say that $\vec{x}\in\bbR^p$ is \textit{ majorized} by $\vec{y}\in \bbR^p$ whenever for all $k\in [p]$
\[
    \sum_{i=1}^k x_i\leq \sum_{i=1}^k y_i, 
\]
and
\[
    \sum_{i=1}^p x_i=\sum_{i=1}^p y_i.
\]
Firstly, we claim that \(\vec{\lambda}'\) is majorized by \(\vec{\lambda}''\). Suppose otherwise, that for some $k\in[p]$
\[
\sum_{i=1}^k \lambda_i''< \sum_{i=1}^k 1 = k,
\]
implying also that \(\lambda''_{k}<1\).
Then, we have
\[
p= \sum_{i=1}^p \lambda_i'' < (p-k)\lambda_{k}''+k< (p-k)+k =p,
\]
which is a contradiction.

Next, as \(\vec{\lambda}'\) is majorized by \(\vec{\lambda}''\), \(\vec{\lambda}'\) can be derived from \(\vec{\lambda}''\)
by a finite sequence of steps of the form in (\ref{eq:Ttransformunder}) with $t\in[0,1]$, see \cite[Chapter 4, Proposition A.1]{marshall1979inequalities}. Since both vectors $\vec{\lambda}'$ and $\vec{\lambda}''$ are non-increasing, the $t=0$ transformation can always be omitted. Moreover, $t=1$ is just the identity transformation, so it can also be omitted and we actually have $t\in(0,1)$. In formulas, we have that
\[
    \vec{\lambda}' = f^{i_l,j_l}_{c_l}(\dots f_{c_1}^{i_1,j_1}(\vec{\lambda}'')\dots).
\]
Since each of the functions above is good, we have that $\alpha(\vec{\lambda}')<\alpha(\vec{\lambda}'')$. As  $\Rcal_u(M)$ is increasing with $\alpha$, the smallest $\Rcal_u(M)$ is achieved for \(\vec{\lambda}'\coloneqq\matrix{1,\dots,1}\), that is, $M_{opt} = M'=I_p$.

\subsection{Proof of $\Rcal_u(\eta M)\leq \Rcal_u(M)$}\label{subsec:underparam-scaling-eigenvalues}

Consider the function $g_{\eta}:\mathbb{R}^p_{\geq 0} \rightarrow \mathbb{R}^p_{\geq 0}$ defined as $g_\eta(\vec{\lambda}) = \eta \vec{\lambda}$, for some $\eta>1$. Note that, for all $i$,
\[
    \frac{1}{g_\eta(\vec{\lambda})_i \alpha + 1-\frac{p}{n}-\alpha} =  \frac{1}{\eta\lambda_i \alpha + 1-\frac{p}{n}-\alpha} < \frac{1}{\lambda_i \alpha + 1-\frac{p}{n}-\alpha}.
\]
Thus, $g_\eta(\vec{\lambda}) = \eta \vec{\lambda}$ is \emph{good} in the sense of (\ref{eq:good}). From (\ref{eq:claim}), we obtain that $\alpha (\eta \vec{\lambda})<\alpha(\vec{\lambda})$. This implies the desired result as $\Rcal_u$ is monotonically increasing in $\alpha$ from (\ref{eq:Rrewrite}).

\subsection{Coefficient defining system of equations of Theorem \ref{thm:overparammixed}}\label{subsec:defequations}
The $(a_1,a_2,a_3,a_4)$ is the unique solution, with $a_1,a_2$ positive, to the following system of equations:
    \begin{align}\label{eq:aoverparam}
    0 \hspace{-.05em}=\hspace{-.05em} 1 - \frac{1}{\gamma}\hspace{-.1em} \int \hspace{-.1em}\frac{a_1 \lambda^s + a_2 \lambda^t}{a_1 \lambda^s + a_2 \lambda^t + 1}  d\hat{H}_p(\lambda^s, \lambda^t),& \quad
    0 \hspace{-.05em}=\hspace{-.05em} \frac{\gamma_s}{\gamma} \hspace{-.05em}-\hspace{-.05em} \frac{1}{\gamma} \hspace{-.1em}\int\hspace{-.1em} \frac{a_1 \lambda^s}{a_1 \lambda^s + a_2 \lambda^t + 1}  d\hat{H}_p(\lambda^s, \lambda^t), \\
    a_1 \hspace{-.2em}+\hspace{-.2em} a_2 \hspace{-.2em}=\hspace{-.2em} -\frac{1}{\gamma} \hspace{-.4em}\int \hspace{-.4em}\frac{a_3 \lambda^s + a_4 \lambda^t}{(a_1 \lambda^s \hspace{-.2em}+\hspace{-.2em} a_2 \lambda^t \hspace{-.2em}+\hspace{-.2em} 1)^2}  d\hat{H}_p(\lambda^s, \lambda^t),& \,\,
    a_1 \hspace{-.2em}=\hspace{-.2em} -\frac{1}{\gamma} \hspace{-.4em}\int\hspace{-.4em} \frac{a_3 \lambda^s \hspace{-.2em}+\hspace{-.2em} \lambda^s \lambda^t (a_3 a_2 \hspace{-.2em}-\hspace{-.2em} a_4 a_1)}{(a_1 \lambda^s \hspace{-.2em}+\hspace{-.2em} a_2 \lambda^t \hspace{-.2em}+\hspace{-.2em} 1)^2}  d\hat{H}_p(\lambda^s, \lambda^t),\notag
    \end{align}
    and $(b_1,b_2,b_3,b_4)$ is the unique solution, with $b_1,b_2$ positive, to the following system of equations:
    \begin{align}\label{eq:boverparam}
    0 = 1 - \frac{1}{\gamma} \int \frac{b_1 \lambda^s + b_2 \lambda^t}{b_1 \lambda^s + b_2 \lambda^t + 1} \, d\hat{H}_p(\lambda^s, \lambda^t),& \quad
    0 =  \frac{\gamma_s}{\gamma} - \frac{1}{\gamma} \int \frac{b_1 \lambda^s}{b_1 \lambda^s + b_2 \lambda^t + 1} \, d\hat{H}_p(\lambda^s, \lambda^t), \\
    0 \hspace{-.2em}=\hspace{-.4em} \int\hspace{-.4em} \frac{\lambda^s(b_3 \hspace{-.2em}-\hspace{-.2em} b_1 \lambda^t) \hspace{-.2em}+\hspace{-.2em} \lambda^t(b_4 \hspace{-.2em}-\hspace{-.2em} b_2 \lambda^t)}{(b_1 \lambda^s + b_2 \lambda^t + 1)^2}  d\hat{H}_p(\lambda^s, \lambda^t),& \quad
    0 \hspace{-.2em}=\hspace{-.4em} \int \hspace{-.4em}\frac{\lambda^s(b_3 \hspace{-.2em}-\hspace{-.2em} b_1 \lambda^t) \hspace{-.2em}+\hspace{-.2em} \lambda^s \lambda^t(b_3 b_2 \hspace{-.2em}-\hspace{-.2em} b_4 b_1)}{(b_1 \lambda^s + b_2 \lambda^t + 1)^2}  d\hat{H}_p(\lambda^s, \lambda^t).\notag
    \end{align}

\subsection{Proof of Theorem \ref{thm:overparammixed}}\label{subsec:proofoverparammixed}

Recall from (\ref{eq:riskbreakdown}) that bias and variance for non-zero centered data can be expressed as
\[
B_X(\hat{\beta}; \beta) = \beta^\top  \Pi (\Sigma_t+\mu_t\mu_t^\top) \Pi \beta \quad \text{and} \quad 
V_X(\hat{\beta}; \beta) = \frac{\sigma^2}{n} \operatorname{Tr}[\Sigmahat^+ (\Sigma_t+\mu_t\mu_t^\top)],
\]
where $\Sigmahat = X^\top X / n$ and
$\Pi = I - \Sigmahat^+ \Sigmahat$ (projection on the null space of $X$). To obtain the wanted result, we make a connection to zero-mean data and then use results from \cite{song2024generalization} to handle the zero-mean case. Unlike in the under-parametrized case, the bias term does not necessarily vanish. Thus, we start off by breaking it down into two terms 
\[
    B_X(\hat{\beta}; \beta) =  B_X^1(\hat{\beta}; \beta)+ B_X^2(\hat{\beta}; \beta),
\]
where $B_X^1(\hat{\beta}; \beta) = \beta^\top  \Pi \Sigma_t \Pi \beta$ and $B_X^2(\hat{\beta}; \beta) = \beta^\top  \Pi \mu_t\mu_t^\top \Pi \beta.$ Moreover, we split the variance term as
\[
    V_X(\hat{\beta}; \beta) =  V_X^1(\hat{\beta}; \beta)+ V_X^2(\hat{\beta}; \beta),
\]
with $V_X^1(\hat{\beta}; \beta) = \frac{\sigma^2}{n} \operatorname{Tr}[\Sigmahat^+ \Sigma_t]$ and $V_X^2(\hat{\beta}; \beta) = \frac{\sigma^2}{n} \operatorname{Tr}[\Sigmahat^+ \mu_t\mu_t^\top]$. We will deal with each of these terms individually.

\paragraph{Bounding the term $B_X^2(\betahat,\beta)$.}
Recall that $\tilde{X}_n = \frac{X}{\sqrt{n}}$.
Then, similarly to (\ref{eq:SVDtildeX}), we can write the SVD of $\Sigmahat$ as
\[
    \Sigmahat = \sum_{i=1}^k \sigma_i^2(\tilde{X}_n)v_i(\tilde{X}_n)v_i(\tilde{X}_n)^\top,
\]
where $k\le \min(n,p)=n$ is the number of non-zero singular values of $\tilde{X}_n$. As in (\ref{eq:Xbarlowerbound}), we can conclude that $k=n$. Therefore, we have
\begin{align*}
    I-\Sigmahat^+\Sigmahat  &= I-\sum_{i=1}^n v_i(\tilde{X}_n)v_i(\tilde{X}_n)^\top 
     = \sum_{i=n+1}^p v_i(\tilde{X}_n)v_i(\tilde{X}_n)^\top.
\end{align*}
By definition, it holds that $\Pi\mu_t = (I-\Sigmahat^+\Sigmahat) \mu_t$,  from which it follows
\[
    \Pi\mu_t = \sum_{i=n+1}^p v_i(\tilde{X}_n)\inner{v_i(\tilde{X}_n)}{\mu_t}.
\]
Due to Proposition \ref{prop:ranktwopertub}, it holds
almost surely that
\[
\left\lvert\frac{\langle v_1(\tilde{X}_n),\mu_s \rangle}{\norm{\mus}_2}\right\rvert^2 + \left\lvert\frac{\langle v_2(\tilde{X}_n),\mu_s \rangle}{\norm{\mus}_2}\right\rvert^2  \ge  1 - \frac{1}{c\cdot p}, 
\]
from which it follows
\[
    \norm{\Pi\mu_t}_2^2=\sum_{i=n+1}^p\abs{\inner{v_i(\tilde{X}_n)}{\mu_t}}^2\leq \frac{1}{c\cdot p}\norm{\mut}_2^2=c.
\]
Since $\beta$ sampled independently from a sphere of constant radius and $\Pi\mu_t$ is of bounded norm, it is standard result that $\abs{\inner{\beta}{\Pi\mut}}^2$ is sub-exponential and, using Bernstein inequality, we can get that
\begin{equation}\label{eq:Bernsteinbeta}
    B_X^2(\betahat,\beta)=\norm{\beta^\top \Pi \mut}_2^2 = \abs{\inner{\beta}{\Pi\mut}}^2 = O\rbr{\frac{1}{p}},
\end{equation}
with high probability over the sampling of $\beta$.

\paragraph{Bounding the term $B_X^1(\betahat,\beta)$.}

We first introduce an object coming from a bias term of a ridge regression estimator with coefficient $\lambda$:
\begin{equation}\label{eq:biaslambdadef}
     B_X^1(\lambda)\coloneqq\lambda^2 \beta^\top (\Sigmahat+\lambda I)^{-1} \Sigma_t(\Sigmahat+\lambda I)^{-1} \beta,
\end{equation}
defined for any $\lambda>0$. It is more convenient to work with $B_X^1(\lambda)$ than $B_X^1(\betahat,\beta)$ and, in addition, $B_X^1(\lambda)$ approximates well $B_X^1(\betahat,\beta)$ for small $\lambda$. We formalize the second claim as
\begin{equation}\label{eq:claimB}
     \abs{B_X^1(\betahat,\beta)-B_X^1(\lambda)} =  O(\lambda)
\end{equation}
proved in the same manner as \cite[D.82]{song2024generalization}. For convenience we also carry out the proof here.

\paragraph{Proof of the claim in (\ref{eq:claimB}).}

Let us write the SVD $\Sigmahat = UDU^\top$. Moreover, we denote by $1_{D=0}$ and $1_{D>0}$ the diagonal matrices such that 
\[
    (1_{D=0})_{i,i}=\begin{cases}
        0, \quad D_{i,i}\neq 0\\
        1, \quad D_{i,i} = 0
    \end{cases}\qquad     (1_{D>0})_{i,i}=\begin{cases}
        1, \quad D_{i,i}\neq 0\\
        0, \quad D_{i,i} = 0
    \end{cases}
\]
Then it holds that
\begin{align*}
    B^1_X(\hat{\beta}; \beta) 
    &= {\beta}^{\top} (I-\Sigmahat^{+}\Sigmahat )\Sigmat(I-\Sigmahat^{+}\Sigmahat )\beta \\
    &= \beta^\top U 1_{D=0} U^{\top} \Sigmat U 1_{D=0} U^{\top} \beta \\
    &= \beta^\top U 1_{D=0} A 1_{D=0} U^{\top} \beta \\
    &= \| A^{1/2} 1_{D=0} U^{\top} \beta \|_2^2,
\end{align*}
where we set $A \coloneqq U^{\top}\Sigmat U$. Furthermore, we have 
\begin{align*}
    B^1_X(\lambda) 
    &= \lambda^2 \beta^\top (\Sigmahat + \lambda I)^{-1}\Sigmat(\Sigmahat + \lambda I)^{-1}\beta \\
    &= \lambda^2 \beta^\top U(D + \lambda I)^{-1} A (D + \lambda I)^{-1} U^{\top} \beta \\
    &= \| A^{1/2} \lambda (D + \lambda I)^{-1} U^{\top} \beta \|_2^2.
\end{align*}

Therefore, we have
\begin{align*}
    \big| \sqrt{B^1_X(\hat{\beta}; \beta)} - \sqrt{B_X^1(\lambda)} \big|
    &\leq \| A^{1/2} (1_{D=0} - \lambda(D+\lambda I)^{-1}) U^{\top}\beta \|_2 \\
    &\le c\|A\|_{2}^{1/2} \| \lambda(D+\lambda I)^{-1}1_{D>0} \|_2 \\
    &\le c\frac{\lambda}{\sigma_{n}(\Sigmahat)} = O(\lambda),
\end{align*}
where the third inequality holds as $\|A\|_{2} = \|\Sigmat\|_{2} = O(1)$ and the last inequality follows from \Cref{prop:rankonepertublower} in the same manner as (\ref{eq:tildexlowerbound}). Notice that $B_X^1(\lambda), B_X^1(\hat{\beta}; \beta) = O(1)$, since $\norm{\beta}_2,\norm{\Sigmat}_2=O(1)$ and $\sigma_n(\Sigmahat)>c$. This finally implies
\[
    \abs{B_X^1(\hat{\beta}; \beta) - B_X^1(\lambda)} = O(\lambda),
\]
proving the claim. $\clubsuit$

The next step is to prove the claim that, for $1>\lambda > p^{-0.49}$, it holds that 
\begin{equation}\label{eq:claimB2}
    B_X^1(\lambda) = \lambda^2 \beta^\top (\Sigmahat_0+\lambda I)^{-1} \Sigma_t(\Sigmahat_0+\lambda I)^{-1} \beta + O\rbr{\frac{\lambda^{-2}}{p}}.
\end{equation}

\paragraph{Proof of the claim in (\ref{eq:claimB2}).}

Towards this end, we have
\begin{align*}
     \Sigmahat &= \frac{1}{n}(X^\top X)\\
     &= \frac{1}{n}(X^0+1_{n_{t}}\mu_{t}^\top+1_{n_{s}}\mu_{s}^\top)^\top(X^0+1_{n_{t}}\mu_{t}^\top+1_{n_{s}}\mu_{s}^\top)\\
     &= \left(\frac{{X^0}^\top X^0}{n}+\frac{{X^0}^\top 1_{n_{t}}\mu_t^\top}{n}+ \frac{{X^0}^\top 1_{n_{s}}\mu_s^\top}{n} +\frac{\mu_t1_{n_{t}}^\top X^0}{n} +\frac{\mu_s1_{n_{s}}^\top X^0}{n} + \frac{\gamma_{t}}{\gamma}\mu_t\mu_t^\top  + \frac{\gamma_{s}}{\gamma}\mu_s\mu_s^\top \right),
\end{align*}
where abusing notation we write $1_{n_{s}}=[1,\ \dots ,\ 1,\ 0,\ \dots,\ 0]^\top\in \bbR^{n\times 1}$ ($n_s$ ones followed by $n_t$ zeros) and $1_{n_{t}}=[0,\ \dots ,\ 0,\ 1,\ \dots, 1]^\top\in \bbR^{n\times 1}$ ($n_s$ zeros followed by $n_t$ ones).

All the terms above, except the first one,
have rank $1$, so we use Woodbury formula to take them out of the inverse when computing $(\Sigmahat+\lambda I)^{-1}$. We consider the case $\varphi\neq 1$, as the case $\varphi=1$ is analogous (it is in fact easier as some steps can be omitted). 
We first focus on the term $(\Sigmahat+\lambda I)^{-1}$ and demonstrate how to handle $\frac{{X^0}^\top 1_{n_{t}}\mu_t^\top}{n}+\frac{\mu_t1_{n_{t}}^\top X^0}{n}+\frac{\gamma_t}{\gamma}\mut\mut^\top$. For this purpose, we introduce the following notation 
\begin{equation}\label{eq:AuvUCdef}
    \begin{aligned}
    A&\coloneqq \shat+\lambda I -\frac{{X^0}^\top 1_{n_{t}}\mu_t^\top}{n}-\frac{\mu_t1_{n_{t}}^\top X^0}{n} - \frac{\gamma_t}{\gamma}\mut\mut^\top,\\ 
    u &\coloneqq \frac{\mu_t}{\sqrt{n}},\qquad v \coloneqq \frac{{X^0}^\top1_{n_{t}}}{\sqrt{n}},\\
    U&\coloneqq\begin{bmatrix}u\ \ v
    \end{bmatrix}\in \bbR^{p\times 2},\ \text{ and } C\coloneqq\matrix{n\frac{\gamma_t}{\gamma}\;\; 1\\ 1\;\;\;\;\; 0}\in \bbR^{2\times 2}.
    \end{aligned}
\end{equation}

Under this notation it holds
\[
    \frac{{X^0}^\top 1_{n_{t}}\mu_t^\top}{n}+\frac{\mu_t1_{n_{t}}^\top X^0}{n} +\frac{\gamma_t}{\gamma}\mut\mut^\top = UCU^\top.
\]
Then, using Woodbury formula, we have
\begin{align*}
    (\Sigmahat+\lambda I)^{-1} &= \left(A+uv^\top+vu^\top+n\frac{\gamma_t}{\gamma} uu^\top\right)^{-1}\\
    &=(A+UCU^\top)^{-1}\\
    &= A^{-1}-A^{-1}U\,(C^{-1}-U^{\top}A^{-1}U)^{-1}U^{\top}A^{-1}.
\end{align*}

We now compute the $2\times2$ block
\[
C^{-1}-U^{\!\top}A^{-1}U=
\begin{bmatrix}
-u^{\top}A^{-1}u & 1-u^{\top}A^{-1}v\\[2pt]
1-v^{\top}A^{-1}u & -n\frac{\gamma_t}{\gamma}-v^{\top}A^{-1}v
\end{bmatrix}
=
\begin{bmatrix}
-a & 1-b\\
1-b & -n\frac{\gamma_t}{\gamma}-d
\end{bmatrix},
\]
where 
\begin{equation}\label{eq:abdDef}
    a:=u^{\!\top}A^{-1}u,\qquad
    b:=v^{\!\top}A^{-1}u=u^{\!\top}A^{-1}v,\qquad 
    d:=v^{\!\top}A^{-1}v.
\end{equation}

Hence
\begin{equation}\label{eq:DeltaDef}
    (C^{-1}-U^{\top}A^{-1}U)^{-1}
    =\frac{1}{\Delta}\begin{bmatrix}-n\frac{\gamma_t}{\gamma}-d & b-1\\ b-1 & -a\end{bmatrix},
    \qquad
    \Delta:=a\left(n\frac{\gamma_t}{\gamma}+d\right)-(1-b)^2.
\end{equation}
Plugging back and simplifying gives the explicit formula:
\begin{align*}
    (\Sigmahat+\lambda I)^{-1}
    &= A^{-1}-\frac{1}{\Delta}\,A^{-1}\left(\left(-n\frac{\gamma_t}{\gamma}-d\right)\;uu^\top-(1-b)\;(uv^\top+vu^\top)- a\; vv^\top\right)A^{-1},
\end{align*}
which is
valid whenever $\Delta\neq 0$, i.e., whenever $C^{-1}-U^{\top}A^{-1}U$ is invertible.

We will now analyze the $a,b,d$ terms. First, recall that 
\begin{align*}
    A &= \frac{{X^0}^\top X^0}{n}+\frac{{X^0}^\top 1_{n_{s}}\mu_s^\top}{n} +\frac{\mu_s1_{n_{s}}^\top X^0}{n} + \frac{\gamma_{s}}{\gamma}\mu_s\mu_s^\top +\lambda I=\Sigmahat_s+\lambda I,
\end{align*}
where $\Sigmahat_s\coloneqq \frac{({X^0+1_{n_{s}}\mu_s^\top})^\top (X^0+1_{n_{s}}\mu_s^\top)}{n} $. Thus, we have  
\[
    \norm{A^{-1}}_2\le \lambda^{-1}.
\]
From this, it follows that
\begin{align*}
    \abs{a} = \abs{u^\top A^{-1}u} &= \norm{\frac{\mu_t^\top}{\sqrt{n}}A^{-1}\frac{\mu_t}{\sqrt{n}}}_2
    \leq \norm{\frac{\mu_t}{\sqrt{n}}}_2\norm{A^{-1}}_2\norm{\frac{\mu_t}{\sqrt{n}}}_2\le c \lambda^{-1}.
\end{align*}
Similarly, we have
\begin{align*}
    \abs{b} 
      &= \abs{v^\top A^{-1}u} \\
      &= \left\lVert \frac{\mu_t^\top}{\sqrt{n}} A^{-1} \frac{X^{0\top}1_{n_t}}{\sqrt{n}} \right\rVert_2 \\
      &\leq \left\lVert \frac{\mu_t}{\sqrt{n}} \right\rVert_2
            \,\lVert A^{-1} \rVert_2\,
            \left\lVert \frac{X^{0\top}1_{n_t}}{\sqrt{n}} \right\rVert_2 \\
      &\le c\,\lambda^{-1}\sqrt{p},
\end{align*}
where the last inequality follows with high probability over the sampling of $X^0$, since $\frac{X^{0\top}1_{n_t}}{\sqrt{n}}$ is a vector with $p$ i.i.d entries of mean zero and $O(1)$ variance. Finally, we have
\begin{align*}
    \abs{d} 
      &= \abs{v^\top A^{-1}v} \\
      &= \left\lVert \frac{1_{n_{t}}^\top X^0}{\sqrt{n}} \,
                    A^{-1} \,
                    \frac{X^{0\top}1_{n_{t}}}{\sqrt{n}} \right\rVert_2 \\
      &\leq \left\lVert \frac{X^{0\top}1_{n_{t}}}{\sqrt{n}} \right\rVert_2
             \,\lVert A^{-1} \rVert_2\,
             \left\lVert \frac{X^{0\top}1_{n_{t}}}{\sqrt{n}} \right\rVert_2 \\
      &\le c\,\lambda^{-1} p,
\end{align*}
again with high probability.

From a slight adjustment of the second part of \Cref{prop:ranktwopertub}, it holds for the top singular value
\[
\sigma_1(A) = \sigma_1(\Sigmahat_s)+\lambda = \left(\sigma_1\rbr{\frac{X^0+1_{n_{s}}\mu_s^\top}{\sqrt{n}}}\right)^2+\lambda=\Theta(p),
\] 
and for the corresponding right singular vector
\[
    \abs{\inner{v_1(A)}{\frac{\mus}{\norm{\mus}_2}}} = \abs{\inner{v_1(\Sigmahat_s)}{\frac{\mus}{\norm{\mus}_2}}}=\abs{\inner{v_1\rbr{\frac{X^0+1_{n_{s}}\mu_s^\top}{\sqrt{n}}}}{\frac{\mus}{\norm{\mus}_2}}}=\sqrt{1-O\rbr{\frac{1}{p}}}.
\]
Note that, for $\varphi<1$, it holds that $\abs{\inner{\frac{\mus}{\norm{\mus}_2}}{\frac{\mut}{\norm{\mut}_2}}}=\varphi<1$. Using the triangle inequality and Cauchy-Schwarz gives
\[
    \abs{\inner{v_1(A)}{\frac{\mut}{\norm{\mut}_2}}}\leq \abs{\inner{\frac{\mus}{\norm{\mus}_2}}{\frac{\mut}{\norm{\mut}_2}}} + \norm{v_1(A)-\frac{\mus}{\norm{\mus}_2}}_2\norm{\frac{\mut}{\norm{\mut}_2}}_2\leq \varphi + O\rbr{\frac{1}{p}}.
\]
Therefore, it holds that
\begin{align*}
    \abs{a} = \abs{u^\top A^{-1}u} &= \norm{\frac{\mu_t^\top}{\sqrt{n}}A^{-1}\frac{\mu_t}{\sqrt{n}}}_2\\
    &=\sum_{i=1}^p\frac{1}{\sigma_i(A)}\abs{\inner{v_i(A)}{\frac{\mut}{\sqrt{n}}}}^2 \\
    & = c\cdot  \sum_{i=1}^p\frac{1}{\sigma_i(A)}\abs{\inner{v_i(A)}{\frac{\mut}{\norm{\mut}_2}}}^2\\ 
    &\geq c\sum_{i=2}^p\frac{1}{\sigma_i(A)}\abs{\inner{v_i(A)}{\frac{\mut}{\norm{\mut}_2}}}^2 \\
    &\geq c\frac{1}{\sigma_2(A)} \sum_{i=2}^p\abs{\inner{v_i(A)}{\frac{\mut}{\norm{\mut}_2}}}^2\\
    &\ge c\left(1-\rbr{\varphi+O\rbr{\frac{1}{p}}}^2\right)>0,
\end{align*}
since $\sigma_2(A) = \sigma_2(\Sigmahat_s)+\lambda =  O(1)$ due to the second part of \Cref{prop:ranktwopertub}.
Note that, for $\varphi=1$, we do not need this argument, as the $\mus$ terms are taken out of the inverse as well. In that case, we take $A = \left(\frac{{X^0}^\top X^0}{n}+\lambda I\right)$, which immediately gives $\sigma_1(A)<c$.

We can now prove that, with high probability, $\Delta=\Omega(p)$. Using Cauchy-Schwarz, it holds that
\[
    b^2 = \abs{\inner{u}{v}}_{A^{-1}}\leq \norm{u}_{A^{-1}}\norm{v}_{A^{-1}} = ad,
\]
from which it follows that
\[
    \Delta=a\left(n\frac{\gamma_t}{\gamma}+d\right)-(1-b)^2 \geq an\frac{\gamma_t}{\gamma} - 1+2b=\Omega(p),
\]
since $a$ is lower bounded by a constant and $\abs{b}\le c \lambda^{-1}\sqrt{p}\le c p^{0.99}$.

At this point, we have all the necessary bounds and we work towards proving the claim. We first expand the bias term
\begin{align*}
    B_X^1(\lambda) &= \lambda^2 \beta^\top (\Sigmahat+\lambda I)^{-1} \Sigma_t(\Sigmahat+\lambda I)^{-1} \beta\\
    &= \lambda^2 \beta^\top (\Sigmahat+\lambda I)^{-1} \Sigma_t(A+UCU^\top)^{-1} \beta\\
    &= \lambda^2 \beta^\top (\Sigmahat+\lambda I)^{-1} \Sigma_t\rbr{A^{-1}-A^{-1}U\,(C^{-1}-U^{\top}A^{-1}U)^{-1}U^{\top}A^{-1}}\beta\\
    &=\lambda^2 \beta^\top (\Sigmahat+\lambda I)^{-1} \Sigma_tA^{-1}\beta+S,
\end{align*}
where $S \coloneqq -\lambda^2 \beta^\top (\Sigmahat+\lambda I)^{-1} \Sigma_tA^{-1}U\,(C^{-1}-U^{\top}A^{-1}U)^{-1}U^{\top}A^{-1}\beta$.

We now prove that $S$ is small. To do so, we decompose
\begin{align*}
    S&=-\lambda^2 \beta^\top (\Sigmahat+\lambda I)^{-1} \Sigma_tA^{-1}U\,(C^{-1}-U^{\top}A^{-1}U)^{-1}U^{\top}A^{-1}\beta\\
    &= \lambda^2 \beta^\top (\Sigmahat+\lambda I)^{-1} \Sigma_t\frac{1}{\Delta}\,A^{-1}\left(\left(n\frac{\gamma_t}{\gamma}+d\right)\;uu^\top+(1-b)\;(uv^\top+vu^\top)+ a\; vv^\top\right)A^{-1}\beta\\
    &= T_{u,u}+T_{u,v}+T_{v,v},
\end{align*}
where $T_{u,u}$ is the summand corresponding to $uu^\top$, $T_{u,v}$ to $uv^\top+vu^\top$, and $T_{v,v}$ to $vv^\top$. Zooming in on one of the terms, it holds that
\begin{align*}
    T_{u,u} &=\lambda^2 \beta^\top (\Sigmahat+\lambda I)^{-1} \Sigma_t\frac{(n\gamma_t/\gamma+d)}{\Delta}\,A^{-1}\;uu^\top A^{-1}\beta\\
    &=\inner{\beta}{\lambda^2(\Sigmahat+\lambda I)^{-1} \Sigma_t\frac{(n\gamma_t/\gamma+d)}{\Delta}\,A^{-1}\;u}\inner{u^\top A^{-1}}{\beta}.
\end{align*}
Note that 
\begin{align*}
    \norm{\lambda^2(\Sigmahat+\lambda I)^{-1} \Sigma_t\frac{(n\gamma_t/\gamma+d)}{\Delta}\,A^{-1}\;u}_2&\leq \lambda^2 \norm{(\Sigmahat+\lambda I)^{-1}}_2\norm{\Sigmat}_2 \frac{(n\gamma_t/\gamma+d)}{\Delta}\norm{A^{-1}}_2\norm{u}_2\\
    &\le c         \lambda^{-1},
\end{align*}
and $\norm{u^\top A^{-1}}_2\le c\lambda^{-1} $. Using this, we get that, with high probability, it holds
\[
|T_{u,u}| \le c \frac{\lambda^{-2}}{p}.
\]
This is similar to how we obtained (\ref{eq:Bernsteinbeta}), since $\beta$ is sampled independently from a sphere of constant radius. With analogous passages, we have that
\[
|T_{u,v}| \le c\frac{\lambda^{-2}}{p}, \qquad |T_{v,v}| \le c\frac{\lambda^{-2}}{p}
\]
holds with high probability over the sampling of $\beta$. Putting all together, we get 
\[
B_X^1(\lambda) = \lambda^2 \beta^\top (\Sigmahat+\lambda I)^{-1} \Sigma_tA^{-1}\beta +O\rbr{\frac{\lambda^{-2}}{p}}.
\]
Using the same argumentation applied now to $(\Sigmahat +\lambda I)^{-1}$ in $\lambda^2 \beta^\top (\Sigmahat+\lambda I)^{-1} \Sigma_tA^{-1}\beta$  gives 
\[
B_X^1(\lambda) = \lambda^2 \beta^\top A^{-1} \Sigma_tA^{-1}\beta +O\rbr{\frac{\lambda^{-2}}{p}}.
\]
Lastly, doing all of this again to take out the terms containing $\mus$ from $A$, i.e., by taking
\[
\tilde{A}\coloneqq A -\frac{{X^0}^\top 1_{n_{s}}\mu_s^\top}{n}-\frac{\mu_s1_{n_{s}}^\top X^0}{n} - \frac{\gamma_s}{\gamma}\mus\mus^\top = \Sigmahat_0 +\lambda I,
\]
we get 
\[
B_X^1(\lambda) = \lambda^2 \beta^\top \tilde{A}^{-1} \Sigma_t\tilde{A}^{-1}\beta +O\rbr{\frac{\lambda^{-2}}{p}},
\]
proving the claim. $\clubsuit$

From \cite[D.82]{song2024generalization}, it follows that
\begin{equation}\label{eq:Songeq}    
\abs{\beta^\top  \Pi_0 \Sigma_t \Pi_0 \beta-\lambda^2 \beta^\top \rbr{\Sigmahat_0+\lambda I}^{-1} \Sigma_t\rbr{\Sigmahat_0+\lambda I}^{-1}\beta } = O(\lambda),
\end{equation}
where $\Pi_0 =I- \Sigmahat_0^+\Sigmahat_0$. Thus, by combining (\ref{eq:claimB}), (\ref{eq:claimB2}) and (\ref{eq:Songeq}), we conclude that
\begin{equation}
\abs{B_X^1(\betahat,\beta)-\beta^\top  \Pi_0 \Sigma_t \Pi_0 \beta} =  O(\lambda)+O\rbr{\frac{\lambda^{-2}}{p}}=O(p^{-1/3}),
\end{equation}
where the last step is obtained by taking $p=\lambda^{-1/3}$ (this also satisfies $1>\lambda>p^{-0.49}$, which was required to obtain (\ref{eq:claimB2})). As $B_X(\betahat,\beta)=B_X^1(\betahat,\beta)+B_X^2(\betahat,\beta)$ and $B_X^2(\betahat,\beta)=O(1/p)$ with high probability by (\ref{eq:Bernsteinbeta}), we conclude that 
\begin{equation}
\abs{B_X(\betahat,\beta)-\beta^\top  \Pi_0 \Sigma_t \Pi_0 \beta} =O(p^{-1/3})
\end{equation}
holds with high probability over the sampling of $\beta$ and $X$. Plugging in the expression of $\beta^\top  \Pi_0 \Sigma_t \Pi_0 \beta$ given in  \cite[Theorem 4.1]{song2024generalization} yields, with high probability,
\[
    B_X(\betahat,\beta) = \int \frac{b_3\lambda^s+(b_4+1)\lambda^t}{(b_1\lambda^s+b_2\lambda^t+1)^2}d\hat{G}_p(\lambda^s,\lambda^t)+O(p^{-c}),
\]
where $(b_1,b_2,b_3,b_4)$ is the unique solution, with $b_1,b_2$ positive, to (\ref{eq:boverparam}). Taking the limit $p,\ n\to \infty$ gives the desired result for the bias term. 

\paragraph{Bounding the term $V_X^2(\betahat,\beta)$.} Notice that the term $V_X^2(\betahat,\beta)$ coincides with $T_2$ from Proposition \ref{prop:variancesame}. Moreover, we can follow the proof of the bound on $T_2$ verbatim, only substituting $p$ for $n$ in appropriate places (as we are now in an over-parametrized setting) to get
\begin{equation}\label{eq:VX2overparambound}  V_X^2(\betahat,\beta)= \frac{\sigma^2}{n} \operatorname{Tr}[\Sigmahat_0^+\mu_t\mu_t^\top]=O\rbr{\frac{1}{p}}.
\end{equation}

\paragraph{Bounding the term $V_X^1(\betahat,\beta)$.}
To make a connection with zero-centered data, we will first prove that, with high probability, it holds
\begin{equation}\label{eq:V1Xto0overparam}
    V_X^1(\betahat,\beta) = \frac{\sigma^2}{n} \operatorname{Tr}[\Sigmahat^+\Sigmat] = \frac{1}{n} \operatorname{Tr}\!\bigl[\hat{\Sigma}_{0}^{+} \Sigma_t\bigr]
           + O\!\left(\frac{1}{p^{1/7}}\right).
\end{equation}
Similarly to the computation for $B^1_X(\betahat,\beta)$, we introduce an object coming from a variance term of a ridge regression estimator with coefficient $\lambda$:
\begin{equation*}
     V_X^1(\lambda) \coloneqq \frac{1}{n} \operatorname{Tr}[(\Sigmahat+\lambda I)^{-2}\Sigmahat\Sigma_t],
\end{equation*}
defined for any $\lambda>0$. It is more convenient to work with $V_X^1(\lambda)$ than $V_X^1(\betahat,\beta)$ and, in addition, $V_X^1(\lambda)$ approximates $ V_X^1(\betahat,\beta)$ well for small $\lambda$. We formalize the second claim as
\begin{equation}\label{eq:claimT1}
        \abs{V_X^1(\betahat,\beta) -  V_X^1(\lambda)}  = O(\lambda),
\end{equation}
proved in the same manner as \cite[D.78]{song2024generalization}. For convenience we also carry out the proof here.
\paragraph{Proof of claim in (\ref{eq:claimT1}).}
Let us write the SVD $\Sigmahat = UDU^\top$. Then it holds that
\begin{align*}
     V_X^1(\betahat,\beta) &= \frac{1}{n} \Tr (UD^+U^\top \Sigma_{t}),\\
     V_X^1(\lambda) &=\frac{1}{n} \operatorname{Tr}[U(D+\lambda I)^{-2}DU^\top\Sigma_t].
\end{align*}
Therefore, we have
\begin{align*}
    \abs{  V_X^1(\betahat,\beta) - V_X^1(\lambda)} & = \frac{1}{n}\left|\Tr \sbr{ U^\top \Sigmat U \rbr{D^+ -  (D+\lambda I)^{-2}D}}\right|\\
    &\leq \norm{U^\top \Sigmat U}_2 \frac{1}{n}\sum_{i=1}^n \sbr{\frac{1}{\lambda_i(D)}-\frac{\lambda_i(D)}{(\lambda_i(D)+\lambda)^2}}\\
    &\leq \frac{1}{\tau} \frac{2\lambda}{\lambda_n(D)^2} \\
    &= c \cdot \frac{\lambda}{\lambda_n(\Sigmahat)^2} = O(\lambda).  
\end{align*}
Here, we used the inequality $x^{-1} - (x+\lambda)^{-2}x \leq 2\lambda/x^2$ and the fact that $\Sigmahat$ has $n$ non-zero singular values, each bounded below by a constant, which follows from (\ref{eq:Xbarlowerbound}). This completes the  proof of the claim. $\clubsuit$

Relying on the derivations in \cite[D.2]{song2024generalization} we have that
\begin{equation*}
    V_X^1(\lambda)=\frac{d}{d\lambda}\rbr{\frac{\lambda}{n}\Tr\rbr{\Sigmat(\Sigmahat+\lambda I)^{-1}}}.
\end{equation*}
Let us denote by 
\[
    \tilde{V}_X^1(\lambda) \coloneqq \frac{\lambda}{n}\Tr\rbr{\Sigmat(\Sigmahat+\lambda I)^{-1}}.
\]
We claim that, for any $t>0$, it holds
\begin{equation}\label{eq:derivativebound}
    \abs{ V_X^1(\lambda) - \frac{1}{t\lambda}\rbr{\tilde{V}_X^1(\lambda+t\lambda)- \tilde{V}_X^1(\lambda)}}=O(t\lambda^{-2}).
\end{equation}
\paragraph{Proof of claim in \ref{eq:derivativebound}.}
We begin by transforming the LHS:
\begin{align*}
    \frac{1}{t\lambda}&( \tilde{V}_X^1(\lambda+t\lambda)-\tilde{V}_X^1(\lambda)) =  \frac{1}{n}\Tr \rbr{\Sigmat \;\frac{1}{t\lambda} \rbr{ (\lambda+t\lambda)\rbr{\Sigmahat+(\lambda+t\lambda) I}^{-1} -\lambda\rbr{\Sigmahat+\lambda I}^{-1}}}\\
    &=\frac{1}{n}\Tr \rbr{\Sigmat\; \frac{1}{t\lambda} \rbr{\rbr{\frac{1}{\lambda+t\lambda}\Sigmahat+I}^{-1}-\rbr{\frac{1}{\lambda}\Sigmahat+I}^{-1}}}\\
    &= \frac{1}{n}\Tr \rbr{\Sigmat\; \frac{1}{t\lambda} \rbr{\rbr{\frac{1}{\lambda}\Sigmahat+I}^{-1} \rbr{\frac{1}{\lambda}\Sigmahat+I - \frac{1}{\lambda+t\lambda}\Sigmahat-I} \rbr{\frac{1}{\lambda+t\lambda}\Sigmahat+I}^{-1}}}\\
    &= \frac{1}{n}\Tr \rbr{\Sigmat \rbr{\Sigmahat+\lambda I}^{-1} \Sigmahat \rbr{\Sigmahat+(\lambda+t\lambda) I}^{-1}}\\
     &= \frac{1}{n}\Tr \rbr{ \rbr{\Sigmahat+(\lambda+t\lambda) I}^{-1}\rbr{\Sigmahat+\lambda I}^{-1}\Sigmahat\Sigmat},
\end{align*}
where the last line follows from the cyclic property of the trace and the commutativity of $\Sigmahat$, $\rbr{\Sigmahat+\lambda I}^{-1}$ and $\rbr{\Sigmahat+(\lambda+t\lambda)I}^{-1}$. Plugging this into the LHS of (\ref{eq:derivativebound}) yields
\begin{align*}
    &\abs{ V_X^1(\lambda) - \frac{1}{t\lambda}\rbr{\tilde{V}_X^1(\lambda+t\lambda)- \tilde{V}_X^1(\lambda)}} \\
    &\qquad = \abs{\frac{1}{n}\Tr \rbr{\rbr{\rbr{\Sigmahat+\lambda I}^{-1} - \rbr{\Sigmahat+(\lambda+t\lambda) I}^{-1}}(\Sigmahat+\lambda I)^{-1}\Sigmahat\Sigma_t}}\\
    &\qquad = \abs{\frac{t\lambda}{n}\Tr \rbr{ \rbr{\Sigmahat+(\lambda+t\lambda) I}^{-1}(\Sigmahat+\lambda I)^{-2}\Sigmahat\Sigma_t}}\\
    &\qquad \leq \norm{\Sigmat\rbr{\Sigmahat+(\lambda+t\lambda) I}^{-1}(\Sigmahat+\lambda I)^{-2}}_2 \frac{t\lambda}{n}\Tr \Sigmahat= O(t\lambda^{-2}),
\end{align*}
where the last line follows from the bound $\frac{1}{n}\Tr \Sigmahat = O(1)$, which holds due to Proposition \ref{prop:ranktwopertub}. $\clubsuit$

Let us denote the zero-centered counterparts of the corresponding $V_X^1$ terms as
\begin{align*}
    V_X^0(\betahat,\beta) &\coloneqq \frac{1}{n} \operatorname{Tr}[\Sigmahat_0^+ \Sigma_t]\\
    V_X^0(\lambda)&\coloneqq\frac{1}{n} \operatorname{Tr}[(\Sigmahat_0+\lambda I)^{-2}\Sigmahat_0\Sigma_t]=\frac{d}{d\lambda}\rbr{\frac{\lambda}{n}\Tr\rbr{\Sigmat(\Sigmahat_0+\lambda I)^{-1}}},\\
    \tilde{V}_X^0(\lambda) &\coloneqq \frac{\lambda}{n}\Tr\rbr{\Sigmat(\Sigmahat_0+\lambda I)^{-1}}.
\end{align*}
Analogously to (\ref{eq:claimT1}) and (\ref{eq:derivativebound}), it holds that
\begin{equation}\label{eq:V0Xlambdabounds}
         \abs{V_X^0(\betahat,\beta) - V_X^0(\lambda)}  = O(\lambda),\qquad
         \abs{V_X^0(\lambda) - \frac{1}{t\lambda}\rbr{\tilde{V}_X^0(\lambda+t\lambda)-\tilde{V}_X^0(\lambda)}}=O(t\lambda^{-2}).
\end{equation}
The next step is to prove that, for $1>\lambda > p^{-0.49}$, \begin{equation}\label{eq:claimTrSigmahat0}
    \tilde{V}_X^1(\lambda) = \tilde{V}_X^0(\lambda) + O\rbr{\frac{\lambda^{-2}}{n}}.
\end{equation}
\paragraph{Proof of the claim in (\ref{eq:claimTrSigmahat0}).}
Expanding the expression, we want to prove that
\begin{equation*}
    \tilde{V}_X^1(\lambda) = \frac{\lambda}{n}\Tr\rbr{\Sigmat(\Sigmahat+\lambda I)^{-1}} = \frac{\lambda}{n}\Tr\rbr{\Sigmat(\Sigmahat_0+\lambda I)^{-1}} + O\rbr{\frac{\lambda^{-2}}{n}}.
\end{equation*}
Notice that $\tilde{V}_X^1(\lambda)$ crucially contains $(\Sigmahat+\lambda I)^{-1}$ in its expression, which we have already analyzed in the context of $B_X^1(\betahat,\beta)$. Recalling the definitions of $A,u,v,U,C,a,b,d$, and $\Delta$ from (\ref{eq:AuvUCdef}), (\ref{eq:abdDef}), and (\ref{eq:DeltaDef}), we can then expand  $\tilde{V}_X^1(\lambda)$ as
\begin{align*}
    \frac{\lambda}{n}\Tr\rbr{\Sigmat(\Sigmahat+\lambda I)^{-1}} &= \frac{\lambda}{n}\Tr\rbr{\Sigmat(A+UCU^\top)^{-1}}\\
    &= \frac{\lambda}{n}\Tr\rbr{\Sigmat\rbr{A^{-1}-A^{-1}U\,(C^{-1}-U^{\top}A^{-1}U)^{-1}U^{\top}A^{-1}}}\\
    &= \frac{\lambda}{n}\Tr\rbr{\Sigmat A^{-1}}+\hat{S},
\end{align*}
where $\hat{S} \coloneqq -\frac{\lambda}{n}\Tr\rbr{\Sigmat A^{-1}U\,(C^{-1}-U^{\top}A^{-1}U)^{-1}U^{\top}A^{-1}}$.

We now prove that $\hat{S}$ is small. To do so, we decompose
\begin{align*}
    \hat{S}&=\frac{\lambda}{n}\Tr\rbr{\Sigmat \frac{1}{\Delta}\,A^{-1}\left(\left(n\frac{\gamma_t}{\gamma}+d\right)\;uu^\top+(1-b)\;(uv^\top+vu^\top)+ a\; vv^\top\right)A^{-1}}\\
    &= \hat{T}_{u,u}+\hat{T}_{u,v}+\hat{T}_{v,v},
\end{align*}
where $\hat{T}_{u,u}$ is the summand corresponding to $uu^\top$, $\hat{T}_{u,v}$ to $uv^\top+vu^\top$, and $\hat{T}_{v,v}$ to $vv^\top$. Zooming in on one of the terms, it holds that
\begin{align*}
    \hat{T}_{u,u} &=\frac{\lambda}{n}\Tr\rbr{\Sigmat \frac{1}{\Delta}\,A^{-1}\left(n\frac{\gamma_t}{\gamma}+d\right)\;uu^\top A^{-1}}\\
    &=\frac{\lambda}{n}\; \frac{n\frac{\gamma_t}{\gamma}+d}{\Delta}\Tr\rbr{\Sigmat A^{-1} uu^\top A^{-1}}\\
    &=\frac{\lambda}{n}\; \frac{n\frac{\gamma_t}{\gamma}+d}{\Delta} u^\top A^{-1}\Sigmat A^{-1} u.
\end{align*}
Note that 
\begin{align*}
    \norm{A^{-1}\Sigmat A^{-1}}_2&\leq  \frac{\lambda^{-2}}{\tau},
\end{align*}
and $\norm{u}_2\le c$. Using this, we get that, with high probability, it holds
\[
|\hat{T}_{u,u}| \le c \frac{\lambda^{-2}}{n}.
\]
With analogous passages, we have that
\[
|\hat{T}_{u,v}| \le c \frac{\lambda^{-2}}{n}, \qquad |\hat{T}_{v,v}| \le c \frac{\lambda^{-2}}{n}
\]
holds with high probability over the sampling of $Z$. Putting all together, we get 
\[
\frac{\lambda}{n}\Tr\rbr{\Sigmat(\Sigmahat+\lambda I)^{-1}} = \frac{\lambda}{n}\Tr\rbr{\Sigmat A^{-1}} +O\rbr{\frac{\lambda^{-2}}{n}}.
\]
Lastly, doing all of this again to take out the terms containing $\mus$ from $A$, i.e., by taking
\[
\tilde{A}= A -\frac{{X^0}^\top 1_{n_{s}}\mu_s^\top}{n}-\frac{\mu_s1_{n_{s}}^\top X^0}{n} - \frac{\gamma_s}{\gamma}\mus\mus^\top = \Sigmahat_0 +\lambda I,
\]
we get 
\[
\frac{\lambda}{n}\Tr\rbr{\Sigmat(\Sigmahat+\lambda I)^{-1}} = \frac{\lambda}{n}\Tr\rbr{\Sigmat \rbr{\Sigmahat_0+\lambda I}^{-1}} +O\rbr{\frac{\lambda^{-2}}{n}},
\]
proving the claim. $\clubsuit$

Finally, combining (\ref{eq:claimT1}), (\ref{eq:derivativebound}), (\ref{eq:V0Xlambdabounds}) and (\ref{eq:claimTrSigmahat0}), for $1>\lambda > p^{-0.49}$ and $t>0$, we have that
\begin{align*}
    \abs{V_X^1(\betahat,\beta) - V_X^0(\betahat,\beta)} &\leq   \abs{V_X^1(\betahat,\beta) - V_X^1(\lambda)}+\abs{V_X^1(\lambda) - V_X^0(\lambda)}+\abs{V_X^0(\betahat,\beta) - V_X^0(\lambda)}\\
    &\leq O(\lambda) + \abs{\tilde{V}_X^1(\lambda) - \frac{1}{t\lambda}\rbr{\tilde{V}_X^1(\lambda+t\lambda)-\tilde{V}_X^1(\lambda)}}\\
    &\hspace{3.5em}+ \abs{\tilde{V}_X^0(\lambda) - \frac{1}{t\lambda}\rbr{\tilde{V}_X^0(\lambda+t\lambda)-\tilde{V}_X^0(\lambda)}} \\
    &\hspace{3.5em}+ \abs{\frac{1}{t\lambda}\rbr{\tilde{V}_X^1(\lambda+t\lambda)-\tilde{V}_X^1(\lambda)}-\frac{1}{t\lambda}\rbr{\tilde{V}_X^0(\lambda+t\lambda)-\tilde{V}_X^0(\lambda)}}\\
    &\leq O(\lambda) \hspace{-0.7mm}+\hspace{-0.7mm} O\hspace{-0.7mm}\rbr{\hspace{-0.7mm}\frac{t}{\lambda^2}\hspace{-0.7mm}} \hspace{-0.7mm}+\hspace{-0.7mm} \frac{1}{t\lambda}\abs{\tilde{V}_X^1(\lambda\hspace{-0.7mm}+\hspace{-0.7mm}t\lambda)\hspace{-0.7mm}-\hspace{-0.7mm}\tilde{V}_X^0(\lambda\hspace{-0.7mm}+\hspace{-0.7mm}t\lambda)}\hspace{-0.7mm}+\hspace{-0.7mm}\frac{1}{t\lambda}\abs{\tilde{V}_X^1(\lambda)\hspace{-0.7mm}-\hspace{-0.7mm}\tilde{V}_X^0(\lambda)}\\
    &= O(\lambda) \hspace{-0.7mm}+\hspace{-0.7mm} O(t\lambda^{-2}) \hspace{-0.7mm}+ \hspace{-0.7mm}O\rbr{\frac{t^{-1}\lambda^{-3}}{n}}.
\end{align*}
Taking $t=\lambda^3$ and $\lambda = n^{-1/7}$, we get $\abs{V_X^1(\betahat,\beta) - V_X^0(\betahat,\beta)} = O(n^{-1/7})$, proving the claim from (\ref{eq:V1Xto0overparam}). As $V_X(\hat{\beta}; \beta) = V_X^1(\hat{\beta}; \beta)+V_X^2(\hat{\beta}; \beta)$, and $V^2_X(\betahat,\beta)=O(1/p)$ by (\ref{eq:VX2overparambound}) we conclude that

\[
    V_X(\hat{\beta}; \beta) = \frac{\sigma^2}{n} \operatorname{Tr}[\Sigmahat^+ (\Sigma_t+\mu_t\mu_t^\top)] = \frac{\sigma^2}{n} \operatorname{Tr}[\Sigmahat_0^+\Sigma_t]+O\rbr{p^{-1/7}}.
\]
Plugging in the expression of $\frac{\sigma^2}{n} \operatorname{Tr}[\Sigmahat_0^+\Sigma_t]$ given in  \cite[Theorem 4.1]{song2024generalization} yields,  with high probability,
\[
    V_X(\hat{\beta}; \beta) = -\frac{\sigma^2}{ \gamma}\int \frac{\lambda^t(a_3\lambda^s+a_4\lambda^t)}{(a_1\lambda^s\hspace{-.2em}+a_2\lambda^t+1)^2}d\hat{H}_p(\lambda^s,\lambda^t)+O(p^{-c}),
\]
where $(a_1,a_2,a_3,a_4)$ is the unique solution, with $a_1,a_2$ positive, to (\ref{eq:aoverparam}). Taking the limit $p,\ n\to \infty$ gives the desired result for the variance term and concludes the proof.

\subsection{Proof of Theorem \ref{thm:overparamcovmatching}}\label{subsec:proofoverparamcovmatching}
For $\Sigma_t=I_p$ and $ \Sigmas\in\bbR^{p\times p}_{\succ0}$, it holds that
\[
    \Rcal_o(\Sigma_s,I_p,\beta) = \Vcal(\Sigma_s,I_p)+\Bcal(\Sigma_s,I_p,\beta).
\]
We analyze each of the two terms separately.

\paragraph{Calculating $\Bcal(\Sigma_s,I_p,\beta)$.} Note that $\Sigmat = I_p$ implies $\lambda_i^t=1$ in all the equations in (\ref{eq:boverparam}). Plugging this in, one gets that the third and fourth equation in (\ref{eq:boverparam}) are satisfied for $b_4=b_2$ and $b_3=b_1$. From the uniqueness of a solution $(b_1,b_2,b_3,b_4)$ to the whole system of equations in $(\ref{eq:boverparam})$, and the fact that $b_3$ and $b_4$ only show up in the mentioned third and fourth equation, we get that it must hold $b_4=b_2$ and $b_3=b_1$. Plugging this into the bias term we get that
\begin{align*}
    \Bcal(\Sigma_s,I_p,\beta)&=\int \frac{b_3\lambda^s+(b_4+1)\lambda^t}{(b_1\lambda^s+b_2\lambda^t+1)^2}d\hat{G}_p(\lambda^s,\lambda^t)\\
    &= \int \frac{b_1\lambda^s+b_2+1}{(b_1\lambda^s+b_2+1)^2}d\hat{G}_p(\lambda^s,\lambda^t)\\
    & = \sum_{i=1}^p \frac{\inner{\beta}{u_i}^2}{b_1\lambda^s_i+b_2+1},
\end{align*}
noting that $u_i\in\bbR^p$ is the eigenvector of the matrix $\Sigma_s$ corresponding to the eigenvalue $\lambda^s_i$. 

Recall that we have assumed in the setup of Section \ref{subsec:Overparamregime} that $\beta$ is sampled from a sphere of constant radius, which we will denote by $rS^{p-1}$, i.e., $r=\norm{\beta}_2$. We now prove concentration of $\Bcal(\Sigma_s,I_p,\beta)$ over this sampling of $\beta$. Towards this end, we introduce a matrix $A\in\bbR^{p\times p}$ such that
\[
    \Bcal(\Sigma_s,I_p,\beta) = \beta^\top A \beta,\qquad A\coloneqq \sum_{i=1}^p \frac{1}{b_1\lambda^s_{i}+b_2+1}u_iu_i^\top.
\]
Notice that first equation of (\ref{eq:boverparam}) yields
\[
     \frac{1}{\gamma\ p}\sum_{i=1}^p \frac{b_1 \lambda^s_i + b_2 }{b_1 \lambda^s_i + b_2  + 1} = 1,
\]
which gives
\[
     \Tr\rbr{A}=\sum_{i=1}^p \frac{1}{b_1 \lambda^s_i + b_2  + 1} = p-n.
\]
Since both $b_1$ and $b_2$ are positive, as stated in Theorem \ref{thm:overparammixed}, it holds 
\[
    \norm{A}_2 = \lambda_1(A) = \frac{1}{b_1\lambda^s_{p}+b_2+1}\leq 1.
\]
Note that
\begin{align}
    \EE_{\beta\sim rS^{p-1}} \beta^\top A \beta &= \EE \sum_{i=1}^p \frac{\inner{\beta}{u_i}^2}{b_1\lambda^s_i+b_2+1} \notag\\
    &= \sum_{i=1}^p \frac{1}{b_1\lambda^s_i+b_2+1}\EE \inner{\beta}{u_i}^2 \notag\\
    &= \frac{1}{p} \sum_{i=1}^p \frac{1}{b_1\lambda^s_i+b_2+1} r^2\notag\\
    &= \frac{p-n}{p}r^2. \label{eq:expectedbab}
\end{align}
Furthermore, the function $ \beta \to \beta^\top A\beta$ is Lipschitz over the sphere. Namely, for two vectors $\beta_1,\beta_2 \in rS^{p-1}$, it holds that
\[
|\beta_1^\top A\beta_1-\beta_2^\top A\beta_2|\le |\beta_1^\top A(\beta_1-\beta_2)|+|\beta_2^\top A(\beta_1-\beta_2)|\leq 2r\norm{A}_2\norm{\beta_1-\beta_2}_2\leq 2r\norm{\beta_1-\beta_2}_2.
\]
Then, due to the concentration of Lipschitz functions over the sphere \cite[Theorem 5.1.4]{vershynin2018high}, we get that, with overwhelming probability, 
\[
    \abs{\beta^\top A\beta-\EE \beta^\top A\beta} = O(n^{-c_1}),
\]
for any constant $c_1<1/2$. Plugging (\ref{eq:expectedbab}) gives 
\[
\Bcal(\Sigmas,I_p,\beta)=\beta^\top A\beta = \frac{p-n}{p}r^2+O(n^{-c_1}),
\]
with overwhelming probability. We can readily calculate the bias term for $\Sigma_s = I_p$:
\[
    \Bcal(I_p,I_p,\beta) = \sum_{i=1}^p \frac{\inner{\beta}{u_i}^2}{b_1+b_2+1} = \frac{p-n}{p}r^2.
\]
Thus, for any $\Sigmas \in \bbR^{p\times p}_{\succ 0}$, we have
\begin{equation}\label{eq:idenbestover}
    \Bcal(I_p,I_p,\beta) \leq \Bcal(\Sigmas,I_p,\beta)+O(n^{-c_1}),
\end{equation}
with overwhelming probability.

\paragraph{Calculating $\Vcal(\Sigma_s,I_p)$.}
Note that
\begin{align}
    \Vcal(\Sigma_s,I_p) &= -\sigma^2 \frac{1}{\gamma}\int \frac{\lambda^t(a_3\lambda^s+a_4\lambda^t)}{(a_1\lambda^s+a_2\lambda^t+1)^2}d\hat{H}_p(\lambda^s,\lambda^t)\notag \\
    &= -\sigma^2 \frac{1}{\gamma}\int \frac{a_3\lambda^s+a_4}{(a_1\lambda^s+a_2+1)^2}d\hat{H}_p(\lambda^s,\lambda^t)\notag \\
    &= \sigma^2(a_1+a_2), \label{eq:a1a2}
\end{align}
where the last equality follows from the third equation in (\ref{eq:aoverparam}) and the fact that $\lambda^t_i=1$ for all $i\in [p]$.
Moreover, subtracting the second from the first equation in (\ref{eq:aoverparam}) yields
\begin{equation}\label{eq:gettinga2}
    0 = 1-\frac{\gamma_s}{\gamma} - \frac{1}{\gamma\ p}\sum_{i=1}^p\frac{a_2}{a_1 \lambda^s_i + a_2  + 1}.
\end{equation}
Analyzing just the first equation in (\ref{eq:aoverparam}), we get
\[
    \frac{1}{\gamma\ p}\rbr{p - \sum_{i=1}^p\frac{1}{a_1 \lambda^s_i + a_2  + 1}}=\frac{1}{\gamma\ p}\sum_{i=1}^p\frac{a_1\lambda^s_i+a_2}{a_1 \lambda^s_i + a_2  + 1}=1,
\]
which gives 
\[
\sum_{i=1}^p\frac{1}{a_1 \lambda^s_i + a_2  + 1} = p-n.
\]
Plugging this into (\ref{eq:gettinga2}) we get that $a_2 = \frac{\gamma_t}{1-\gamma}$. Therefore, $a_1$ is the unique solution to 
\begin{equation}\label{eq:a1eq}
    \sum_{i=1}^p\frac{1}{a_1 \lambda^s_i + c_2} = p-n,
\end{equation}
for $c_2=\frac{\gamma_t}{1-\gamma}+1>0$. From (\ref{eq:a1a2}),  we have that $\Vcal(\Sigma_s,I_p)$ only depends on $\Sigma_s$ through $a_1$, with which it monotonically increases. To conclude this section, we will apply the majorization argument from the proof of Theorem \ref{thm:underparamcovmatching} with a slight modification. Almost all parts of the  argument are analogous, and we restate them mainly for convenience.

Let us denote by $\vec{\lambda}^s \coloneqq \matrix{\lambda_1^s,\dots,\lambda_p^s}$. Then, for fixed $n,p$ and $\vec{\lambda}^s$,  we will refer to $a_1(\vec{\lambda}^s)$ as the positive solution to (\ref{eq:a1eq}). Note that from Theorem \ref{thm:overparammixed} we have that this solution is unique.
Consider a function $f:\mathbb{R}^p_{\geq 0} \rightarrow \mathbb{R}^p_{\geq 0}$. We call a function $f$ \textit{good}, if and only if 
\begin{equation}\label{eq:overgood}    
\sum_{i=1}^p \frac{1}{a_1(\vec{\lambda}^s)f(\vec{\lambda}^s)_i  +c_2}<\sum_{i=1}^p \frac{1}{a_1(\vec{\lambda}^s)\lambda_i^s + c_2}. 
\end{equation}
We claim that, if $f$ is good, then 
\begin{equation}\label{eq:overclaim}
a_1(f(\vec{\lambda}^s)) < a_1(\vec{\lambda}^s)   .
\end{equation}
\paragraph{Proof of the claim.}
Consider a good function $f$. Then, we have
\[
\sum_{i=1}^p \frac{1}{a_1(\vec{\lambda}^s)f(\vec{\lambda}^s)_i  +c_2}<\sum_{i=1}^p \frac{1}{a_1(\vec{\lambda}^s)\lambda_i^s + c_2} = p-n.
\]
Furthermore, setting $a_1 = 0$ we get
\begin{align*}
    \sum_{i=1}^p \frac{1}{0\cdot f(\vec{\lambda}^s)_i  +c_2} &= p\ \frac{1}{\frac{\gamma_t}{1-\gamma}+1}\\
    &= p\ \frac{p-n}{p-n_s}\\
    &>p-n.
\end{align*}
By continuity, there exists $a_1' \in (0,a_1(\vec{\lambda}^s))$ for which
\[
    \sum_{i=1}^p \frac{1}{a_1'f(\vec{\lambda}^s)_i  +c_2} = n-p,
\]
implying \(a_1(f(\vec{\lambda}^s)) = a_1' <a_1(\vec{\lambda}^s)\), which concludes the proof. $\clubsuit$
\\[2mm]
Next, for $i,j\in[p]$ s.t.\ $i<j$,  we introduce a function $f_c^{i,j}:\mathbb{R}^p_{\geq 0} \rightarrow \mathbb{R}^p_{\geq 0}$ defined as
 \[f_c^{i,j}(\vec{\lambda})_k= \begin{cases}
    \lambda_i^s-c & k=i,\\
    \lambda_j^s+c & k= j, \\
    \lambda_k^s & k\neq i,j,
\end{cases}\]
where $c>0$ is a constant. We now claim that $f_c^{i,j}$ is good for any $i,j
\in[p]$ and $c>0$, such that $\lambda_i^s > \lambda_j^s + c$.

\paragraph{Proof of the claim.}
The claim is equivalent to 
\begin{equation*}
    \frac{1}{a_1(\vec{\lambda}^s)(\lambda_i^s-c)  + c_2} +  \frac{1}{a_1(\vec{\lambda}^s)(\lambda_j^s+c)  + c_2} <\frac{1}{a_1(\vec{\lambda}^s)\lambda_i^s  + c_2} +  \frac{1}{a_1(\vec{\lambda}^s)\lambda_j^s  + c_2}.
\end{equation*}
For simplicity, let us denote $a:=a_1(\vec{\lambda}^s)$. Then, 
\begin{align*}
    &\frac{1}{a(\lambda_i^s-c)  +c_2} +  \frac{1}{a(\lambda_j^s+c)  + c_2} < \frac{1}{a\lambda_i^s +c_2} +  \frac{1}{a\lambda_j^s  + c_2}\\
    &\iff  \frac{a(\lambda_i^s+\lambda_j^s)+2c_2}{(\lambda_i^sa  - ca +c_2)(\lambda_j^sa  + ca +c_2)}< \frac{a(\lambda_i^s+\lambda_j^s)+2c_2}{(\lambda_i^s a +c_2)(\lambda_j^s a +c_2)} \\
    &\iff (\lambda_i^s a +c_2)(\lambda_j^s a +c_2) < (\lambda_i^s a - ca +c_2)(\lambda_j^s a + ca +c_2)\\
    &\iff ca (\lambda_i^sa+c_2)-ca (\lambda_j^sa+c_2)-c^2a^2 > 0\\
    &\iff ca^2(\lambda_i^s-\lambda_j^s) > c^2a^2\\
    &\iff \lambda_i^s > \lambda_j^s+c,
\end{align*}
which proves the claim. $\clubsuit$
\\[2mm]
This implies that, for $t\in (0,1)$, transformations of the form 
\begin{equation} \label{eq:Ttransform}
    (\lambda^s_i,\lambda^s_j) \rightarrow (t\lambda^s_i+(1-t)\lambda^s_j,(1-t)\lambda^s_i+t\lambda^s_j)
\end{equation}
are good.
Let us denote by \(\vec{\lambda}^{id}\coloneqq\matrix{1,\dots,1}\), which corresponds to the matrix $I_p$. Pick any  \(\vec{\lambda}^s\neq \vec{\lambda}^{id}\) that corresponds to some matrix $\Sigmas\in \Scal$, so it satisfies \(\lambda_1^s\geq\lambda_2^s\geq \dots \geq \lambda_p^s\), as well as \(\sum_{i=1}^p \lambda_i^s=p\).

Firstly, we claim that \(\vec{\lambda}^{id}\) is majorized by \(\vec{\lambda}^s\). Suppose otherwise, that for some $k\in[p]$
\[
\sum_{i=1}^k \lambda_i^s< \sum_{i=1}^k 1 = k,
\]
implying also that \(\lambda^s_{k}<1\).
Then, we have
\[
p= \sum_{i=1}^p \lambda_i^s < (p-k)\lambda_{k}^s+k< (p-k)1+k = p,
\]
which is a contradiction.

Next, as \(\vec{\lambda}^{id}\) is majorized by \(\vec{\lambda}^s\), \(\vec{\lambda}^{id}\) can be derived from \(\vec{\lambda}^s\)
by a finite sequence of steps of the form in (\ref{eq:Ttransform}) with $t\in[0,1]$, see \cite[Chapter 4, Proposition A.1]{marshall1979inequalities}. Since both vectors $\vec{\lambda}^{id}$ and $\vec{\lambda}^s$ are non-increasing, the $t=0$ transformation can always be omitted. Moreover, $t=1$ is just the identity transformation, so it can also be omitted and we actually have $t\in(0,1)$. In formulas, we have that
\[
    \vec{\lambda}^{id} = f^{i_l,j_l}_{c_l}(\dots f_{c_1}^{i_1,j_1}(\vec{\lambda}^s)\dots).
\]
Since each of the functions above is good, we have that $a_1(\vec{\lambda}^{id})<a_1(\vec{\lambda}^s)$. As  $\Vcal(\Sigma_s,I_p)$ is increasing with $a_1$, this directly implies that, for any $\Sigmas\in\bbR^{p\times p}_{\succ 0}$, 
\[
    \Vcal(I_p,I_p)\leq \Vcal(\Sigma_s,I_p).
\]
Combining this with (\ref{eq:idenbestover}), we get
\[
    \Rcal_o(I_p,I_p,\beta) \leq \Rcal_o(\Sigmas,I_p,\beta) + o(1),
\]
with overwhelming probability, which concludes the proof.

\newpage
\section{Additional numerical results}\label{app:add}

\paragraph{Setup details.} \label{sec:detailed-experimental-setup}
We train for 200 epochs using SGD as optimizer, and we use cosine annealing; the initial learning rate is 0.1 for \emph{Scratch} (0.2 for the experiment of Table \ref{tab:imagenet}) and 0.01 for \emph{Distillation} and \emph{Pretrained}. The \emph{Distillation} teacher is a ResNet-50 trained on CIFAR-10. We use an early stopping with patience 20 based on a validation subset (10\% of the full training dataset). We avoid up-scaling images in the \emph{Pretrained} experiments to better demonstrate the effect of synthetic data augmentation. On the generation side, to generate the images by T2I models, we use CLIP’s text encoder prompt template on CIFAR-10 and ImageNet labels. Moreover, as models like StableDiffusion1.4 sometimes generate low quality data or images discarded by the safety checker, before applying all the algorithms, we do an initial pruning of 2\% of the generated pool based on the distance to the CLIP embedding of the label.
For RxRx1, we train a linear classifier on frozen features from an ImageNet-pretrained ResNet. For each class, MorphGen generates a pool of 500 synthetic images; we augment the real training set (30 images/class) with 60 selected synthetic images/class and evaluate on a disjoint test set of 20 images/class. We repeat the experiment 10 times by resampling the real subset from 120 images/class. As in the main setup, CLIP features are used for the selection algorithms.

\paragraph{Transformer-based models.}
In Table \ref{tab:vit-swint}, we use the same setup as Table \ref{tab:clip-trunc}, but instead of ResNet, we train a ViT and a Swin-T model from scratch. We use a patch size of 4 and Adam optimizer with learning rate 0.0001 for this experiment. We observe that, in accordance with our previous findings, covariance matching surpasses other algorithms.
\begin{table}[!htb]
\caption{\emph{Covariance matching} outperforms all baselines when fully training a transformer model on a mix of real and synthetic data.}
\centering
\resizebox{0.8\textwidth}{!}{\begin{tabular}{lcccc}
\cmidrule[0.5pt]{1-5}
Method & \multicolumn{2}{c}{ViT} & \multicolumn{2}{c}{Swin-T} \\
\cmidrule[0.5pt]{2-3} \cmidrule[0.5pt]{4-5}
 & Scratch & Distillation & Scratch & Distillation \\
\cmidrule[0.5pt]{1-5}
No synthetic & $40.11 \pm 0.59$ & $40.32 \pm 1.01$ & $40.02 \pm 0.70$ & $40.84 \pm 0.73$ \\
\cmidrule[0.25pt]{1-5}
Center matching~\citep{he2022synthetic}     & $43.89 \pm 0.97$ & $45.61 \pm 0.68$ & $44.39 \pm 0.54$ & $46.64 \pm 0.53$ \\
Center sampling~\citep{lin2023explore}      & $43.89 \pm 0.95$ & $46.29 \pm 0.80$ & $43.94 \pm 1.76$ & $46.97 \pm 0.59$ \\
DS3~\citep{hulkund2025datas}                & $45.92 \pm 0.49$ & $48.61 \pm 0.67$ & $46.57 \pm 0.68$ & $49.55 \pm 0.72$ \\
K-means~\citep{lin2023explore}              & $44.24 \pm 1.13$ & $47.44 \pm 0.97$ & $44.71 \pm 0.32$ & $48.49 \pm 0.64$ \\
Random                                      & $44.07 \pm 0.82$ & $46.50 \pm 0.78$ & $44.38 \pm 0.77$             & $47.35 \pm 0.50$ \\
Text matching~\citep{lin2023explore}        & $44.57 \pm 0.57$ & $46.02 \pm 1.00$ & $45.15 \pm 0.58$ & $46.55 \pm 2.52$ \\
Text sampling~\citep{lin2023explore}        & $43.80 \pm 0.98$ & $46.00 \pm 0.98$ & $44.59 \pm 0.93$ & $47.62 \pm 0.71$ \\
Covariance matching (ours)                  & $46.09 \pm 0.91$ & $49.53 \pm 0.61$ & $46.64 \pm 0.96$ & $50.73 \pm 0.44$ \\
\cmidrule[0.25pt]{1-5}
Real upper bound & $51.85 \pm 0.47$ & $53.11 \pm 0.43$ & $52.43 \pm 1.39$ & $54.80 \pm 0.69$\\
\cmidrule[0.5pt]{1-5}
\end{tabular}
}
\label{tab:vit-swint}
\end{table}

\paragraph{Zero-diversity generators.} 
To assess the importance of filtering low-diversity data, we construct a pool per CIFAR-10 class with 2K images from StyleGAN2-Ada and 8K images from two collapsed generators. The first collapsed model emits the image whose CLIP embedding is closest to the class label; the second produces images near the mean embedding of the class’s real subset. We sample 4K images from each collapsed generator, yielding a total 10K images per class. As shown in Table~\ref{tab:zerodiversity-clip}, most baselines over-select from the collapsed generators because they ignore the diversity of selected samples. In particular, DS3 retains the two clusters formed by the collapsed outputs and thus fails to filter them. By contrast, K-means and Covariance matching draw more from the 2K non-collapsed subset and achieve higher classification accuracy.

\begin{table}[!hbt]
\caption{\emph{Covariance matching} performs on par with the best baselines across three training paradigms on CIFAR-10, when the synthetic data is generated via a StyleGAN2-Ada model and two zero-diversity generators.}
\centering
\resizebox{0.7\textwidth}{!}{\begin{tabular}{lccc}
\cmidrule[0.5pt]{1-4}
Method & Scratch & Distillation & Pretrained \\
\cmidrule[0.5pt]{1-4}
No synthetic & $44.36 \pm 1.51$ & $47.33 \pm 0.57$ & $63.40 \pm 1.33$ \\
\cmidrule[0.25pt]{1-4}
Center matching~\citep{he2022synthetic}     & $45.33 \pm 2.43$ & $47.50 \pm 0.55$ & $62.96 \pm 1.26$ \\
Center sampling~\citep{lin2023explore}     & $46.88 \pm 2.59$ & $51.11 \pm 0.60$ & $65.38 \pm 1.14$ \\
DS3~\citep{hulkund2025datas}               & $53.74 \pm 1.92$ & $59.16 \pm 1.56$ & $69.43 \pm 0.93$ \\
K-means~\citep{lin2023explore}             & $60.20 \pm 1.35$ & $65.03 \pm 0.81$ & \textbf{$72.83 \pm 0.48$} \\
Random                                     & $50.31 \pm 1.28$ & $51.82 \pm 0.91$ & $66.27 \pm 1.21$ \\
Text matching~\citep{lin2023explore}       & $42.89 \pm 1.89$ & $47.38 \pm 0.76$ & $62.82 \pm 1.31$ \\
Text sampling~\citep{lin2023explore}       & $48.13 \pm 1.81$ & $50.81 \pm 0.77$ & $66.12 \pm 1.06$ \\
Covariance matching (ours)                 & $58.97 \pm 1.67$ & $64.85 \pm 0.63$ & $72.38 \pm 0.66$ \\
\cmidrule[0.25pt]{1-4}
Real upper bound & $61.08 \pm 2.54$ & $65.38 \pm 0.51$ & $74.35 \pm 0.56$ \\
\cmidrule[0.5pt]{1-4}
\end{tabular}
}
\label{tab:zerodiversity-clip}
\end{table}

\paragraph{Leak experiment.} We consider inserting (``leaking'') images from the target distribution into the pool of synthetic images and test the ability of different methods to select them. We use 1K leaked CIFAR-10 images, disjoint from the 200 ($n_t$) real reference samples. From a pool of 4K StableDiffusion1.4 images and 1K leaked images, each method selects 800 ($n_s$). Figure  \ref{fig:dino-clip-leak-portion} shows, for each method, the fraction of selected samples drawn from the leak. Because replacing synthetic with real augmentations yields the best accuracy (\emph{Real upper bound}), an effective selector should prioritize leaked real images: covariance matching does, achieving the highest leaked fraction among all methods.

\begin{figure}[tb]
     \centering
     \begin{subfigure}[b]{0.49\columnwidth}
         \centering
        \includegraphics[width=\columnwidth]{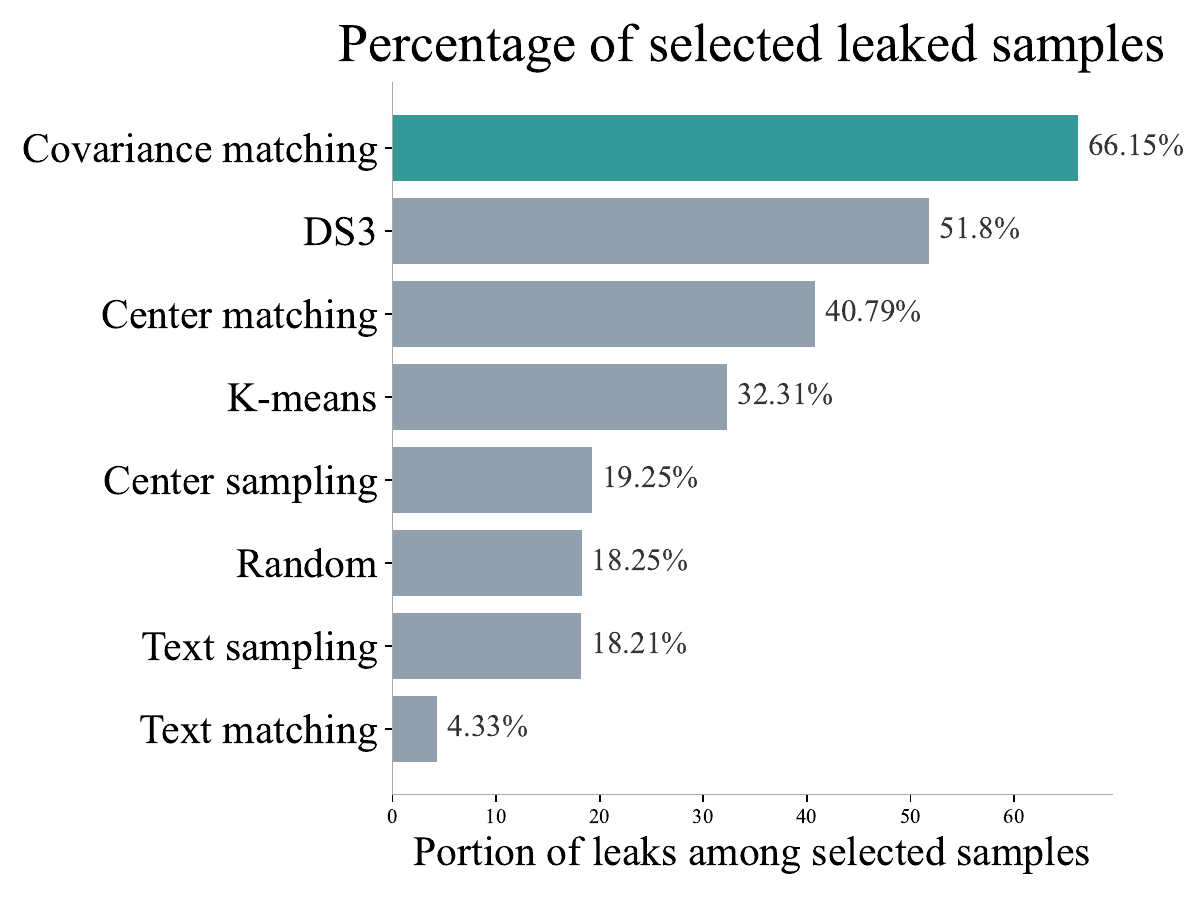}
         \caption{CLIP-based algorithms}
         \label{fig:clip-leak}
     \end{subfigure}
     \hfill
     \begin{subfigure}[b]{0.49\columnwidth}
         \centering
        \includegraphics[width=\columnwidth]{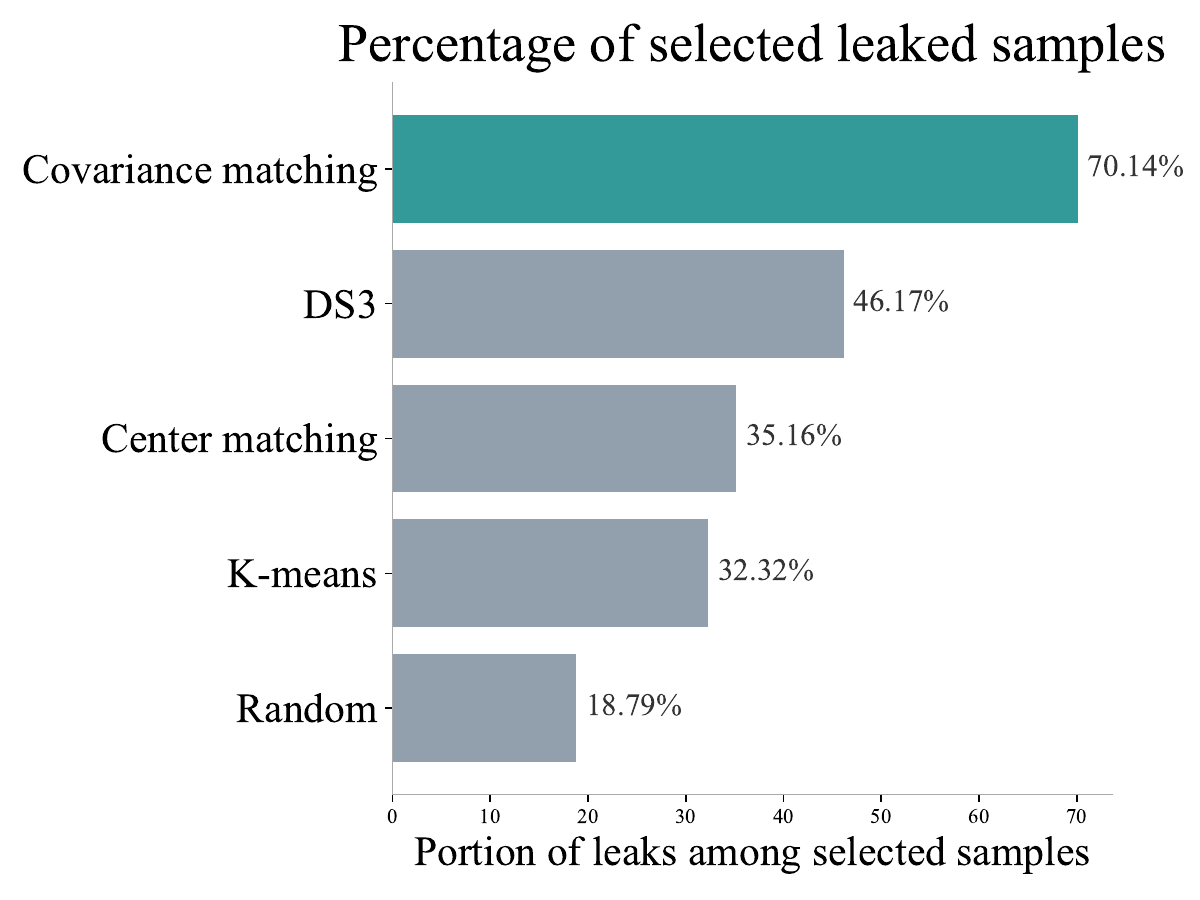}
         \caption{DINO-based algorithms}
         \label{fig:dino-leak}
     \end{subfigure}
        \caption{The portion of samples chosen from the set of leaked images shows that our proposed algorithm reliably selects real samples among the pool of generated examples.}
        \label{fig:dino-clip-leak-portion}
\end{figure}

\paragraph{Changing the feature extractor.} \label{tables-with-dino}
In the main experiments, we use CLIP features for all selection methods. To test the dependence on the feature extractor, we repeat the setups of Tables \ref{tab:clip-trunc}-\ref{tab:clip-gen-pool} with DINO-v2 features. As shown in Tables \ref{tab:dino-trunc}-\ref{tab:dino-generative-pool-no-aug}, covariance matching matches or surpasses the best baseline across settings, indicating that its effectiveness is not tied to a specific feature extractor. We also repeat the leak experiment of Figure \ref{fig:dino-clip-leak-portion}, see the bar plot in (b), showing again similar results. 

\begin{table}[!hbt]
\caption{\emph{Covariance matching} outperforms all baselines across three training paradigms on CIFAR-10, when the synthetic data is generated via truncated generative models and features are extracted with DINO-v2.}
\centering
\resizebox{0.7\textwidth}{!}{\begin{tabular}{lccc}
\cmidrule[0.5pt]{1-4}
Method & Scratch & Distillation & Pretrained \\
\cmidrule[0.5pt]{1-4}
No synthetic & $44.36 \pm 1.51$ & $47.33 \pm 0.57$ & $63.40 \pm 1.33$ \\
\cmidrule[0.25pt]{1-4}
Center matching~\citep{he2022synthetic}     & $50.06 \pm 1.45$ & $54.50 \pm 0.62$ & $66.23 \pm 0.72$ \\
DS3~\citep{hulkund2025datas}               & $52.93 \pm 1.65$ & $58.69 \pm 0.81$ & $68.04 \pm 0.71$ \\
K-means~\citep{lin2023explore}             & $51.66 \pm 2.10$ & $55.97 \pm 0.58$ & $67.00 \pm 0.84$ \\
Random                                     & $49.97 \pm 2.45$ & $54.79 \pm 0.68$ & $66.57 \pm 0.92$ \\
Text matching~\citep{lin2023explore}       & $51.52 \pm 1.67$ & $55.17 \pm 0.57$ & $67.13 \pm 0.45$ \\
Covariance matching (ours)                 & $54.97 \pm 2.60$ & $59.41 \pm 0.81$ & $68.87 \pm 0.41$ \\
\cmidrule[0.25pt]{1-4}
Real upper bound & $61.08 \pm 2.54$ & $65.38 \pm 0.51$ & $74.35 \pm 0.56$ \\
\cmidrule[0.5pt]{1-4}
\end{tabular}
}
\label{tab:dino-trunc}
\end{table}

\begin{table}[!hbt]
\caption{\emph{Covariance matching} performs on par with the best baseline across three training paradigms on CIFAR-10, when the synthetic data is generated via text-to-image (T2I) generative models and features are extracted with DINO-v2.}
\centering
\resizebox{0.7\textwidth}{!}{\begin{tabular}{lccc}
\cmidrule[0.5pt]{1-4}
Method & Scratch & Distillation & Pretrained \\
\cmidrule[0.5pt]{1-4}
No synthetic & $44.36 \pm 1.51$ & $47.33 \pm 0.57$ & $63.40 \pm 1.33$ \\
\cmidrule[0.25pt]{1-4}
Center matching~\citep{he2022synthetic}     & $51.75 \pm 2.01$ & $55.67 \pm 0.63$ & $66.00 \pm 0.58$ \\
DS3~\citep{hulkund2025datas}               & $52.33 \pm 2.07$ & $58.80 \pm 0.96$ & $66.68 \pm 0.63$ \\
K-means~\citep{lin2023explore}             & $51.14 \pm 1.90$ & $56.93 \pm 0.46$ & $65.71 \pm 0.71$ \\
Random                                     & $50.45 \pm 1.41$ & $55.86 \pm 0.73$ & $65.67 \pm 0.82$ \\
Text matching~\citep{lin2023explore}       & $51.38 \pm 1.51$ & $55.81 \pm 0.65$ & $65.76 \pm 1.00$ \\
Covariance matching (ours)                 & $52.65 \pm 1.47$ & $58.78 \pm 0.53$ & $67.04 \pm 0.83$ \\
\cmidrule[0.25pt]{1-4}
Real upper bound & $61.08 \pm 2.54$ & $65.38 \pm 0.51$ & $74.35 \pm 0.56$ \\
\cmidrule[0.5pt]{1-4}
\end{tabular}
}
\label{tab:dino-generative-pool-no-aug}
\end{table}

\paragraph{Optimizing the theoretical objective.} \label{matching-alpha}
We also implement a greedy algorithm that, at each step, adds the sample minimizing the objective in (\ref{eq:riskundermixed}) (\emph{Alpha matching}). This method requires computing the eigenvalues of the current sample covariance and is therefore more costly than \emph{Covariance matching}. As in \emph{Covariance matching}, we first fit PCA on the real samples and project all features, then iteratively add the sample that yields the smallest value of (\ref{eq:riskundermixed}). Without loss of generality, we drop the noise variance term since it scales all candidates equally. The results of Table \ref{tab:trunc-alphamatch} show that \emph{Alpha matching} performs similarly to \emph{Covariance matching}.

\begin{table}[!hbt]
\caption{\emph{Covariance matching} performs on par with \emph{Alpha matching} across the experiments on CIFAR-10.}
\centering
\resizebox{0.8\textwidth}{!}{\begin{tabular}{llccc}
\cmidrule[0.5pt]{1-5}
Experiment & Method & Scratch & Distillation & Pretrained\\
\cmidrule[0.5pt]{1-5}
\multirow{2}{*}{Zero-diversity models} 
    & Covariance matching & $58.97 \pm 1.67$ & $64.85 \pm 0.63$ & $72.38 \pm 0.66$ \\
    & Alpha matching       & $59.30 \pm 2.50$ & $64.72 \pm 0.55$ & $72.76 \pm 0.73$ \\
\cmidrule[0.25pt]{1-5}
\multirow{2}{*}{Truncated models} 
    & Covariance matching & $54.00 \pm 1.89$ & $59.77 \pm 0.61$ & $69.20 \pm 0.56$ \\
    & Alpha matching       & $52.25 \pm 2.11$ & $59.18 \pm 0.68$ & $68.32 \pm 0.58$ \\
\cmidrule[0.25pt]{1-5}
\multirow{2}{*}{T2I models}
    & Covariance matching & $54.45 \pm 2.11$ & $59.17 \pm 0.64$ & $66.69 \pm 0.70$ \\
    & Alpha matching       & $53.37 \pm 1.85$ & $59.03 \pm 0.64$ & $66.23 \pm 0.66$ \\
\cmidrule[0.5pt]{1-5}
\end{tabular}
}
\label{tab:trunc-alphamatch}
\end{table}

\paragraph{Over-parameterized setting.}
We repeat the setup of Table \ref{tab:clip-trunc} taking $n_s=200$ (instead of $n_s=800$). This gives a total of $n_s+n_t=400$ samples, which is less than the number of features $p=512$, thus placing us in an over-parameterized regime. As shown in Table \ref{tab:overparam-ablation}, the quantitative trends mirror those in the under-parameterized case.
\begin{table}[!htb]
\caption{\emph{Covariance matching} outperforms all baselines across three training paradigms on CIFAR-10, when the synthetic data is generated 
via truncated StyleGAN2-Ada models~\citep{karras2019style} in the over-parameterized regime with 200 training and 200 augmenting synthetic samples.}
\centering
\resizebox{0.7\textwidth}{!}{\begin{tabular}{lccc}
\cmidrule[0.5pt]{1-4}
Method & Scratch & Distillation & Pretrained \\
\cmidrule[0.5pt]{1-4}
No synthetic & $44.36 \pm 1.51$ & $47.33 \pm 0.57$ & $63.40 \pm 1.33$ \\
\cmidrule[0.25pt]{1-4}
Center matching~\citep{he2022synthetic}     & $46.45 \pm 1.97$ & $50.83 \pm 0.50$ & $64.40 \pm 1.11$ \\
Center sampling~\citep{lin2023explore}     & $47.29 \pm 1.33$ & $50.89 \pm 0.78$ & $65.64 \pm 0.74$ \\
DS3~\citep{hulkund2025datas}               & $48.09 \pm 2.04$ & $52.65 \pm 0.61$ & $66.41 \pm 1.35$ \\
K-means~\citep{lin2023explore}             & $47.75 \pm 0.82$ & $51.56 \pm 0.68$ & $65.47 \pm 0.99$ \\
Random                                     & $47.39 \pm 1.63$ & $50.96 \pm 0.22$ & $65.49 \pm 1.12$ \\
Text matching~\citep{lin2023explore}       & $47.56 \pm 1.09$ & $51.67 \pm 0.65$ & $65.74 \pm 0.78$ \\
Text sampling~\citep{lin2023explore}       & $46.93 \pm 1.95$ & $50.64 \pm 0.49$ & $65.13 \pm 1.13$ \\
Covariance matching (ours)                 & $48.95 \pm 1.28$ & $53.28 \pm 0.45$ & $66.62 \pm 0.57$ \\
\cmidrule[0.25pt]{1-4}
Real upper bound & $50.79 \pm 1.70$ & $54.66 \pm 0.91$ & $68.97 \pm 0.88$ \\
\cmidrule[0.5pt]{1-4}
\end{tabular}
}
\label{tab:overparam-ablation}
\end{table}

\paragraph{Distribution of selected samples.}\label{metrics-on-distribution-match}
Beyond accuracy, we assess how well each method’s selections match the test distribution. In the CIFAR-10 setup of Table \ref{tab:clip-trunc}, each method selects 800 samples per class given 200 real samples. We then calculate how well these samples match the CIFAR-10 training dataset. The selection obtained via \emph{Covariance matching} consistently achieves lower FID/KID and covariance distance than all other baselines. Metrics that couple fidelity and diversity (e.g., FID/KID) show larger gains than quality metrics (e.g., Precision~\citep{kynkaanniemi2019improved}, Density~\citep{naeem2020reliable}), indicating improved distributional alignment rather than mere sample quality. The results are reported in Table \ref{tab:gen-metrics}. 

\begin{table}[!htb]
\caption{\emph{Covariance matching} selects samples that better match the target distribution according to various evaluation metrics.}
\centering
\resizebox{0.99\textwidth}{!}{\begin{tabular}{lccccccc}
\cmidrule[0.5pt]{1-8}
Method & FID $\downarrow$ & KID $\downarrow$ & Precision $\uparrow$ & Recall $\uparrow$ & Density $\uparrow$ & Coverage $\uparrow$ & Covariance Shift $\downarrow$ \\
\cmidrule[0.5pt]{1-8}
K-means~\citep{lin2023explore}           & $366.52 \pm 2.62$ & $0.59 \pm 0.04$ & $0.77 \pm 0.01$ & $0.41 \pm 0.00$ & $0.87 \pm 0.04$ & $0.58 \pm 0.01$ & $118.91 \pm 0.62$ \\
Center matching~\citep{he2022synthetic}  & $544.56 \pm 5.57$ & $0.83 \pm 0.06$ & $0.78 \pm 0.01$ & $0.33 \pm 0.01$ & $0.82 \pm 0.03$ & $0.49 \pm 0.01$ & $212.55 \pm 3.03$ \\
Center sampling~\citep{lin2023explore}   & $450.27 \pm 3.86$ & $0.61 \pm 0.04$ & $0.77 \pm 0.01$ & $0.44 \pm 0.01$ & $0.86 \pm 0.03$ & $0.53 \pm 0.01$ & $150.49 \pm 0.79$ \\
DS3~\citep{hulkund2025datas}             & $273.59 \pm 6.72$ & $0.42 \pm 0.04$ & $0.79 \pm 0.01$ & $0.45 \pm 0.01$ & $0.84 \pm 0.03$ & $0.64 \pm 0.01$ & $106.52 \pm 2.44$ \\
Random                                   & $458.39 \pm 4.16$ & $0.63 \pm 0.04$ & $0.77 \pm 0.02$ & $0.44 \pm 0.01$ & $0.86 \pm 0.05$ & $0.53 \pm 0.01$ & $150.66 \pm 1.08$ \\
Text matching~\citep{lin2023explore}     & $454.23 \pm 2.66$ & $0.69 \pm 0.05$ & $0.81 \pm 0.01$ & $0.36 \pm 0.00$ & $0.90 \pm 0.03$ & $0.54 \pm 0.01$ & $172.70 \pm 0.66$ \\
Text sampling~\citep{lin2023explore}     & $447.53 \pm 3.99$ & $0.61 \pm 0.04$ & $0.77 \pm 0.01$ & $0.44 \pm 0.01$ & $0.86 \pm 0.03$ & $0.53 \pm 0.01$ & $149.98 \pm 0.95$ \\
Covariance matching (ours)               & $242.09 \pm 1.93$ & $0.41 \pm 0.04$ & $0.78 \pm 0.01$ & $0.50 \pm 0.01$ & $0.84 \pm 0.03$ & $0.68 \pm 0.01$ & $95.55 \pm 0.58$ \\
\cmidrule[0.5pt]{1-8}
\end{tabular}
}
\label{tab:gen-metrics}
\end{table}

\end{document}